\Crefname{equation}{Eq.}{Eqs.}
\crefname{figure}{Fig.}{Figs.}
\Crefname{figure}{Figure}{Figs.}
\Crefname{tabular}{Table}{Tabs.}
\Crefname{table}{Table}{Tabs.}
\crefname{section}{Sec.}{Secs.}
\Crefname{section}{Section}{Secs.}
\def\E{{\mathbb E}}
\def\P{{\mathbb P}}
\def\R{{\mathbb R}}
\def\cX{{\mathcal X}}
\def\cY{{\mathcal Y}}
\def\cI{{\mathcal I}}
\def\cJ{{\mathcal J}}
\def\cT{{\mathcal T}}
\def\cH{{\mathcal H}}
\def\cC{{\mathcal C}}
\def\cD{{\mathcal D}}
\newcommand\numberthis{\addtocounter{equation}{1}\tag{\theequation}} % For numbering certain equations in align environment
\newcommand{\paren}[1]{\left( #1 \right)} % Parentheses
\newcommand{\Quantile}{\mathrm{Quantile}}
\newtheorem{lemma}{Lemma}[section]
\newtheorem{proposition}{Proposition}
\theoremstyle{definition}
\newtheorem{remark}{Remark}
\newtheorem*{propositionthree}{Proposition 3}
\newcommand{\qhat}{\hat{q}}
\newcommand{\qq}{\mathbf{q}}
\newcommand{\fuzzy}{\textsc{Fuzzy}}
\newcommand{\rawfuzzy}{\textsc{RawFuzzy}}
\newcommand{\standard}{\textsc{Standard}}
\newcommand{\classwise}{\textsc{Classwise}}
\newcommand{\standardshort}{\textsc{Stand.}}
\newcommand{\classwiseshort}{\textsc{C.wise}}
\newcommand{\interp}{\textsc{Interp-Q}}
\newcommand{\CW}{\textsc{CW}}
\newcommand{\IQ}{\textsc{IQ}}
\def\softmax{\mathsf{softmax}}
\def\PAS{\mathsf{PAS}}
\def\WPAS{\mathsf{WPAS}}
\newcommand{\tinyplantnet}{Pl@ntNet-300K\xspace}
\newcommand{\ie}{i.e.,~}
\newcommand{\eg}{e.g.,~}
\title{Conformal Prediction for Long-Tailed Classification}
\author{Tiffany Ding$^1$ \quad Jean-Baptiste Fermanian$^2$ \quad Joseph Salmon$^2$\\
$^1$University of California, Berkeley, $^2$University of Montpellier, Inria, CNRS \\
\texttt{tiffany\_ding@berkeley.edu} \\ 
\texttt{\{jean-baptiste.fermanian, joseph.salmon\}@inria.fr}\\
}
\begin{document}

\maketitle

\begin{abstract}
Many real-world classification problems, such as plant identification, have extremely long-tailed class distributions. In order for prediction sets to be useful in such settings, they should \emph{(i) provide good class-conditional coverage}, ensuring that rare classes are not systematically omitted from the prediction sets, and \emph{(ii) be a reasonable size}, allowing users to easily verify candidate labels. Unfortunately, existing conformal prediction methods, when applied to the long-tailed setting, force practitioners to make a binary choice between small sets with poor class-conditional coverage or sets that have very good class-conditional coverage but are extremely large. We propose methods with marginal coverage guarantees that smoothly trade off set size and class-conditional coverage. First, we introduce a new conformal score function called prevalence-adjusted softmax that optimizes for macro-coverage, defined as the average class-conditional coverage across classes. Second, we propose a new procedure that interpolates between marginal and class-conditional conformal prediction by linearly interpolating their conformal score thresholds. We demonstrate our methods on Pl@ntNet-300K and iNaturalist-2018, two long-tailed image datasets with 1,081 and 8,142 classes, respectively. 
\end{abstract}

\section{Introduction}

Prediction sets are useful because they replace a single fallible point prediction with a set that is likely to contain the true label. In classification, prediction sets are most useful in settings \emph{with many classes}, as they narrow down the label space to a set of labels that human decision makers can then verify. Consider an amateur plant enthusiast who wants to identify a plant. The enthusiast struggles to identify plants on their own, but when presented with a short list of possible matches, it is easy for them to go through the list and select the correct species (\eg by comparing their plant with images of potential matches). 

Another key characteristic of plant identification is its \emph{extremely long-tailed class distribution}. As shown in \Cref{fig:class_distributions_plantnet}, there are thousands of images of common plants but only a handful for rare plants.
Such skewed distributions also appear in animal identification and disease diagnosis.
An added challenge is that we often care even more about identifying instances of the rare classes than the popular ones. 
In botany, scientists may want to prioritize the acquisition of examples of endangered plant species, which fall in the tail (\Cref{fig:class_distributions_plantnet}).
% In ecology, failing to flag a single observation of an atypical insect can lead to an outbreak if it turns out to be invasive, whereas missing an identification of a fruit fly has less dire consequences. 
In medicine, catching the few cases of an aggressive cancer early matters more than perfectly classifying common benign lesions. In collaborative human-AI systems where a human generates labels based on AI recommendations and these labels are used to improve the predictive model in future training rounds, neglecting niche classes can accelerate ``model collapse'' \citep{shumailov2024ai}, shrinking the model's effective label space over time and degrading accuracy. This motivates pursuing prediction sets where all classes have a high probability of being correctly included in the prediction set. 
Beyond training good predictive models in this setting, an additional challenge for post-hoc uncertainty quantification is that most available examples of rare classes are used for model training, leaving these classes with few or zero holdout examples to use for calibrating uncertainty quantification methods.

\begin{figure}[t!]
    \centering
    \includegraphics[width=0.97\linewidth]{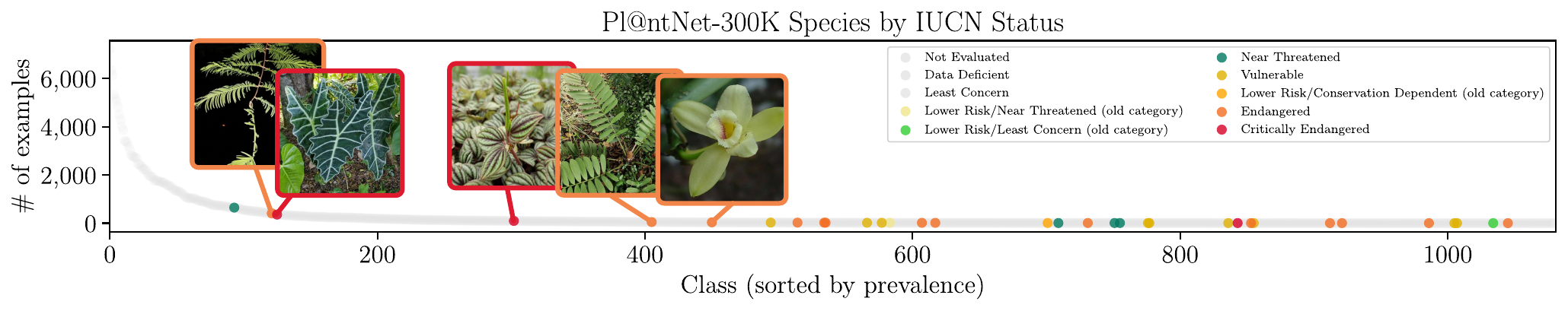}
    \vspace{-0.2cm}
    \caption{The number of \texttt{train} examples of each species in \tinyplantnet \citep{garcin2021pl}. We highlight threatened species, as defined by the International Union for Conservation of Nature (\url{https://iucn.org}),
    which are particularly important to identify for biodiversity monitoring purposes. Note that most of these species are in the tail of the distribution.}
    % \caption{\tinyplantnet (\texttt{train}) species distributions. While most users contribute to labeling images of popular classes, ecologists must prioritize the long tail, especially for biodiversity monitoring. We highlight the threatened species (according to \url{https://iucn.org/}) among the 1081 classes, primarily located in the tail.}
    \vspace{-0.3cm}
    \label{fig:class_distributions_plantnet}
\end{figure}

Formally, let $X \in \mathcal{X}$ be features with unknown label $Y \in \cY = \{1,2,\dots, |\cY|\} $. Our goal is to construct a set-generating procedure $\cC : \cX \to 2^\cY$, where $2^\cY$ denotes the set of all subsets of $\cY$, with good class-conditional coverage.
For $y \in \cY$, the \emph{class-conditional coverage of $\cC$ for class $y$} is 
$\text{CondCov}(\cC,y)=\P(Y \in \cC(X) \mid Y = y)$.
This can be contrasted with \emph{marginal coverage}, which is simply $\text{MarginalCov}(\cC)=\P(Y \in \cC(X))$.
To ensure that our prediction sets are useful for identifying instances of all classes, including rare ones, we aim for high class-conditional coverage across all classes. In addition to coverage, set size is also crucial, as large sets are often impractical. For instance, in plant identification, users lack the time to review prediction sets containing hundreds of species. Ideally, we would generate prediction sets with both high class-conditional coverage and small size. Unfortunately, there is an inherent trade-off between small set sizes and class-conditional coverage in the long-tailed setting. 

Conformal prediction (CP) provides methods guaranteed to achieve marginal or class-conditional coverage under no distributional assumptions.
$\standard$ CP, the most basic conformal prediction method, yields small sets but only guarantees marginal coverage and often has poor class-conditional coverage for some classes.
However, approaches targeting class-conditional coverage struggle in scenarios where many classes have few examples.
% (e.g., $<10$).
In such settings, $\classwise$ CP (an instantiation of Mondrian conformal; see \citealp{vovk2005algorithmic}) and rank-calibrated class-conditional CP \citep{shi2024conformal} produce large sets, and Clustered CP \citep{ding2023class} effectively defaults to $\standard$ CP for rare classes that it is unable to assign to a cluster.

\textbf{Objective.} Our goal is to produce prediction sets that maintain marginal coverage while striking a more useful trade-off between set size and class conditional coverage compared to $\standard$ or $\classwise$ CP. We approach this in two ways.

% {
% \textbf{Objective.} To summarize, our goal is to create \textit{marginally valid} prediction sets, satisfying:
% \begin{equation}\label{eq:marg_cov}
%     \text{MarginalCov}(\cC)=\P(Y \in \cC(X)) \geq 1 - \alpha\,,
% \end{equation}
% with a better trade-off set size/conditional coverage, since $\standard$ CP yields poor conditional coverage and $\classwise$ CP produces excessively large sets. We approach this in two ways.
% }

    \textsc{Approach I}: \emph{Target a relaxed notion of class-conditional coverage.}
    Motivated by the multi-class classification concept of ``macro-accuracy,'' which is the average of class-wise accuracies \citep{Lewis91}, we aim to construct sets with high \textbf{\emph{macro-coverage}}, which is the average of class-conditional coverages:
    \begin{align}\label{eq:def_macrocov}
        \text{MacroCov}(\cC) = \frac{1}{|\cY|}\sum_{y \in \cY} \P(Y \in \cC(X) \mid Y = y) = \frac{1}{|\cY|}\sum_{y \in \cY}\text{CondCov}(\cC,y).
    \end{align}
In contrast, marginal coverage is a weighted average of class-conditional coverages where the weight of class $y$ is its prevalence $p(y)$:
\begin{align}\label{eq:MarginalCov}
\text{MarginalCov}(\cC) = \sum_{y \in \mathcal{Y}} p(y) \cdot \text{CondCov}(\cC,y).
\end{align}
    This (over)emphasizes coverage of more frequent classes.
    We derive the form of the prediction set that optimally trades off set size and macro-coverage under oracle knowledge of the underlying distribution and
    design a conformal score function, called \textbf{prevalence-adjusted softmax ($\PAS$)}, which approximates these oracle optimal sets given an imperfect classifier $\hat{p}(y|x)$ and estimated label distribution $\hat p(y)$. 

    \textsc{Approach II}: \emph{Target class-conditional coverage, then back off (until the set size is reasonable)}.
    We propose a simple procedure called \textbf{$\interp$} that interpolates between $\classwise$  CP and $\standard$ CP in a literal sense by linearly interpolating their quantile thresholds. 
    This method allows the user to choose their position on the trade-off curve between set size and class-conditional coverage via the interpolation parameter.

The choice between these two approaches depends on the setting. 
Targeting macro-coverage (\textsc{Approach I}) implies that we care equally about the coverage of all classes, but it is acceptable if a few classes have poor coverage, so long as the average class-conditional coverage is high. On the other hand, starting from $\classwise$ and softening (\textsc{Approach II}) implies that we want \emph{all} classes to have good coverage. This approach also comes with a parameter that the user can vary depending on their preference between small sets and class-conditional coverage. 

\subsection{Related work}

\paragraph{Class-conditional conformal prediction.} 
Conformal prediction provides a way to construct prediction sets with coverage guarantees given a calibration dataset that is exchangeable with the test point \citep{vovk2005algorithmic, angelopoulos2023gentle}. 
We are specifically interested in class-conditional coverage, which is difficult to achieve in a useful way when many classes have very few calibration examples, as is the case in long-tailed settings.
Previous works on class-conditional conformal focus on easy settings with at most ten classes \citep{,shi2013applications,lofstrom2015bias,hechtlinger2018cautious,sadinle2019least}, or the harder setting of many classes with some class imbalance but still lacking a truly long tail \citep{ding2023class,shi2024conformal}.

\paragraph{Optimally trading off set size and coverage.} Prediction sets should achieve the desired coverage while being as small as possible, so as to be maximally informative. In general, the set-generating procedures that optimally navigate the coverage--size trade-off depend on the density of the underlying distribution \citep{lei2013distribution,lei2014distribution,vovk2016criteria,sadinle2019least}. 
Although the true density is unknown in practice, these theoretically optimal sets serve as guidelines for designing effective conformal score functions or new conformal procedures for various target quantities, such as $X$-conditional coverage (\eg APS from \citealp{romano2020classification}, RAPS from \citealp{angelopoulos2021uncertainty}, SAPS from \citealp{huang2023conformal}, CQC from \citealp{cauchois2021knowing}) or size \citep{denis2017confidence,kiyani2024length}. 
In \textsc{Approach I}, we construct a score function inspired by the oracle sets that optimally trade-off set size and \emph{macro-coverage}, a coverage target that has not been previously explored in this context.

\paragraph{Learning from long-tailed data.} 
Many real-world classification problems exhibit long-tailed distributions, a challenge particularly pronounced in \emph{fine-grained visual categorization}, a field that has recently attracted substantial interest, especially in biodiversity \citep{van2015building,inat2017,garcin2021pl,wang2022guest}. The goal is to classify images into highly specific subcategories (\eg plant species), which often differ only subtly. While extensive research has focused on improving class-conditional top-$1$ or top-$k$ accuracy for such tasks \citep{imagenet,lapin2015,liu2019large,garcin2021pl,zhang2023deep}, less attention has been devoted to constructing high-quality prediction sets in long-tailed settings, beyond the naive approach of selecting the top-$k$ predictions.
While methods like logit adjustment for long-tail learning \citep{menon2021longtail} and focal loss for dense object detection \citep{lin2017focal} address class imbalance through reweighting or output adjustment to improve classification performance, designing robust prediction sets under long-tailed distributions remains understudied. Addressing this gap is critical for systems like Pl@ntNet \citep{plantnet}, a plant identification app where users upload images and receive candidate species matches.

\subsection{Preliminaries} \label{sec:preliminaries}

\paragraph{Notation.} For a positive integer $n$, let $[n]:= \{1,\ldots, n\}$.
Let $\cD_{\mathrm{cal}} = \left\{(X_i,Y_i) \text{ for } i\in [n]\right\}$ be a calibration set, where  $(X_i, Y_i)$ for $i=1,\dots,n$ are exchangeable with the test point $(X_{n+1}, Y_{n+1})$, where the label $Y_{n+1}$ is unknown.
For a class $y \in \cY$, we use $\cI_y = \{i \in [n] : Y_i = y\}$ to denote the set of calibration points with label~$y$ and $n_y = \left|\cI_y \right|$ the number of calibration points with label $y$. Let $\alpha \in [0,1]$ be a user-specified probability of miscoverage. 
We use $s: \cX \times \cY \to \mathbb{R}$ to denote a conformal score function, where smaller values of $s(x,y)$ indicate that the pair $(x,y)$ conforms better with previously seen data. 
% Let $\hat p(y|x)$ be an estimate of the probability $p(y|x)$ that the label is $y$ given features $x$. 

\paragraph{Score function.} We focus on the split conformal setting in which the score function $s$ is constructed using a predictive model trained on data separate from the calibration set $\cD_{\mathrm{cal}}$.
We use the softmax conformal score function, defined as $s_{\softmax}(x,y) = 1 - \hat p(y | x)$, where $\hat p(y | x)$ is the predicted probability for class $ y $ for input $x$ obtained from the softmax output of a trained neural network. This is also known as the Least Ambiguous Classifier (LAC) score \citep{sadinle2019least}. Our methods can be readily adapted to other scoring functions but we focus on this score because alternatives like APS \citep{romano2020classification} and its variants produce larger sets and are primarily designed for $X$-conditional coverage, which is outside our scope.

\paragraph{Conformal prediction as thresholded sets.}
Let $\qq = (q_1,q_2, \dots, q_{|\cY|})$ be a $|\cY|$-dimensional vector of score thresholds. Define the \emph{$\qq$-thresholded set} as 
\begin{align}\label{eq:standard_CP}
    \cC(X; \qq) = \{y \in \cY : s(X,y) \leq q_y\}.
\end{align}
Conformal prediction provides principled ways to set $\qq$ as a function of the calibration data $\cD_{\mathrm{cal}}$ and the chosen miscoverage level $\alpha$ so as to achieve coverage guarantees. 

% \begin{table}[t]
% \centering
% \caption{Summary of the different methods}
% \label{tab:summary}
% \begin{tabular}{lcccccc}
% \toprule
% \textbf{Method} & $\standard$ & $\classwise$ & $\interp$ & Raw $\fuzzy$ & $\fuzzy$ & $\PAS$ \\
% \midrule
% Score &
% $s$&$s$&$s$&$s$
% % \multicolumn{4}{c}{Any score function $s$}
% & $\tilde{s}$ \eqref{eq:def_tildescore}  & $s_\PAS$ \eqref{eq:PAS} \\
% \multirow{2}{*}{Quantile} & \multirow{2}{*}{$\qq_{\standardshort}$ \eqref{eq:qhat_standard}} & \multirow{2}{*}{$\qq_{\classwiseshort}$ \eqref{eq:qhat_classwise}} &
% \multirow{2}{*}{
% $\qq^{\IQ}$ \eqref{al:qhat_IQ}
% }
% % $(1-\tau)\qq_{\standardshort}$
% & $\qq^{\omega_{\fuzzy}}$ \eqref{eq:label_weighted_qhat} &  $\qq_{\standardshort}$  &\multirow{2}{*}{ $\qq_{\standardshort}$} \\
% &  &  &
% % $+\tau \qq_{\classwiseshort}$ \eqref{al:qhat_IQ}
% & and \eqref{al:def_wfuzzy} &on $\cD_{\mathrm{hold}}$ &  \\
% Coverage & \ding{51}& \ding{51} & \ding{55} & \ding{55} & \ding{51} & \ding{51} \\
% \bottomrule
% \end{tabular}
% \end{table}

% \begin{wrapfigure}{r}{0.5\textwidth}
% \begin{minipage}{0.54\textwidth}
\begin{algorithm}[t!]
\caption{Conformal prediction}
\label{alg:proto}
\begin{algorithmic}
\REQUIRE Score function $s: \cX \times \cY \to \R$, miscoverage level $\alpha \in [0,1]$, calibration set $\cD_{\mathrm{cal}} = \{(X_i, Y_i)\}_{i=1}^n$, test point $X_{n+1}$, threshold function $\hat\qq: \R^n \times [0,1] \to \R^{|\cY|}$
\STATE Compute scores: $S_i \gets s(X_i, Y_i)$ for $i = 1, \ldots, n$
\STATE Compute thresholds: $\mathbf{q} \gets \hat \qq((S_i)_{i=1}^n, \alpha)$
\RETURN Prediction set $\cC(X_{n+1}) = \{ y : s(X_{n+1}, y) \leq q_y \}$, where $q_y$ is the $y$-th entry of $\qq$
\end{algorithmic}
\end{algorithm}

% \end{minipage}
% \end{wrapfigure}

\emph{$\standard$ conformal prediction} constructs sets as $\cC_{\standardshort}(X) := \cC(X; \hat \qq_{\standardshort})$ for $\hat \qq_{\standardshort} = (\qhat, \dots, \qhat)$  where 
\begin{align}\label{eq:qhat_standard}
    \qhat = \Quantile_{1-\alpha}\Big(\frac{1}{n+1}\sum_{i=1}^n \delta_{s(X_i,Y_i)} + \frac{1}{n+1} \delta_{\infty}\Big),
\end{align}
and where $\Quantile_\gamma(P) = \inf\{x\in\mathbb{R}: \mathbb{P}(V \leq x) \geq \gamma\}$ denotes the
level-$\gamma$ quantile of a random variable $V \sim P$ and $\delta_s$ is the Dirac measure at point $s$ (this follows the notation of \citealp{Tibshirani_Foygel_Candes_Ramda19}).
By setting the thresholds in this way, $\standard$ conformal prediction sets achieve a \emph{marginal coverage} guarantee \citep{vovk2005algorithmic}: 
\begin{align}
    \P(Y \in \cC(X; \hat\qq_{\standardshort})) \geq 1-\alpha.
\end{align}

\emph{$\classwise$ conformal prediction} constructs sets as $\cC_{\classwise}(X) := \cC(X; \hat \qq_{\classwise})$ where the $y$-th entry of $\hat \qq_{\classwise}$ is
% for $\qq_{\classwise} = (\qhat_1^{\CW}, \qhat_2^{\CW}, \dots, \qhat_{|\cY|}^{\CW})$ 
% where, for each class $y \in \cY$,
% \vspace{-0.1cm}
\begin{align} \label{eq:qhat_classwise} % \label{eq:hat_weighted}
    \qhat_y^{\CW} = \Quantile_{1-\alpha}\Big(\frac{1}{n_y+1}\sum_{i \in \cI_y} \delta_{s(X_i,Y_i)} + \frac{1}{n_y+1} \delta_{\infty}\Big).
\end{align}
\vspace{-0.3cm}

$\classwise$ conformal prediction sets achieve a \emph{class-conditional coverage} guarantee \citep{vovk2005algorithmic}:
\vspace{-0.3cm}
\begin{align}
\P(Y \in \cC(X; \hat \qq_{\classwise}) \mid Y= y) \geq 1-\alpha, \qquad \text{for all } y \in \cY.
\end{align}
By marginalizing over $y$, this implies that $\cC_{\classwise}(X)$ also achieves $1-\alpha$ marginal coverage.

We explicitly describe the meta-algorithm for conformal prediction in \Cref{alg:proto}. The existing methods, $\standard$ and $\classwise$, and the methods we will propose in the next section are instantiations of this meta-algorithm for the score functions and threshold functions described in \Cref{tab:summary_transpose}.

\begin{wraptable}[]{r}{0.5\textwidth}
\vspace{-0.2cm}
\setlength{\tabcolsep}{4pt} % default is 6pt
\centering
\footnotesize
\begin{threeparttable}
\caption{Summary of the conformal methods considered. MargCov refers to the marginal coverage guarantee of the method.}
\label{tab:summary_transpose}
\begin{tabular}{lccc}
\toprule
 & \textbf{\makecell{Score \\ function}}  & \textbf{\makecell{Threshold \\ function}} & \textbf{MargCov} \\
\midrule
$\standard$ & any & $\hat\qq_{\standardshort}$ \eqref{eq:qhat_standard} & $1-\alpha$ \\
$\classwise$ & any & $\hat\qq_{\classwiseshort}$ \eqref{eq:qhat_classwise} & $1-\alpha$ \\
$\PAS^*$ & $s_\PAS$ \eqref{eq:PAS} & $\hat\qq_{\standardshort}$ \eqref{eq:qhat_standard} & $1-\alpha$ \\
$\WPAS^*$ & $s_\WPAS$ \eqref{eq:wpas} & $\hat\qq_{\standardshort}$ \eqref{eq:qhat_standard} & $1-\alpha$ \\
$\interp^*$ & any & $\hat\qq^{\IQ}$ \eqref{al:qhat_IQ} & $1-2\alpha$ \\
\bottomrule
\end{tabular}
\end{threeparttable}
\begin{tablenotes}
  \footnotesize % make font small
  \item[] $^*$Our methods
\end{tablenotes}
\end{wraptable}

% \begin{wraptable}[13]{r}{0.59\textwidth}
% \centering
% \footnotesize
% \begin{threeparttable}
% \caption{Summary of the conformal methods considered}
% \label{tab:summary_transpose}
% \begin{tabular}{lccc}
% \toprule
%  & \textbf{Score} & \textbf{Quantile} & \textbf{Coverage} \\
% \midrule
% $\standard$ & $s$ & $\qq_{\standardshort}$ \eqref{eq:qhat_standard} & \ding{51} \\
% $\classwise$ & $s$ & $\qq_{\classwiseshort}$ \eqref{eq:qhat_classwise} & \ding{51} \\
% $\interp$ & $s$ & $\qq^{\IQ}$ \eqref{al:qhat_IQ} & \ding{55} \\
% Raw $\fuzzy$ & $s$ & $\qq^{\omega_{\fuzzy}}$ \eqref{eq:label_weighted_qhat} and \eqref{al:def_wfuzzy} & \ding{55} \\
% $\fuzzy$ & $\tilde{s}$ \eqref{eq:def_tildescore} & $\qq_{\standardshort}$ on $\cD_{\mathrm{hold}}$ & \ding{51} \\
% $\PAS$ & $s_\PAS$ \eqref{eq:PAS} & $\qq_{\standardshort}$ & \ding{51} \\
% \bottomrule
% \end{tabular}
% \end{threeparttable}
% \end{wraptable}

% \input{sections/2_tradeoff}

\section{Methods}
\label{sec:methods}

We take two approaches to the problem of simultaneously achieving reasonable set sizes and reasonable class-conditional coverage in the long-tailed classification setting, each leading to a method. \textsc{Approach I} targets the weaker objective of macro-coverage. From this, we derive an oracle set that optimally balances set size and macro-coverage. We then define a conformal score function ($\PAS$) by replacing the true conditional density with its estimate. We also consider an extension ($\WPAS$) that prioritizes coverage of user-specified classes.
\textsc{Approach II} addresses the trade-off by interpolating between class-conditional and marginal score quantiles, leading to a simple new procedure ($\interp$). 
We defer the formal statements of propositions and their proofs to Appendix~\ref{sec:theory_APPENDIX}.

\subsection{Approach I: Targeting (weighted) macro-coverage via a new score} \label{sec:PAS}

We consider the population optimization problem of minimizing the expected set size subject to a macro-coverage constraint,
%Inspired by \cite{sadinle2019least}, we consider the population optimization problem
\vspace{-0.1cm}
\begin{align} \label{eq:min_st_macrocoverage}
    \min_{\cC : \mathcal{X} \mapsto 2^\mathcal{Y}} \E[|\cC(X)|] \quad \text{subject to } \text{MacroCov}(\cC) 
    % \frac{1}{|\cY|} \sum_{y \in \mathcal{Y}} \P\paren{ y \in \cC(X) \mid Y=y } 
    \geq \beta,
\end{align}
and its dual version maximizing macro-coverage subject to an expected set size constraint,
% \vspace{-0.3cm}
\begin{align} \label{eq:max_macrocoverage}
    \max_{\cC : \mathcal{X} \mapsto 2^\mathcal{Y}} 
    \text{MacroCov}(\cC) 
    % \frac{1}{|\cY|} \sum_{y \in \mathcal{Y}} \P\paren{ y \in \cC(X) \mid Y=y }
    \quad \text{subject to }  \E[|\cC(X)|] \leq \kappa
\end{align}
where $\text{MacroCov}(\cC)$ is defined in \eqref{eq:def_macrocov} and $\beta \geq 0$ and $\kappa \geq 0$.
% for some $\alpha \in [0,1]$ and $\kappa \geq 0$.
\begin{proposition} [Informal]\label{prop:macro_informal}
    The solutions of \eqref{eq:min_st_macrocoverage} and \eqref{eq:max_macrocoverage} are of the form 
    \begin{align}
    \cC^*(x) = \left\{ y \in \cY : p(y|x)/p(y) \geq t\right\},
    \end{align}
    for some threshold $t$ that depends on $\beta$ or $\kappa$, respectively.
\end{proposition}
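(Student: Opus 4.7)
The plan is to derive both oracle sets via a Lagrangian / Neyman--Pearson-type argument, exploiting the fact that macro-coverage can be rewritten as a single expectation over $X$ using Bayes' rule.

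First I would rewrite the class-conditional coverage for class $y$ as an expectation over $X$. Using $p(x\mid y) = p(y\mid x)\,p(x)/p(y)$, we obtain
\begin{align*}
\P(y \in \cC(X) \mid Y = y)
= \int \I{y \in \cC(x)}\, p(x\mid y)\, dx
= \E\!\left[\I{y \in \cC(X)}\, \frac{p(y\mid X)}{p(y)}\right].
\end{align*}
Summing over $y$ and swapping sum and expectation gives
\begin{align*}
\text{MacroCov}(\cC)
= \frac{1}{|\cY|}\, \E\!\left[\sum_{y \in \cY} \I{y \in \cC(X)}\, \frac{p(y\mid X)}{p(y)}\right],
\qquad
\E[|\cC(X)|] = \E\!\left[\sum_{y \in \cY} \I{y \in \cC(X)}\right].
\end{align*}
Thus both objective and constraint are expectations of sums of indicators weighted by either $p(y\mid x)/p(y)$ or $1$.

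Next I would form the Lagrangian for \eqref{eq:max_macrocoverage}, namely
\begin{align*}
L(\cC,\lambda) = \E\!\left[\sum_{y \in \cY} \I{y \in \cC(X)}\!\left(\frac{p(y\mid X)}{|\cY|\,p(y)} - \lambda\right)\right] + \lambda\kappa,
\end{align*}
and observe that since $\cC$ is an arbitrary set-valued map, maximization over $\cC$ decomposes pointwise in $x$ and termwise in $y$: the optimal $\cC^*(x)$ includes $y$ exactly when the bracketed coefficient is nonnegative, i.e.\ when $p(y\mid x)/p(y) \geq t$ with $t := |\cY|\,\lambda$. This yields the announced form. For the dual problem \eqref{eq:min_st_macrocoverage} the Lagrangian has the roles of objective and constraint swapped, producing the same threshold rule with a possibly different value of $t$ tied to $\alpha$ rather than $\kappa$. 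Finally, I would invoke weak duality to confirm that any feasible set of this thresholded form is optimal: given $\cC^*$ with threshold $t$ satisfying the constraint with equality, any competitor $\cC'$ that meets the constraint can be compared by integrating the pointwise inequality $\sum_y(\I{y\in\cC^*(x)} - \I{y\in\cC'(x)})(p(y\mid x)/p(y) - t/|\cY|) \geq 0$, which holds by construction of $\cC^*$.

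The main obstacle is the usual boundary/ties issue: because we are optimizing over deterministic sets, the coverage-as-a-function-of-$t$ may have jumps and fail to hit the target level $1-\alpha$ (or the size $\kappa$) exactly. I would handle this exactly as in \citet{sadinle2019least}: either assume the distribution of $p(Y\mid X)/p(Y)$ has no atoms, or allow a randomized tie-breaking set $\cC^*(x) = \{y : p(y\mid x)/p(y) > t\} \cup \{y : p(y\mid x)/p(y) = t \text{ with prob. } \tau\}$ for a suitable $\tau \in [0,1]$, which restores exact feasibility and preserves optimality. The rest is bookkeeping: identifying the correct $t$ (and $\tau$) in terms of $\alpha$ for \eqref{eq:min_st_macrocoverage} and in terms of $\kappa$ for \eqref{eq:max_macrocoverage}, which the formal statement in the appendix presumably spells out.
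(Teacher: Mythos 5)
Your proposal is correct and takes essentially the same route as the paper: the Bayes-rule rewriting of class-conditional coverage as an expectation over $X$, the thresholding of the likelihood ratio $p(y\mid x)/p(y)$, and the integrated pointwise inequality used to certify optimality are exactly the content of the Neyman--Pearson argument the paper applies (Lemma~\ref{lem:neyman_pearson} in Appendix~\ref{sec:theory_APPENDIX}), and your randomized tie-breaking matches the paper's Remark~1. The only cosmetic difference is that you package the verification as a Lagrangian/weak-duality argument treating both problems symmetrically, whereas the paper invokes the Neyman--Pearson lemma as a black box and handles the size-constrained problem by passing to complements.
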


The key takeaway from this proposition is that thresholding on $p(y|x)/p(y)$ optimally balances macro-coverage and expected set size. Specifically, among set-generating procedures with a given expected set size, none achieve better macro-coverage than thresholding on $p(y|x)/p(y)$. Similarly, for a fixed macro-coverage, no other procedure yields a smaller expected set size.  
We contrast this with the solution to the more classical problem of minimizing expected set size subject to marginal or class-conditional coverage, which is given by thresholding on $p(y|x)$ \citep{neyman1933ix,sadinle2019least}.

Although we do not have access to $p(y|x)$ and $p(y)$ in practice, we have estimates $\hat{p}(y|x)$ and $\hat{p}(y)$ from our classifier and the distribution of empirical training labels, respectively. By creating prediction sets as $\smash{\widehat{\cC}(x) = \{y \in \cY : \hat{p}(y|x)/\hat{p}(y) \geq t\}}$ for a threshold $t$, we approximate an oracle Pareto-optimal set. 
We choose $t$ to achieve $1-\alpha$ marginal coverage in the following way: 
Observe that $\smash{\widehat{\cC}}$ can be rewritten as $\smash{\widehat{\cC}(x) = \{y \in \cY : s_{\PAS}(x,y) \leq -t\}}$, where
\begin{align}\label{eq:PAS}
    s_{\PAS}(x,y) = - \hat{p}(y|x) /\hat{p}(y)
\end{align}
and $\PAS$ stands for \emph{prevalence-adjusted softmax}.
By setting $-t$ as the $\standard$ CP $\qhat$ from \eqref{eq:qhat_standard} using $s_{\PAS}$ as the score, $\smash{\widehat{\cC}}$ inherits the marginal coverage guarantee of $\standard$ CP.
In summary, the first method we propose is simply running $\standard$ CP with the $\PAS$ score function (which we will refer to as $\standard$ with $\PAS$), as this achieves the desired marginal coverage guarantee while (approximately) optimally trading off set size and macro-coverage. We emphasize that $\PAS$ aims to better handle this trade-off but does not directly target a macro-coverage guarantee.

% -----------------------------------------------
%           Targeting Weighted Macro-coverage
% -----------------------------------------------
% \subsection{Targeting Weighted Macro-coverage}
% \label{sec:weighted_macrocoverage_APPENDIX}

\paragraph{Extension to weighted macro-coverage.}
Recall that macro-coverage is the unweighted average of the class-conditional coverages, and
the $\PAS$ score function is designed to optimize macro-coverage among all set-generating procedures with a certain expected set size. However, in some settings, we may instead wish to optimize for a \emph{weighted} average of the class-conditional coverages (\eg because it is more important to cover some classes than others).
Given user-chosen class weights $\omega(y)$ for $y \in \cY$ that sum to one, we can similarly define the \emph{$\omega$-weighted macro-coverage} as
\begin{align}\label{eq:def_weightmacrocov}
    \mathrm{MacroCov}_{\omega}(\cC) = 
    \sum_{y \in \cY} \omega(y) \P(Y \in \cC(X) \mid Y = y).
\end{align}
For $\omega(y) = |\cY|^{-1}$ we recover $\mathrm{MacroCov}$ and for $\omega(y) = p(y)$ we get $\mathrm{MarginalCov}$.
\begin{proposition} [Informal]\label{prop:macro_weighted_informal}
   The solutions of \eqref{eq:min_st_macrocoverage} and \eqref{eq:max_macrocoverage} when $\mathrm{MacroCov}$ is replaced with $\mathrm{MacroCov}_{\omega}$ are of the form 
  \begin{align}
    \cC^*(x) = \left\{ y \in \cY : \omega(y) \cdot p(y|x)/p(y) \geq t\right\},
    \end{align}
    for some threshold $t$ that depends on $\omega$ and $\beta$ or $\kappa$, respectively.
\end{proposition}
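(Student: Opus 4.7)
The plan is to mimic the proof of Proposition~\ref{prop:macro_informal} (unweighted case), adapting the Lagrangian/Neyman--Pearson-style argument to accommodate arbitrary weights $\omega(y)$.

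First I would rewrite both the objective and constraint as sums of single-class indicators. Using $|\cC(X)| = \sum_{y \in \cY} \I{y \in \cC(X)}$,
\begin{equation*}
\E[|\cC(X)|] = \sum_{y \in \cY} \int \I{y \in \cC(x)}\, p(x)\, dx,
\end{equation*}
and, by conditioning on $Y=y$,
\begin{equation*}
\mathrm{MacroCov}_{\omega}(\cC) = \sum_{y \in \cY} \omega(y) \int \I{y \in \cC(x)}\, p(x \mid y)\, dx = \sum_{y \in \cY} \int \I{y \in \cC(x)}\, \frac{\omega(y)\, p(y \mid x)}{p(y)}\, p(x)\, dx,
\end{equation*}
where the last step uses Bayes' rule $p(x\mid y) = p(y\mid x) p(x) / p(y)$.

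Next I would form the Lagrangian for the minimization problem \eqref{eq:min_st_macrocoverage} (with $\mathrm{MacroCov}_\omega$) and minimize it pointwise in $x$. For multiplier $\lambda \geq 0$, the Lagrangian is
\begin{equation*}
\mathcal{L}(\cC, \lambda) = \sum_{y \in \cY} \int \I{y \in \cC(x)}\, p(x) \left(1 - \lambda \cdot \frac{\omega(y)\, p(y\mid x)}{p(y)}\right) dx - \lambda(1-\alpha).
\end{equation*}
Because the integrand decouples across $y$, the minimizing set at each $x$ includes $y$ exactly when the coefficient is non-positive, yielding
\begin{equation*}
\cC^*(x) = \Bigl\{ y \in \cY : \omega(y)\, p(y \mid x)/p(y) \geq 1/\lambda \Bigr\},
\end{equation*}
which is the claimed thresholded form with $t = 1/\lambda$. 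The dual problem for \eqref{eq:max_macrocoverage} is entirely symmetric: swap the roles of the objective and constraint, form the Lagrangian with a multiplier on the size constraint, and the same pointwise argument gives the same functional form.

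The remaining step is to calibrate $\lambda$ (equivalently $t$) so that the constraint is active, and to argue this is without loss of optimality. I would invoke a standard Neyman--Pearson-type randomization argument: the map $t \mapsto \mathrm{MacroCov}_\omega(\cC^*_t)$ is non-increasing in $t$, so by continuity (or by allowing randomization at the boundary to handle atoms/ties in the distribution of $\omega(Y)p(Y\mid X)/p(Y)$) there exists $t^*$ achieving equality in the coverage or size constraint. The main obstacle I anticipate is the bookkeeping around this boundary case: without a continuity assumption on the conditional densities, one may need to randomize on the set $\{y : \omega(y) p(y\mid x)/p(y) = t\}$ to meet the constraint exactly, and this caveat should be stated as part of the formal proposition. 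Aside from this measure-theoretic subtlety, the argument is a direct Lagrangian computation, and the key structural insight, that weighting the class-conditional coverages by $\omega(y)$ simply rescales the threshold statistic by $\omega(y)$, falls out of the decoupling across classes in the Lagrangian.
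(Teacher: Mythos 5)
Your argument is correct, and at its core it is the same argument as the paper's: the same Bayes-rule rewriting of $\mathrm{MacroCov}_\omega$ and $\E[|\cC(X)|]$ as integrals of $\mathbbm{1}\{y\in\cC(x)\}$ against $p(x)\,d x$ times the counting measure on $\cY$, followed by a threshold-on-the-likelihood-ratio optimality step. The difference is in packaging: the paper isolates this step as a Neyman--Pearson lemma (Lemma~\ref{lem:neyman_pearson}), proves optimality by the difference-of-integrals computation, and handles the dual problem \eqref{eq:max_macrocoverage} by passing to complements $\cC^c$ so that the same minimization lemma (in its strict-inequality variant) applies; you instead inline the argument as Lagrangian pointwise minimization and treat the dual directly by putting the multiplier on the size constraint, which avoids the complementation detour. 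Your approach buys a slightly shorter and more symmetric treatment of the two problems; the paper's buys a reusable lemma plus a uniqueness statement (any other optimizer coincides a.e.\ with the thresholded set), which your write-up does not address --- acceptable for the informal claim, but worth noting. Two points you should make explicit to be fully rigorous: (i) the Lagrangian-sufficiency step itself, namely that if $\cC^*_\lambda$ minimizes $\mathcal{L}(\cdot,\lambda)$ pointwise and meets the coverage constraint with equality, then for any feasible $\cC$ one has $\E[|\cC^*_\lambda(X)|]\le \E[|\cC(X)|]$ because $\lambda\ge 0$ and $\mathrm{MacroCov}_\omega(\cC)\ge 1-\alpha$ --- currently you only gesture at this; and (ii) your existence-of-$t^*$ caveat matches the paper exactly: its formal Proposition~\ref{prop:macro_formal} simply \emph{assumes} a threshold achieving exact equality exists and relegates the atom/randomization issue to a remark, so your treatment is aligned rather than deficient there. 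The stray sign on the constant $\lambda(1-\alpha)$ in your Lagrangian is immaterial since it does not affect the pointwise argmin.
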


We can approximate these optimal sets by running $\standard$ CP with the \emph{weighted prevalence-adjusted softmax} ($\WPAS$),
% \vspace{-0.3cm}
\begin{align}\label{eq:wpas}
    s_{\WPAS}(x,y) := - \omega(y) \frac{\hat{p}(y|x)}{\hat{p}(y)},
\end{align}
as the score function.

\subsection{Approach II: Softening $\classwise$ CP via $\interp$} \label{sec:fuzzy_CP}

In the previous section, we presented our first solution, a new conformal score function; here we present our second solution, which is a simple way to interpolate between the behaviors of $\classwise$ and $\standard$ by linearly interpolating their quantile thresholds. We call this procedure $\interp$ for ``interpolated quantile'' because it constructs sets as $\cC_{\interp}(X) := \cC(X; \hat \qq_{\IQ})$ where the $y$-th entry of $\hat \qq_{\IQ}$ is a weighted average of $\qhat$ and $\qhat_y^{\CW}$, as defined in \eqref{eq:qhat_standard} and \eqref{eq:qhat_classwise}, for some weight $\tau \in [0,1]$. That is, 
\begin{align}\label{al:qhat_IQ}
    \qhat_y^{\IQ} = \tau \qhat_y^{\CW} + (1-\tau) \qhat \qquad \text{for all }y \in \cY.
\end{align}
For classes where $\qhat_y^{\CW}=\infty$ due to small $n_y$, we replace it with one (the maximum possible value of the $s_{\softmax}$ conformal score) before interpolating. 
% While the method interpolates between two procedures with valid marginal coverage, it is not guaranteed to achieve the desired coverage of $1 - \alpha$. and instead is guaranteed to achieve a coverage of at least $1 - 2\alpha$. 
\begin{proposition}\label{prop:IPQ}
    If $\hat q$ and $ \qhat_y^{\CW}$ (for $y \in \cY$) are the $\standard$ and $\classwise$ conformal quantiles for $\alpha \in [0,1]$, then
    $\cC_{\interp}$ achieves a marginal coverage of at least $1-2\alpha$.
\end{proposition}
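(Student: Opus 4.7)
The plan is to bound the miscoverage probability of $\cC_{\interp}$ by a union bound over the miscoverage events of $\standard$ and $\classwise$, exploiting that $\qhat_y^{\IQ}$ is a convex combination.

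First, I would make the key deterministic observation: for any $y$, if $s(X_{n+1}, y) \leq \qhat$ and $s(X_{n+1}, y) \leq \qhat_y^{\CW}$, then $s(X_{n+1}, y) \leq \tau \qhat_y^{\CW} + (1-\tau)\qhat = \qhat_y^{\IQ}$, since a convex combination lies between its arguments. Taking the contrapositive and plugging in $y = Y_{n+1}$, the miscoverage event of $\cC_{\interp}$ is contained in the union of the miscoverage events of $\cC_{\standard}$ and $\cC_{\classwise}$:
\begin{align*}
\{Y_{n+1} \notin \cC_{\interp}(X_{n+1})\} \subseteq \{Y_{n+1} \notin \cC_{\standard}(X_{n+1})\} \cup \{Y_{n+1} \notin \cC_{\classwise}(X_{n+1})\}.
\end{align*}

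Second, I would apply the union bound. The $\standard$ marginal coverage guarantee gives $\P(Y_{n+1} \notin \cC_{\standard}(X_{n+1})) \leq \alpha$. The $\classwise$ class-conditional guarantee gives $\P(Y_{n+1} \notin \cC_{\classwise}(X_{n+1}) \mid Y_{n+1}=y) \leq \alpha$ for each $y$, and marginalizing yields $\P(Y_{n+1} \notin \cC_{\classwise}(X_{n+1})) \leq \alpha$. Combining with the union bound gives the claimed $\P(Y_{n+1} \in \cC_{\interp}(X_{n+1})) \geq 1 - 2\alpha$.

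The one subtlety — and the step I expect to require the most care — is the convention of replacing $\qhat_y^{\CW} = \infty$ with $1$ before interpolating. I would handle this by noting that $s_{\softmax}$ takes values in $[0,1]$, so for any class $y$ with $\qhat_y^{\CW} > 1$, the capped value $\min(\qhat_y^{\CW}, 1) = 1$ still satisfies $\P(s(X_{n+1}, Y_{n+1}) > 1 \mid Y_{n+1} = y) = 0 \leq \alpha$ trivially. Thus the capped classwise thresholds still enjoy the class-conditional guarantee, and the union-bound argument goes through verbatim with the capped values substituted for $\qhat_y^{\CW}$.
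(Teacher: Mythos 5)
Your proposal is correct and follows essentially the same argument as the paper: the convex combination $\tau \qhat_y^{\CW} + (1-\tau)\qhat$ dominates $\min(\qhat_y^{\CW},\qhat)$, so the miscoverage event of $\interp$ is contained in the union of the $\standard$ and $\classwise$ miscoverage events, and a union bound with the two coverage guarantees gives $1-2\alpha$. Your extra remark handling the replacement of infinite classwise quantiles by $1$ (valid since $s_{\softmax}\in[0,1]$) is a welcome detail that the paper's proof leaves implicit.
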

Theoretically, this lower bound is almost tight; we can construct a pathological example where $\interp$ achieves coverage of $1-2\alpha + \alpha^2$, which is close to $1-2\alpha$ for small $\alpha$ (see Appendix \ref{sec:APPENDIX_proofprop3}). Empirically, however, we find that $\interp$ achieves coverage close to $1-\alpha$. This is because the real data  we test on do not exhibit the pathologies from our example (which uses discrete score distributions that differ greatly between classes).

In Appendix~\ref{sec:fuzzy_APPENDIX}, we consider an alternative way of instantiating the interpolation idea used by $\interp$ via a method we call $\fuzzy$ Classwise CP, which interpolates by computing weighted quantiles using class-dependent weights determined by some notion of class similarity.

\section{Experiments}
\label{sec:experiments}

    \textbf{Overview.} 
    Code for reproducing our experiments is available at \url{https://github.com/
tiffanyding/long-tail-conformal}.
    We consider two datasets for long-tailed classification: \tinyplantnet \citep{garcin2021pl} and iNaturalist-2018 \citep{inat2017}.
    Figure~\ref{fig:class_distributions} shows the class distributions of the datasets we use. 
    A key challenge of \tinyplantnet and iNaturalist-201 is that their test sets are also long-tailed, hindering reliable class-conditional evaluation for rare classes.\footnote{69\% of classes in \tinyplantnet and 90\% of classes in iNaturalist have fewer than 10 test examples.} To address this, we create truncated versions (see Appendix \ref{sec:dataset_preparation} for details) that preserve some of the long-tail structure while allowing for robust estimation of class-conditional metrics: Classes in the truncated datasets have 100 test examples each, but we assume the test distribution of interest is equivalent to the (long-tailed) train distribution, so we compute marginal metrics by computing a weighted average of the class-conditional metrics where the weight for class $y$ is the prevalence $\hat{p}(y)$ computed on $\texttt{train}$. When it comes to aggregated class-conditional metrics (such as macro-coverage), the full and truncated datasets produce similar results, so we defer the truncated results to Appendix \ref{sec:additional_experiments_APPENDIX}. However, when reporting unaggregated class-conditional metrics (as is the case in Section \ref{subsec:humand_decision} below), it is necessary to use the truncated versions.
    \begin{figure}[h!]
    \centering    \includegraphics[width=\linewidth]{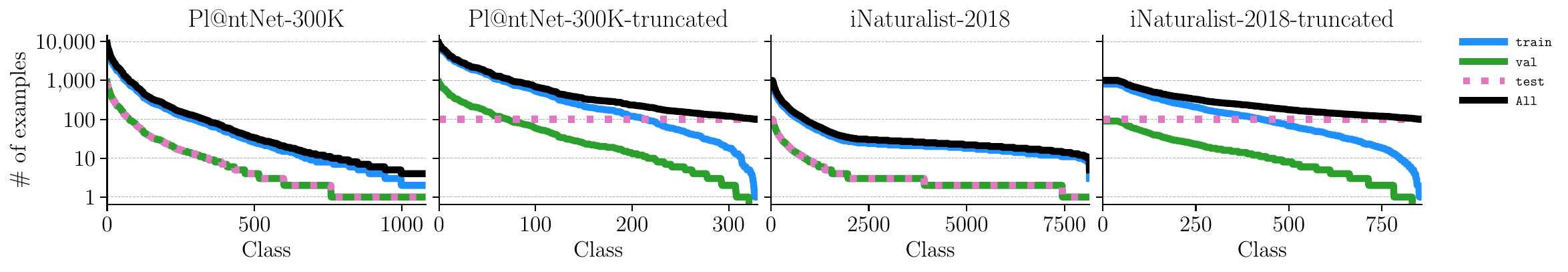}
    \vspace{-0.6cm}
    \caption{Class distributions (sorted by prevalence), plotted using a logarithmic scale, of the classical \texttt{train}, \texttt{val}, and \texttt{test} sets in the datasets we experiment on. We further randomly split 30\% of \texttt{val} to use for model validation and use the remaining 70\% as the calibration set $\cD_{\mathrm{cal}}$. We use the truncated versions to obtain good estimates of class-conditional metrics.
}
% \vspace{-0.5cm}
    \label{fig:class_distributions}
\end{figure}
    
    Unless otherwise stated, we use the $s_{\softmax}$ score function described in Section \ref{sec:preliminaries}. The base model is a ResNet-50 \citep{he2016deep} trained using the standard cross-entropy loss (see Appendix~\ref{sec:focal_loss_APPENDIX} for similar results using focal loss \citep{lin2017focal}, a loss designed for class-imbalanced settings). More details about the experimental setup are available in Appendix~\ref{sec:experiment_details_APPENDIX}.

\textbf{Metrics.} We evaluate each set-generating procedure  $\cC: \cX \to 2^{\cY}$ on a test dataset \smash{$\{(X_i^{\cT},Y_i^{\cT})\}_{i=1}^{N}$} that is separate from the data used for model training (and validation) and calibration. Let $\cJ_y \subseteq [N]$ be the set of indices of the test examples with label $y$, and define 
\begin{equation}\label{eq:empirical_coverage}
\hat{c}_y := \tfrac{1}{|\cJ_y|} \sum_{i \in \cJ_y} \mathbbm{1}\{Y_i^{\cT} \in \cC(X_i^{\cT})\}
\end{equation}
as the empirical class-conditional coverage of class~$y$.

\begin{wraptable}[18]{r}{0.45\textwidth}
%\vspace{-0.4cm}
\centering
\begin{threeparttable}
\scriptsize
\caption{Our evaluation metrics. $\hat{c}_y$ is the empirical coverage for class $y$ on the test set of $N$ points, $1-\alpha$ is the target coverage level, and $|\mathcal{Y}|$ is the number of classes.}
\label{tab:coverage_metrics}
\begin{tabular}{l l}
\toprule
\textbf{Metric name} & \textbf{Definition} \\
\midrule
% \makecell[l]{$\mathrm{FracBelow}50\%$ \\ Class-conditional coverage }
   $\mathrm{FracBelow}50\%$
   & $\displaystyle \frac{1}{|\mathcal{Y}|}\sum_{y\in \mathcal{Y}} \mathbbm{1}\!\bigl\{\hat{c}_y \le 0.5\bigr\}$ \\[6pt]

% \makecell[l]{$\mathrm{UnderCovGap}$ \\ Class-conditional coverage}
  $\mathrm{UnderCovGap}$
   & $\displaystyle \frac{1}{|\mathcal{Y}|}\sum_{y\in \mathcal{Y}} \max\!\bigl(1-\alpha-\hat{c}_y,\,0\bigr)$ \\[6pt]

% \makecell[l]{ $\mathrm{MacroCov}$ \\ Macro coverage}
   $\mathrm{MacroCov}$
   & $\displaystyle \frac{1}{|\mathcal{Y}|}\sum_{y\in \mathcal{Y}}\hat{c}_y$ \\[6pt]

% \makecell[l]{$\mathrm{MarginalCov}$ \\ Marginal coverage$^*$}
   $\mathrm{MarginalCov}$
   & $\displaystyle \frac{1}{N}\sum_{i=1}^{N}\mathbbm{1}\!\bigl\{Y_i^{\cT}\in\mathcal{C}(X_i^{\cT})\bigr\}$ \\[6pt]

% \makecell[l]{Average set size \\ Set size}
   Average set size
   & $\displaystyle \frac{1}{N}\sum_{i=1}^{N}\bigl|\mathcal{C}(X_i^{\cT})\bigr|$ \\
\bottomrule
\end{tabular}
% \begin{tablenotes}
% \item[*] 
% For truncated datasets, we must compute the marginal metrics differently: due to the uniform class distribution
% of the \texttt{test} splits of these datasets, a simple average over all test examples does not reflect marginal performance on the true distribution.
% We estimate the
% marginal coverage as $\sum_{y \in \cY} \hat{p}(y) \hat{c}_y$ where $\hat{p}(y)$ is estimated using \texttt{train}.
% A similar weighting procedure is used to estimate the average set size.
% \end{tablenotes}
% \vspace{-0.7cm}
\end{threeparttable}
\end{wraptable}
We consider several ways of aggregating $\hat{c}_y$ across $y$ to obtain a scalar metric:
\emph{(i)} the fraction of classes with coverage below a threshold (50\% in our experiments),
\emph{(ii)} the undercoverage gap, defined as the average undercoverage across classes (zero for classes with coverage at least $1-\alpha$), and
\emph{(iii)} the average of $\hat{c}_y$'s, yielding the empirical macrocoverage.
This is summarized in the first three rows of \Cref{tab:coverage_metrics}. Which of these metrics is most natural depends on the goal of the practitioner.
Furthermore, we consider the standard prediction set metrics of marginal coverage and average set size, as detailed in the last two rows of \Cref{tab:coverage_metrics}.

\textbf{Methods.} 
We run $\standard$ with the $\PAS$ score function from Section \ref{sec:PAS}. 
and $\interp$ from \Cref{sec:fuzzy_CP} with weights $\tau \in$ \{0, 0.25, 0.5, 0.75, 0.9, 0.95, 0.975, 0.99 , 0.999, 1\} on the $\classwise$ thresholds. These weights are chosen to effectively trace out the trade-off between set size and class-conditional coverage in our experiments.
We compare against the following baseline conformal prediction methods: $\standard$, as described in \eqref{eq:qhat_standard}; $\classwise$, as described in \eqref{eq:qhat_classwise}; and
% \emph{(iii)} \textsc{Exact Classwise}, a randomized version of classwise conformal that is designed to achieve \emph{exact} (rather than \emph{at least}) $1-\alpha$ coverage, as described in Appendix C.3 of \cite{ding2023class}. 
% \emph{(iii)} 
\textsc{Clustered} \citep{ding2023class}, a method that targets class-conditional coverage in the many-classes setting by grouping together classes with similar score distributions and computing a single score threshold for each cluster. 
Additional baselines with weaker performance, such as RC3P from \cite{shi2024conformal}, are deferred to Appendix \ref{sec:additional_experiments_APPENDIX}.

\subsection{Evaluating the size-coverage trade-off}

Figure \ref{fig:pareto} visualizes the trade-off between set size and various notions of coverage (class-conditional, macro-~, and marginal) achieved by each method. We describe some high-level takeaways: 

% \paragraph{Key takeaways.} We make the following observations about creating useful prediction sets in long-tailed settings:

\emph{(i) When targeting set size and class-conditional or macro-coverage, it is more effective to optimize for this trade-off directly than trading off set size and marginal coverage.} Adjusting $\alpha$ in $\standard$ CP is a plausible way to target class-conditional coverage but the results show that this does not optimally navigate the set size/class-conditional coverage trade-off, which is our goal.
In comparison,
our methods, which explicitly optimize for class-conditional or macro-coverage, consistently achieve better trade-offs than $\standard$ CP.

\emph{(ii) $\classwise$ should generally be avoided}, as comparable class-conditional and macro-coverage can be achieved with significantly smaller sets using our proposed methods.

\emph{(iii) $\interp$ produces reasonable set sizes even for large values of $\tau$.} 
Linearly interpolating between the $\standard$ quantile and $\classwise$ quantile does not linearly interpolate the average set sizes of the two methods: at $\tau=1$, $\interp$ coincides with $\classwise$ and consequently has a very large average set size (780 for \tinyplantnet and 7430 for iNaturalist-2018, $\alpha=0.1$), but decreasing $\tau$ only slightly to $\tau=0.99$ results in much more reasonable average set sizes of 7.6 on \tinyplantnet and 55.8 on iNaturalist-2018. This nonlinear relationship is likely due to the fact that the $s_{\softmax}$ distribution of rare classes is highly skewed towards one because the classifier consistently assigns them predicted probabilities near zero.

\emph{(iv) $\standard$ with $\PAS$ is Pareto optimal}, in the sense that at any marginal coverage level, there is no method that simultaneously achieves better set size and class-conditional/macro- coverage. 
% \jbfedit{For instance, for $\alpha = 0.2$ in Pl@ntNet-300K dataset, $\PAS$ increases the macro coverage around $0.75$ compared to $0.3$ of $\standard$ at a cost of an increase of average set size of $2$ (over $1000$ classes).} 
This suggests that $\standard$ with $\PAS$ is a good starting place for practitioners due to its simplicity and strong performance on all metrics. However, $\interp$ is also of practical value since its tunable parameter allows practitioners to choose where they want to be on the trade-off curve between set size and class-conditional coverage while maintaining marginal coverage. Note that the two methods can also be combined, as presented in Appendix~\ref{sec:additional_methods_APPENDIX}, Figure~\ref{fig:pareto_PAS}.

\begin{figure}[tb]
  \centering

  % Row 1
  \begin{subfigure}[b]{\textwidth}
    \centering
    \includegraphics[width=\textwidth]{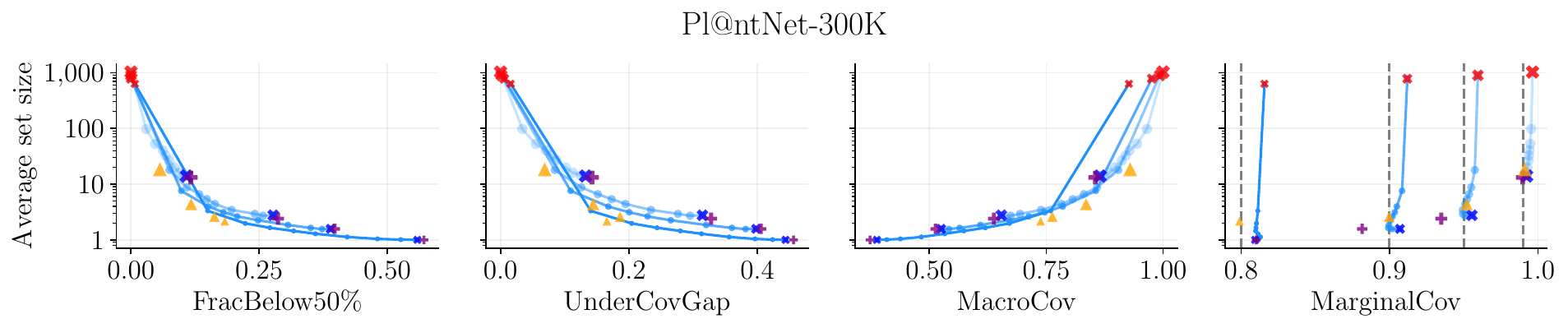}
  \end{subfigure}

  % \vspace{1em}

  % % Row 2
  % \begin{subfigure}[b]{\textwidth}
  %   \centering
  %   \includegraphics[width=\textwidth]{figs/ALL_metrics_plantnet-trunc_softmax_pareto_NO_LEGEND.pdf}
  % \end{subfigure}

 \vspace{0.1cm}

  % Row 3
  \begin{subfigure}[b]{\textwidth}
    \centering
    \includegraphics[width=\textwidth]{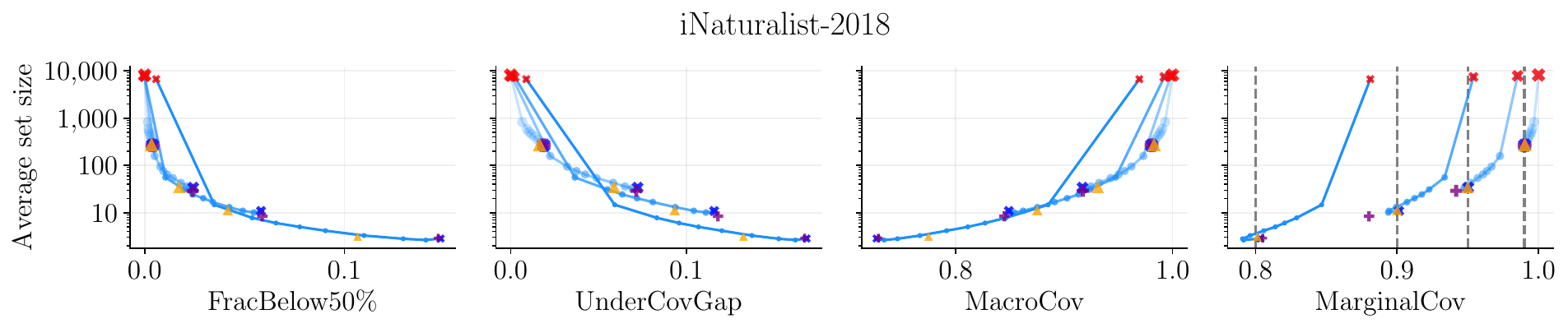}
  \end{subfigure}
  
 \vspace{-0.5cm}
  
  \begin{subfigure}[b]{\textwidth}
    \centering
        \includegraphics[width=\textwidth]{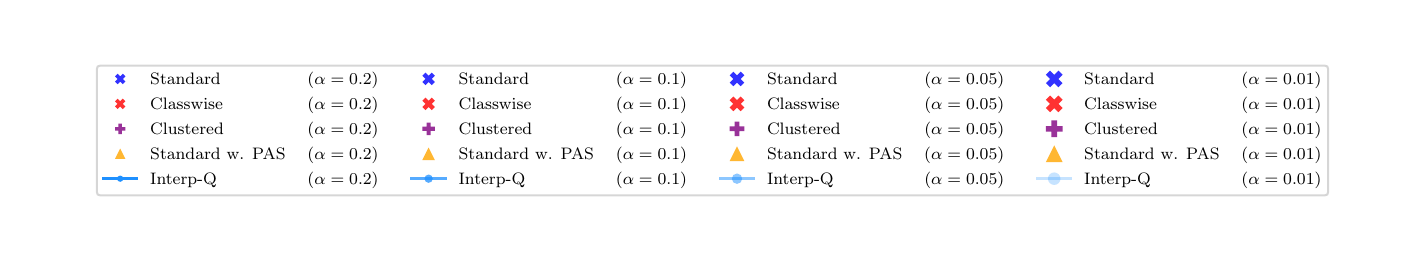}
  \end{subfigure}

  % \vspace{1em}

  % % Row 4
  % \begin{subfigure}[b]{\textwidth}
  %   \centering
  %   \hspace{-0.2cm}\includegraphics[width=\textwidth]{figs/ALL_metrics_inaturalist-trunc_softmax_pareto_WITH_LEGEND.pdf}
  % \end{subfigure}

  % Overall caption 
  \vspace{-0.5cm}
  \caption{Average set size vs. FracBelow50\%, UnderCovGap, MacroCov, and MarginalCov for various methods on the two datasets. For $\interp$, lines are used to trace out the trade-off curve achieved by running the method with different $\tau$ values for a fixed $\alpha$. For FracBelow50\% and UnderCovGap, it is better to be closer to the bottom left. For MacroCov, the bottom right is better. For MarginalCov, we want to be to the right of the dotted line at $1-\alpha$ for the $\alpha$ at which the method is run.}   
  \vspace{-0.6cm}
  \label{fig:pareto}
\end{figure}

\paragraph{\tinyplantnet case study.} Suppose we want sets with 90\% marginal coverage on \tinyplantnet. $\standard$ has a small average set size of 1.57 but 421 of the 1081 plant species have coverage below 50\%. Conversely, the $\classwise$ sets have zero classes with coverage below 50\%, but an average set size of 780. Our methods provide a middle ground: $\standard$ with $\PAS$ produces sets that have an average size only slightly bigger than $\standard$ (2.57) but more than halves the number of classes with coverage below 50\% to 180. $\interp$ behaves similarly, with the added bonus that the trade-off between set size and class-conditional coverage can be tuned by adjusting $\tau$. 
We provide a table of the metric values plotted in Figure \ref{fig:pareto} in Appendix \ref{sec:main_pareto_APPENDIX} and an extended \tinyplantnet case study in Appendix \ref{subsec:appendix_plantnet}.

\subsection{Targeting endangered species}

Motivated by plant conservation, we use the weighted prevalence-adjusted softmax $(\WPAS)$ score to target coverage of at-risk species in \tinyplantnet.\footnote{We consider species with an IUCN status of ``endangered'', ``vulnerable'', ``near threatened'', ``critically endangered'', or ``lower risk'' as \emph{at risk}. Of the 1,081 total species in \tinyplantnet, 33 species qualify as at-risk.}
Let $\cY_{\text{at-risk}} \subseteq \cY$ be the set of at-risk species. We will weight the coverage of at-risk species $\lambda\geq 1$ times more than the coverage of other species, so
\vspace{-0.3cm}
\begin{align*}
   \omega(y) = \begin{cases}
       \frac{\lambda}{W} & \text{if } y \in \cY_{\text{at-risk}} \\
       \frac{1}{W} & \text{otherwise,}
   \end{cases} 
\end{align*}
where $W = \lambda|\cY_{\text{at-risk}}| + |\cY \setminus \cY_{\text{at-risk}}|$ is a normalizing factor to ensure $\sum_{y \in \cY} \omega(y) = 1$.

The results are shown in Figure~\ref{fig:plantnet_weighted_macro}. We observe that $\standard$ with $\WPAS$ improves the class-conditional coverage of at-risk classes relative to $\standard$ with $\softmax$ or $\PAS$. Comparing $\WPAS$ to $\PAS$, we see that increasing $\lambda$, the amount we upweight at-risk classes, leads to larger improvements in the class-conditional coverage of at-risk classes, as expected. These increases are ``paid for'' in terms of a mild increase in average set size and have no discernible effect on the class-conditional coverage of not-at-risk classes, which is appealing from a practical perspective.

\begin{figure}[t!]
    \centering
    \includegraphics[width=\linewidth]{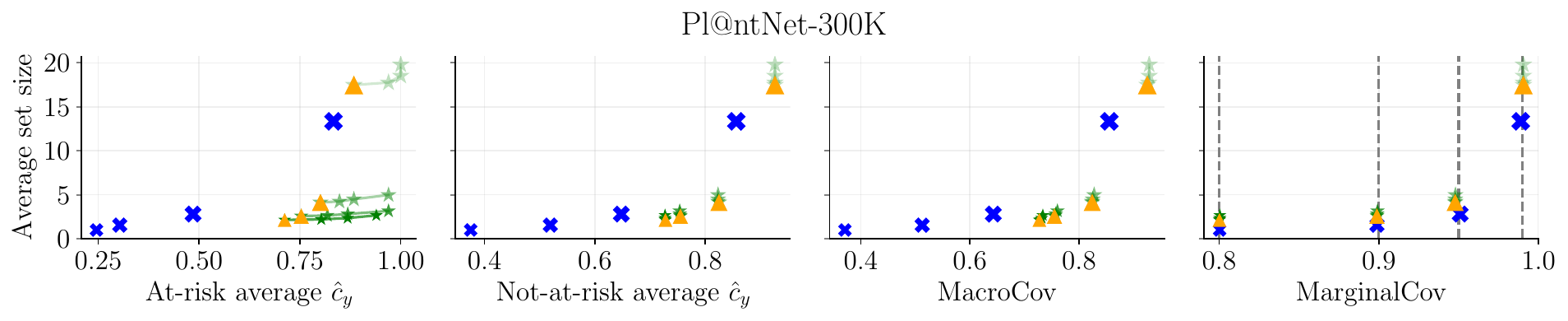}
     
 \vspace{-0.5cm}
  
    \includegraphics[width=\linewidth]{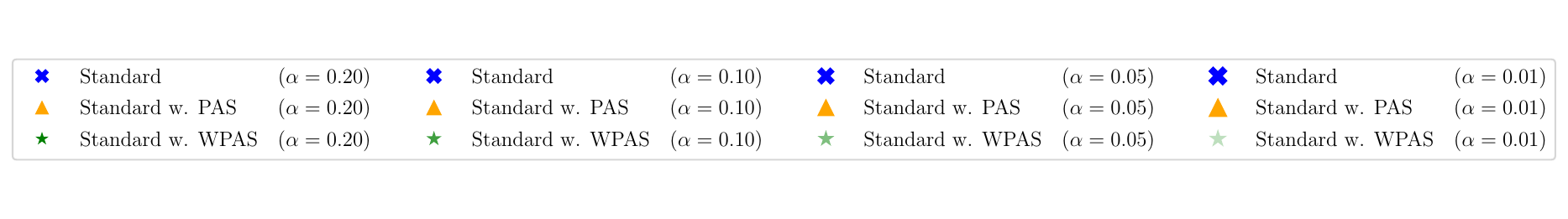}

 \vspace{-0.5cm}

    \caption{Results for running $\standard$ on \tinyplantnet with different conformal score functions: $\softmax$, $\PAS$, and $\WPAS$ with $\lambda \in \{1,10,10^2,10^3\}$. Increasing $\lambda$ in $\WPAS$ improves the class-conditional coverage of at-risk classes, which is measured using $\hat{c}_y$, the empirical class-conditional coverage of class $y$. ``At-risk average $\hat{c}_y$'' is computed as $(1/|\cY_{\text{at-risk}}|)\sum_{y \in \cY_{\text{at-risk}}} \hat{c}_y$ and ``not-at-risk average $\hat{c}_y$'' is computed analogously. Note that here the y-axis is on a linear scale.}
    % \vspace{-0.4cm}
    \label{fig:plantnet_weighted_macro}
\end{figure}
% values are:     gammas = [1, 10, 100, 1000]

\subsection{Simulated human decision-maker}
\label{subsec:humand_decision}
% Prediction sets should be assessed with downstream users in mind. 
We now examine how coverage and set size can jointly influence human decision accuracy.
Human interpretations of prediction sets can vary \citep{zhang2024evaluating, hullman2025conformal}, and we focus on two models of human decision-making that are impacted by coverage and set size: an \emph{expert verifier} $H_{\mathrm{expert}}$ and a \emph{random guesser} $H_{\mathrm{random}}$ (equivalent to the uncertainty suppressing decision maker in \citealp{hullman2025conformal}). 
Let $H(\cC(X), Y) \in \cY$ be a human’s chosen label given prediction set $\cC(X)$ when the true label is $Y$. 
The probability $\P(H(\cC(X), Y) = Y)$ that the human chooses the correct label after seeing prediction set $\cC(X)$ is $\mathbbm{1}\{Y \in \cC(X)\}$ if $H = H_{\mathrm{expert}}$ and $\mathbbm{1}\{Y \in \cC(X)\}/|\cC(X)|$ if $H = H_{\mathrm{random}}$. $H_{\mathrm{expert}}$ is only affected by coverage and not set size, whereas random guessers are highly sensitive to set size, as they choose a label uniformly at random from the prediction set.
We also consider mixture decision-makers, $H_{\mathrm{mixture}}$, who act as $H_{\mathrm{expert}}$ with probability $\gamma_{\mathrm{exp.}}$ and $H_{\mathrm{random}}$ with $1-\gamma_{\mathrm{exp.}}$ for $\gamma_{\mathrm{exp.}} \in [0,1]$. We are interested in how prediction sets affect the probability that a human correctly labels an instance of a given class, which we formalize as the \emph{class-conditional decision accuracy} for class $y$ under procedure $\cC$, defined as $\P(H(\cC(X), Y) = Y \mid Y = y)$. 
Note that, due to the definition of $H_{\mathrm{expert}}$, the class-conditional decision accuracy of $H_{\mathrm{expert}}$ is equivalent to the class-conditional coverage.

Figure~\ref{fig:decision_accuracies} shows the class-conditional decision accuracies for various types of decision makers  under $\standard$ with $\PAS$ as compared to baseline methods  (the plots for $\interp$ are similar; see Appendix~\ref{sec:decision_accuracy_APPENDIX}).
We report results on the truncated versions of \tinyplantnet and iNaturalist-2018 that only include classes with enough test examples to reliably estimate class-conditional decision accuracy (see Appendix \ref{sec:dataset_preparation} for more details).
We observe that $\standard$ and $\standard$ with $\PAS$ achieve strong performance regardless of $\gamma_{\mathrm{exp.}}$, whereas $\classwise$ only does well when $\gamma_{\mathrm{exp.}}$ is high. However, compared to $\standard$, $\standard$ with $\PAS$ has an additional benefit that its performance is more balanced across classes. The worst decision accuracies for  $\standard$ with $\PAS$ are higher than those of $\standard$, which is paid for by only a slight decrease in the best decision accuracies.

\begin{figure}[t]
    \centering
    \includegraphics[width=0.992\textwidth]{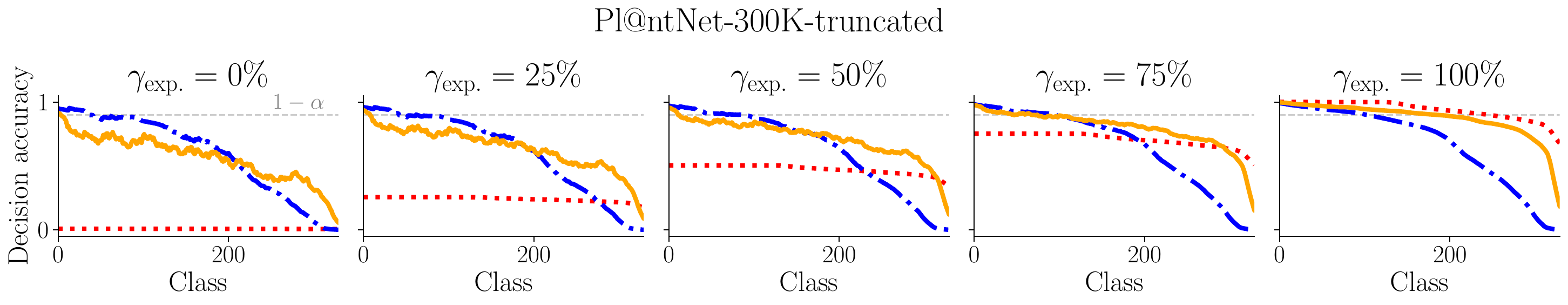}
    
    \vspace{0.135cm}
    
    \includegraphics[width=0.992\textwidth]{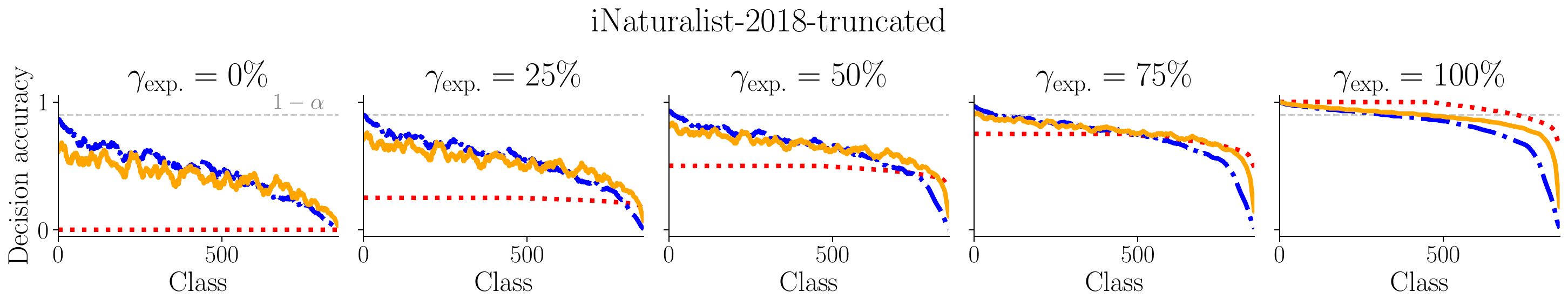}  
    
    \vspace{-0.2cm}
    
    \includegraphics[width=0.422\textwidth]{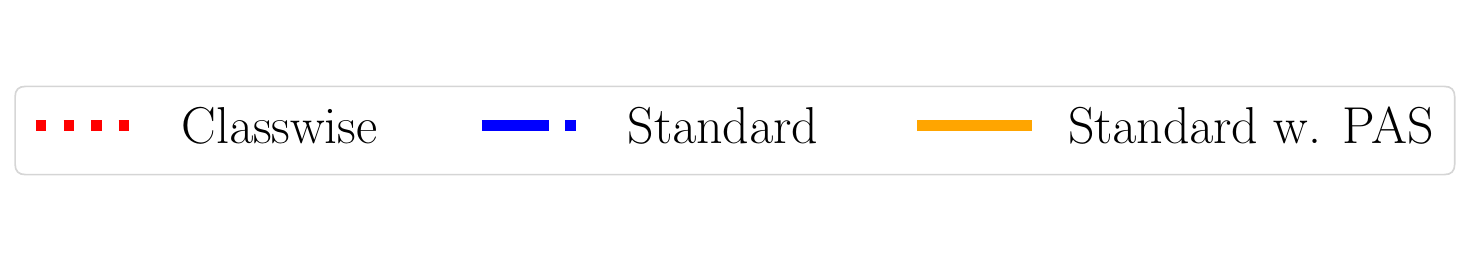}
        
    \vspace{-0.2cm}

    \caption{
    Class-conditional decision accuracies for a range of decision makers when presented with sets from $\standard$, $\classwise$
    or $\standard$ with $\PAS$ at $\alpha=0.1$ . 
    {Classes are ordered by decreasing decision accuracy of $H_{\mathrm{expert}}$ under each method. Note that the decision accuracy when $\gamma_{\mathrm{exp.}}= 100 \%$ is equivalent to the class-conditional coverage.}} 
    \vspace{-0.5cm}
    \label{fig:decision_accuracies}
\end{figure}

\section{Discussion}

% Summary
In this paper, we consider the problem of producing good prediction sets in long-tailed settings, which heretofore has been an understudied problem despite its relevance in many applications, such as species or disease classification. We develop two approaches to better navigate the trade-off between set size and class-conditional coverage in long-tailed settings. First, we introduce prevalence-adjusted softmax ($\PAS$) and its weighted variant ($\WPAS$), score functions inspired by oracle solutions that optimally balance expected set size and (weighted) macro-coverage. Second, we propose $\interp$, a simple procedure that interpolates between $\standard$ and $\classwise$ CP to provide a tunable compromise between small sets and strong class-conditional coverage.

\emph{Limitations and future work.}
To use $\interp$, we must choose a value of $\tau$. In practice, if we select a desired average set size and choose the parameter value that achieves this size on the calibration set, this will produce reasonable results (despite the exchangeability violation). 
% A general weakness of conformal prediction is the need for an additional data split in order to satisfy exchangeability, and the impact of this is more pronounced when many classes have few examples.
Another promising future research direction that we touched upon is the interaction between set sizes and coverage in determining the utility of prediction sets. In particular, an important aspect of decision-making not addressed by the decision accuracy discussed in \Cref{subsec:humand_decision} is the concept of ``effort'': even if a human can identify the correct label within a set, larger sets require more effort to search through.

\emph{Broader impacts.}
The methods we introduce have the potential to benefit society in the mid- to long-term by improving uncertainty quantification on citizen science platforms like Pl@ntNet.
Our methods increase the probability that prediction sets for rare or endangered species contain the true label while keeping the average set size under control. This has important implications for biodiversity monitoring and has the potential to produce a virtuous cycle: As more images of rare species are identified, they can be used to retrain the classifier to better identify such species, which in turn improves the probability that citizen scientists will correctly classify future images of these species.

\subsubsection*{Acknowledgments}
We thank Pierre Bonnet, Alexis Joly, Anastasios Angelopolous, and Ryan Tibshirani for insightful conversations and Margaux Zaffran for making this project possible.
This work was funded by the French National Research Agency
(ANR) through the grant Chaire IA CaMeLOt (ANR-20-CHIA-0001-01). TD additionally acknowledges support from the National Science Foundation Graduate
Research Fellowship Program under grant no. 2146752.

\bibliographystyle{iclr2026_conference}
\bibliography{references}
\newpage
\appendix
\section{Fuzzy Classwise Conformal Prediction}\label{sec:fuzzy_APPENDIX}

%The method $\interp$ performs a naive interpolation between the class-wise and marginal quantiles, treating all other classes equally. The following method builds on inter-class weights, which are then used to combine classes considered similar.

The $\interp$ method introduced in \Cref{sec:fuzzy_CP} shrinks $\qhat_y$ in a naive way that disregards relationships between classes; 
in this section, we propose a more sophisticated approach called \emph{fuzzy classwise CP} ($\fuzzy$ for short) that assigns weights to the score of each calibration example based on the ``similarity'' between their class and class $y$ and then takes the quantile of this weighted distribution. Notably, this approach comes with a $1-\alpha$ marginal coverage guarantee (rather than the $1-2\alpha$ guarantee of $\interp$). Empirically, we find that it does not do better than the simpler $\interp$ procedure, but we still believe it is useful to present the method, as it may be preferable in settings where a marginal coverage guarantee is desired. Proofs are deferred to Appendix~\ref{sec:theory_APPENDIX}.
% \jbfcomment{This appendix gives an overview of the method, formal results, experiments and discussion are postponed in Appendix~\ref{sec:theory_APPENDIX} and \ref{sec:additional_experiments_APPENDIX}. }

This method relates to work on weighted conformal prediction, which generalizes standard conformal prediction by computing a \emph{weighted} quantile of calibration scores \citep{Tibshirani_Foygel_Candes_Ramda19,barber2023conformal,barber2025unifying}.
Methodologically closest to our work is \cite{podkopaev2021distribution}, which uses label-dependent weighting to ensure marginal coverage under label shift.

\subsection{Preliminaries}

Recall the $\standard$ CP and $\classwise$ CP procedures introduced in \Cref{sec:preliminaries}.
We now consider a generalization of these two methods.
Let $w:\cY \times \cY \to \R_{\geq0}$ be a weighting function where $w(y',y)$ is the weight assigned to a point with label $y'$ when computing the weighted quantile threshold for class $y$.
The \emph{$w$-label-weighted conformal prediction set} (see, e.g., \citealp{podkopaev2021distribution}) is $\cC_{\mathrm{LW}}(X;w) := \cC(X; \hat \qq_{w})$ where the $y$-th entry of $\hat \qq_{w}$ is
%\vspace{-0.3cm}
%\jscomment{ \Cref{eq:label_weighted_qhat}is a generalization of the weighted conformal prediction setup in Tibshirani et al. [2019]. Please include the citation in that paragraph.}
\begin{align} \label{eq:label_weighted_qhat}
    \qhat^w_y = \Quantile_{1-\alpha}\Big(\sum_{i=1}^n  
    \frac{w(Y_i,y)}{W_y}\delta_{s(X_i, Y_i)} + \frac{w(y,y)}{W_y}\delta_{\infty}\Big)
\end{align}
and $W_{y} = \sum_{i=1}^n w(Y_i,y) + w(y,y)$ is a normalizing factor to ensure the weights sum to one.
 
% and $w^y_j$ denotes the $j$-th entry of the weight vector $w(y)$.

$\standard$ corresponds to label-weighted conformal with equal weights for all classes regardless of $y$ --- that is, $w(y', y) \propto 1$. $\classwise$ corresponds to label-weighted conformal where nonzero weights are assigned only to other calibration points of class $y$, i.e. $w(y',y) \propto \mathbbm{1}\{y'=y\}$. 

\subsection{The $\fuzzy$ method}

%Although achieving strong empirical results, $\interp$ lacks a coverage guarantee and shrinks $\qhat_y$ in a naive way that disregards relationships between classes. 
%We now propose a more sophisticated method that assigns weights to the score of each calibration example based on the ``similarity'' between their class and class $y$ and then takes the quantile of this weighted distribution.

% This method, in particular, shrink all classes in the same manner, without taking any structure into account. 
% The following methods interpolate between the $\classwise$ and the $\standard$ quantiles differently for classes that share many similarities with others from those that are more singular. 
%A relevant parameter is the class prevalence, which will both influence the classifier's performance for that class (a predominant class will be better learned) and the ability to properly calibrate the prediction set ($\classwise$ CP sets require $\alpha^{-1}$ points per class to be informative). 
% Hence, rather than directly shrinking the quantile  $\qhat_y^{\CW}$ toward the marginal quantile $\qhat $, we assign weights to the scores based on the similarity between their class and class $y$, and compute a weighted quantile of this new distribution. 

We now present the \emph{fuzzy classwise conformal prediction} ($\fuzzy$) procedure. We begin by presenting a simpler method, Raw $\fuzzy$, that $\fuzzy$ builds upon.

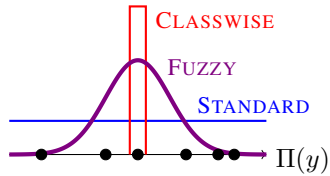
\begin{wrapfigure}{r}{0.3\textwidth}
 \vspace{-0.7cm}
  \begin{center}
      \begin{tikzpicture}
\begin{axis}[%
    axis lines=middle,
    xmin=-4, xmax=4,
    ymin=-0.1, ymax=1.25,
    width=5cm, height=4cm,
    xtick=\empty, ytick=\empty,
     axis y line=none,
    xlabel={$\Pi(y)$},
    axis line style={->},
    every axis x label/.style={at={(ticklabel cs:1)},anchor=west},
    clip=false
]
% ─────────────────────────────────────────
% 1) “Standard” — constant small probability
% ─────────────────────────────────────────
\addplot[blue, thick]
    coordinates {(-4,0.25) (4,0.25)};
\node[blue] at (axis cs:3.6,0.37) {\small $\standard$};

% 2) “Classwise” — zero everywhere except the centre spike
% \addplot[red, thick]
    % coordinates {(-4,0) (4,0)};
\node[red]  at (axis cs:2.4,1) {\small $\classwise$};

% 3) The central red spike (class-conditional mass)
\addplot[red, thick, mark=none]
    coordinates {(-0.25,0) (-0.25,1.1) (0.25,1.1) (0.25,0)};

% 4) “Fuzzy” distribution — bell-shaped
\addplot[blue!50!red, ultra thick, domain=-4:4, samples=200]
    {0.7*exp(-0.5*x^2)};
\node[blue!50!red] at (axis cs:2,0.65) {\small $\fuzzy$};

% 5) Points along the baseline
\addplot[only marks, mark=*, mark size=2pt, black]
    coordinates {(-3,0) (-1,0) (1.5,0) (0,0) (2.5,0) (3,0)};
\end{axis}
\end{tikzpicture}
  \end{center}
   \vspace{-1cm}
  \caption{The weighting function $w$ of each CP method, when viewed as label-weighted conformal prediction.}
 \vspace{-0.5cm}
\label{fig:fuzzy_weights_visualization}
\end{wrapfigure}

\paragraph{Raw $\fuzzy$.}
The large sets of $\classwise$ result from not having enough data from each class when computing the score quantiles. A natural solution to this problem is to borrow data from other classes, where more data is borrowed from classes that are more similar.

We define the \emph{Raw $\fuzzy$ prediction set} to be the label-weighted conformal prediction set obtained using $w_{\fuzzy} : \cY \times \cY \to \R_{\geq 0}$ as the weighting function in \eqref{eq:label_weighted_qhat}, \ie $\cC_{\rawfuzzy}(X) := \cC_{\mathrm{LW}}(X;w_{\fuzzy})$.

To construct this weighting function, we use a class mapping $\Pi: \cY \to \Lambda \subseteq \R^d$ for some small $d>0$ (\eg $\Lambda = (0,1)$) and kernel functions $h_y^{\sigma}: \Lambda \times \Lambda \to \mathbb{R}_{\geq 0}$ parameterized by a bandwidth $\sigma > 0$. 
A good $\Pi$ should map classes with similar score distributions close together.
The kernel $h_y^{\sigma}$ takes in two mapped classes and outputs a non-negative scalar that is decreasing with the distance between the two inputs, and it is allowed to depend on $y$. For example, the kernel bandwidth could be rescaled to decrease with the number of examples we have from class $y$ so that classes with more examples ``borrow less'' from other classes.

%
%
%
%For concreteness, we assume this low-dimensional space is $[0,1]$, but one can more generally consider subspaces of $\R^d$ for small $d$. 
%Let $\Pi: \cY \to \Lambda \subseteq \R^d$ be a mapping of a class $y \in \cY$ for some small $d>0$ (\eg $\Lambda = (0,1)$).
% For concreteness, we assume this low-dimensional space is $[0,1]$, but one can more generally consider subspaces of $\R^d$ for small $d$. 
%
%Let $h_y^{\sigma}: \Lambda \times \Lambda \to \mathbb{R}_{\geq 0}$ be a kernel function parameterized by a bandwidth $\sigma > 0$ that takes in two mapped classes and outputs a non-negative scalar that is decreasing with the distance between the two inputs.\footnote{This kernel function can depend on $y$. For example, its bandwidth could be rescaled to decrease with the number of examples we have from class $y$ so that classes with more examples ``borrow less'' from other classes.}
We then define the weighting function as
\begin{align}\label{al:def_wfuzzy}
    w_{\fuzzy}(y',y) = h_y^{\sigma}\big(\Pi(y'), \Pi(y)\big).
\end{align}
%We define the \emph{Raw $\fuzzy$ prediction sets} to be the label-weighted conformal prediction sets obtained using $w_{\fuzzy}$ as the weighting function in \eqref{eq:label_weighted_qhat}, \ie $\cC_{\rawfuzzy}(X) := \cC_{\mathrm{LW}}(X;w_{\fuzzy})$.

Figure \ref{fig:fuzzy_weights_visualization} visualizes how Raw $\fuzzy$ can be viewed as interpolating between $\standard$ and $\classwise$ CP by setting weights that are in between the two extremes.

The following proposition tells us that for most reasonable kernels, such as the Gaussian kernel $h^{\sigma}(u,v) \propto \exp\left(-(v-u)^2/(2\sigma^2)\right)$, Raw $\fuzzy$ can recover both $\standard$ and $\classwise$ by setting the bandwidth appropriately.

\begin{proposition} [Informal] 
\label{prop:fuzzy_recovers_INFORMAL}
Assume $\Pi$ maps each class to a unique point. 
If, for all $u, v \in \Lambda$ with $u \neq v$, the kernel $h$ satisfies $h_\sigma(u,v) \to 0$ as $\sigma \to 0$ and $h_\sigma(u,v) \to h_\sigma(u,u)$ as $ \sigma \to \infty$,
%\begin{align*}
%   \quad h_\sigma(u,v) \to 0 \text{ as } \sigma \to 0 \qquad \text{and} \qquad  h_\sigma(u,v) \to h_\sigma(u,u) \text{ as } \sigma \to \infty,
%\end{align*}
% \[\quad h_\sigma(u,v) \underset{\sigma \to 0}{\to} 0\quad \text{and} \quad h_\sigma(v,u) \underset{\sigma \to \infty}{\to} h(u,u),\]
%then for $\sigma$ small enough $\hat{q}_y^{ w_{\fuzzy}} = \qhat_y^{\CW} $ and for $\sigma$ large enough $\  hat{q}_y^{ w_{\fuzzy}} = \qhat $.
then for sufficiently small $\sigma$ we have 
$\cC_{\rawfuzzy}(X) \equiv \cC_{\classwise}(X)$ and for sufficiently large $\sigma$ we have  
$C_{\rawfuzzy}(X) \equiv \cC_{\standard}(X).$
\end{proposition}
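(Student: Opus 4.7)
Since $\cC_{\rawfuzzy}(X) \equiv \cC_{\bullet}(X)$ reduces to checking that the thresholds $\qhat_y^w$ agree with $\qhat_y^{\CW}$ (resp.\ $\qhat$) for every $y \in \cY$, the plan is to analyze the weighted empirical CDFs used in \eqref{eq:label_weighted_qhat} and their level-$(1-\alpha)$ quantiles. Fix $y$ and WLOG rescale so that $h_\sigma(\Pi(y),\Pi(y)) = 1$ (the normalization by $W_y$ in \eqref{eq:label_weighted_qhat} makes this harmless). Writing $w_\sigma(y',y) := h_\sigma(\Pi(y'),\Pi(y))$, the Raw Fuzzy weighted CDF can be decomposed as a ``class-$y$'' part (with weight $1$ per point by injectivity of $\Pi$) plus a ``cross-class'' part whose total weight is $\sum_{i \notin \cI_y} w_\sigma(Y_i, y)$.

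For the small-$\sigma$ regime, the hypothesis $h_\sigma(u,v) \to 0$ for $u \neq v$ together with injectivity of $\Pi$ forces $w_\sigma(Y_i, y) \to 0$ for every $i \notin \cI_y$, so $W_y \to n_y + 1$ and the weighted CDF converges pointwise to the Classwise empirical CDF $F^{\mathrm{CW}}_y$. To promote this convergence to equality of quantiles I would exploit that $F^{\mathrm{CW}}_y$ is a step function with jumps of size $1/(n_y+1)$. The Classwise quantile $\qhat_y^{\CW}$ is the smallest score where $F^{\mathrm{CW}}_y$ crosses $1-\alpha$, and generically there is a positive gap both below (the largest value of $F^{\mathrm{CW}}_y$ strictly less than $1-\alpha$) and above (the value of $F^{\mathrm{CW}}_y$ at the jump is strictly greater than $1-\alpha$). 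For $\sigma$ small enough, the uniform perturbation $\sup_s |F^\sigma_y(s) - F^{\mathrm{CW}}_y(s)|$ is smaller than this gap, so the weighted quantile lands on the same jump: $\qhat_y^w = \qhat_y^{\CW}$ exactly.

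The large-$\sigma$ case is symmetric: the hypothesis $h_\sigma(u,v)/h_\sigma(u,u) \to 1$ makes every $w_\sigma(Y_i, y) \to 1$, so the weights become uniform across the $n+1$ points (including $\delta_\infty$) and $F^\sigma_y$ converges pointwise to the Standard empirical CDF $F^{\mathrm{Std}}$, which is independent of $y$. The same ``gap'' argument then yields $\qhat_y^w = \qhat$ exactly for all sufficiently large $\sigma$, giving $\cC_{\rawfuzzy}(X) \equiv \cC_{\standard}(X)$.

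The main obstacle is the degenerate case where the level $1-\alpha$ coincides exactly with a plateau of the limit CDF: then the quantile sits on a boundary and an arbitrarily small perturbation can shift it to the adjacent step, breaking exact equivalence. I would handle this by noting that such degeneracies occur only for a measure-zero set of $\alpha$ (and with probability zero under the usual assumption of continuous scores that makes $s(X_i,Y_i)$ almost surely distinct), and by stating the formal result under generic $\alpha$ or with a tie-breaking convention matching the left-continuous quantile definition in \Cref{sec:preliminaries}.
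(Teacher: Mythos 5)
Your proposal is correct and takes essentially the same route as the paper's proof: both establish exact equality of the weighted quantile with the $\classwise$ (resp.\ $\standard$) quantile for extreme $\sigma$ by bounding the cross-class weight contribution against a positive gap between $1-\alpha$ and the attainable levels of the limiting step CDF, a gap the paper secures formally via the assumption $m\alpha \notin \mathbb{N}$ for all $m \in [n+1]$ --- precisely the generic-$\alpha$ caveat you flag. One small note: the degeneracy is a property of $\alpha$ and the class counts alone (whether $(n_y+1)(1-\alpha)$ or $(n+1)(1-\alpha)$ is an integer), not of ties among the scores, so the continuous-score remark is not the relevant fix, but your generic-$\alpha$ condition is.
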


% We propose a method called \emph{fuzzy classwise conformal prediction}, which is simply label-weighted conformal prediction using a weighting function that is ``in between'' the extreme weighting functions employed by $\standard$ and $\classwise$. 

% For each class $y \in \mathcal{Y}$, we compute a class-specific threshold 
% \begin{align}
%     \hat{q}_y = \mathrm{Quantile}_{1-\alpha}\left(\sum_{i=1}^n w_i^y\delta_{s(X_i, Y_i)} + w_{n+1}^y\delta_{\infty}\right)
% \end{align}
% where the weight for calibration point $i \in [n]$ is determined by its label $Y_i$.
% \begin{align}
%     w_i^y = \frac{h_y^{\sigma}\big(\Pi(y), \Pi(Y_i)\big)}{W_y}
% \end{align}
% and the test point weight is $w_{n+1}^y =  \frac{h_y^{\sigma}\big(\Pi(y),\Pi(y)\big)}{W_y}$, where dividing by $W_y:=\sum_{i=1}^{n+1} w_i^y$ ensures that the weights sum to one.
% After computing $\hat{q}_y$ for all classes $y$, we construct the prediction set as
% \begin{align} \label{eq:fuzzyCW_pred_set}
%     C_{\fuzzy}(X) = \left\{y : s(X, y) \leq \hat{q}_y\right\}
% \end{align}

% By varying the kernel bandwidth $\sigma$, we can smoothly interpolate between standard and classwise conformal prediction. The following proposition tells us that for most reasonable kernels, such as the Gaussian kernel $h^{\sigma}(u,v) \propto \exp\left(-(v-u)^2/(2\sigma^2)\right)$, fuzzy classwise conformal predictions recovers standard conformal prediction when $\sigma$ is sufficiently large, and it recovers classwise conformal prediction when $\sigma$ is sufficiently small.

\paragraph{$\fuzzy$ = Raw $\fuzzy$ + reconformalization for marginal coverage.}
To obtain the $\fuzzy$ procedure, we add a reconformalization wrapper around Raw $\fuzzy$ to ensure marginal coverage. In order to do this reconformalization, we use a holdout dataset. In practice, this holdout dataset can be created by setting aside a relatively small part of the calibration dataset (here, 5000 examples) since we only use it to estimate a single parameter. The intuition behind our procedure is as follows. When reconformalizing, we hope to equally affect all class-conditional coverages. One way to do this is to find the $\tilde \alpha \geq 0$ such that running Raw $\fuzzy$ at level $1-\tilde \alpha$ achieves the desired $1-\alpha$ coverage. This can be formulated as an example of $\standard$ CP with a special score function $\tilde{s}(x,y) = \hat{F}_y\big(s(x,y)\big)$, where 
% \jbfcomment{What formulation do you prefer ?}
% % \vspace{-0.2cm}
% \begin{align}\label{eq:def_tildescore}
% \hat{F}_y(t) = \sum_{i=1}^n w_i^y\mathbbm{1}\{s(X_i, Y_i)\geq t\} + \omega_y^y \quad \text{ and } \quad  w_i^y = \frac{w_{\fuzzy}(Y_i, y)}{\sum_{i=1}^n w_{\fuzzy}(Y_i, y) + w_{\fuzzy}(y,y)}.
% \end{align}
% \jbfcomment{or}
\begin{align}\label{eq:def_tildescore}
\hat{F}_y(t) = \sum_{i=1}^n w_i^y\mathbbm{1}\{s(X_i, Y_i)<t\} \quad \text{ and } \quad  w_i^y = \frac{w_{\fuzzy}(Y_i, y)}{\sum_{i=1}^n w_{\fuzzy}(Y_i, y) + w_{\fuzzy}(y,y)}.
\end{align}

Then, we have the equivalence $y \in \cC_{\rawfuzzy}(x) \Longleftrightarrow \tilde{s}(x,y) <1 - \alpha$. The score $\tilde{s}$ can be interpreted as (one minus) a weighted conformal p-value \citep{vovk2005algorithmic,barber2025unifying}, which we re-calibrate. To achieve the desired marginal coverage, the method $\fuzzy$ does $\standard$ CP with the new score function $\tilde{s}$ and held-out data as a calibration set.

%\begin{proposition}[Informal]\label{prop:reconformalization_Dhold} Let $\cD_{\mathrm{cal}}=\{(X_i, Y_i)\}_{i=1}^n$ be a calibration set.
%    Let $\cD_{\mathrm{hold}}= \{(X_i^{\cH}, Y_i^{\cH})\}_{i=1}^m$ be held-out examples that are exchangeable with the test point $(X_{n+1}, Y_{n+1})$. 
    %Define $\tilde \alpha$ to be the $\standard$ CP threshold obtained by applying $\tilde s$ on the held-out set:
    % \begin{align}\label{al:def_alpha_tilde}
    % \tilde \alpha = \Quantile_{\alpha}\Big(\frac{1}{m+1}\sum_{i=1}^m \delta_{\tilde{s}(X_i^{\cH},Y_i^{\cH})} + \frac{1}{m+1} \delta_{0}\Big).
    % \end{align}
% \jbfcomment{or}
%    \begin{align}\label{al:def_alpha_tilde}
%   1- \tilde \alpha = \Quantile_{1-\alpha}\Big(\frac{1}{m+1}\sum_{i=1}^m \delta_{\tilde{s}(X_i^{\cH},Y_i^{\cH})} + \frac{1}{m+1} \delta_{\infty}\Big).
%\end{align}
%Then, running Raw $\fuzzy$ on $\cD_{\mathrm{cal}}$ at level $\tilde{\alpha}$ will achieve $1-\alpha$ marginal coverage at the test point. 
%\end{proposition}

\begin{proposition}\label{prop:reconformalization_Dhold} Let $\cD_{\mathrm{cal}}=\{(X_i, Y_i)\}_{i=1}^n$ be a calibration set and $\cD_{\mathrm{hold}}= \{(X_i^{\cH}, Y_i^{\cH})\}_{i=1}^m$ be held-out examples.
    Define $\tilde \alpha$ to be the $\standard$ CP threshold obtained by applying $\tilde s$ on the holdout set:
    % \begin{align}\label{al:def_alpha_tilde}
    % \tilde \alpha = \Quantile_{\alpha}\Big(\frac{1}{m+1}\sum_{i=1}^m \delta_{\tilde{s}(X_i^{\cH},Y_i^{\cH})} + \frac{1}{m+1} \delta_{0}\Big).
    % \end{align}
% \jbfcomment{or}
    \begin{align}\label{al:def_alpha_tilde}
   1- \tilde \alpha = \Quantile_{1-\alpha}\Big(\frac{1}{m+1}\sum_{i=1}^m \delta_{\tilde{s}(X_i^{\cH},Y_i^{\cH})} + \frac{1}{m+1} \delta_{\infty}\Big).
\end{align}
Then, if the scores evaluated on the held-out dataset $\tilde{s}(X_i^\cH,Y_i^\cH)$ are exchangeable with the score of the test point, the set $\cC_\fuzzy(x) = \{ y : \tilde{s}(x,y) \leq 1 - \tilde{\alpha} \}$ will achieve $1-\alpha$ marginal coverage. %at the test point.
\end{proposition}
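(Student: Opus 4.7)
The plan is to reduce the statement to the standard split conformal coverage guarantee, applied with $\tilde s$ as the score function and $\cD_{\mathrm{hold}}$ as the calibration set. The key observation is that, once we condition on $\cD_{\mathrm{cal}}$, the function $\tilde s$ defined in \eqref{eq:def_tildescore} is a fixed (deterministic) mapping from $\cX \times \cY$ to $\R$. Thus the construction of $\cC_\fuzzy$ from $\cD_{\mathrm{hold}}$ and a fresh test point is \emph{exactly} an instance of Algorithm~\ref{alg:proto} with score function $\tilde s$, miscoverage level $\alpha$, and the quantile threshold $\hat\qq_{\standardshort}$ from \eqref{eq:qhat_standard}.

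First, I would rewrite the miscoverage event in terms of $\tilde s$: by construction, $y \in \cC_\fuzzy(x)$ if and only if $\tilde s(x,y) \leq 1 - \tilde \alpha$, so for a fresh test point $(X_{\mathrm{test}}, Y_{\mathrm{test}})$,
\begin{equation*}
\P\bigl(Y_{\mathrm{test}} \notin \cC_\fuzzy(X_{\mathrm{test}}) \mid \cD_{\mathrm{cal}}\bigr)
= \P\bigl(\tilde s(X_{\mathrm{test}}, Y_{\mathrm{test}}) > 1 - \tilde \alpha \mid \cD_{\mathrm{cal}}\bigr).
\end{equation*}
Next, I would invoke the exchangeability hypothesis: conditional on $\cD_{\mathrm{cal}}$, the variables $\tilde s(X_1^\cH,Y_1^\cH), \dots, \tilde s(X_m^\cH,Y_m^\cH), \tilde s(X_{\mathrm{test}}, Y_{\mathrm{test}})$ are exchangeable. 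The quantity $1 - \tilde\alpha$ is, by \eqref{al:def_alpha_tilde}, precisely the level-$(1-\alpha)$ quantile of the empirical distribution of these holdout scores augmented by a point mass at $\infty$, so the standard split-conformal argument (see, e.g., \citealp{vovk2005algorithmic, Tibshirani_Foygel_Candes_Ramda19}) applies verbatim to yield
\begin{equation*}
\P\bigl(\tilde s(X_{\mathrm{test}}, Y_{\mathrm{test}}) \leq 1 - \tilde\alpha \mid \cD_{\mathrm{cal}}\bigr) \geq 1 - \alpha.
\end{equation*}
Taking expectation over $\cD_{\mathrm{cal}}$ gives the desired marginal coverage bound.

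The step that requires most care is justifying the exchangeability invocation: we must make sure that $\tilde s$ is a function only of $\cD_{\mathrm{cal}}$ (not of the holdout set or the test point), so that conditioning on $\cD_{\mathrm{cal}}$ turns $\tilde s$ into a deterministic function, under which the exchangeability of $(X_i^\cH, Y_i^\cH)$ and $(X_{\mathrm{test}}, Y_{\mathrm{test}})$ transfers to their transformed scores. This is immediate from the definition of $\tilde s$ and $w_i^y$ in \eqref{eq:def_tildescore}, which depend only on $\cD_{\mathrm{cal}}$ and the underlying (fixed) score function $s$ and weighting $w_{\fuzzy}$. No further work is needed beyond this bookkeeping; the result is really a corollary of the standard split CP guarantee with a custom score.
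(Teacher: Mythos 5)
Your proof is correct and follows essentially the same route as the paper: both reduce the claim to the standard split-conformal guarantee applied with $\tilde s$ as the score function and $\cD_{\mathrm{hold}}$ as the calibration set, using that $\tilde s$ depends only on $\cD_{\mathrm{cal}}$ so the exchangeability of the holdout and test scores carries over (a sufficient condition the paper itself notes in the remark following the proposition). Your explicit condition-on-$\cD_{\mathrm{cal}}$-then-integrate bookkeeping is just a slightly more detailed writing of the same argument.
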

The score function $\tilde{s}$ depends on the calibration set $\cD_{\mathrm{cal}}$, but the assumption of exchangeability is easily satisfied, e.g., if the points of the calibration set and the held-out set and the test point are exchangeable. A careful reader may also notice that $\cC_\fuzzy$ is not exactly equivalent to apply Raw $\fuzzy$ at level $\tilde{\alpha}$ as a strict inequality has been replaced by a non-strict one to get the coverage guarantee. As a result, we implement $\fuzzy$ as Raw $\fuzzy$ at level $\tilde{\alpha}- \varepsilon$ for a small perturbation $\varepsilon>0$ (specifically, $\varepsilon < \min_{i,y} \omega_i^y$).

 Note that, if desired, we could use full conformal techniques \citep{vovk2005algorithmic} instead of data splitting (\ie let $\cD_{\mathrm{cal}}=\cD_{\mathrm{hold}}$), but this incurs higher computational costs that are undesirable for practical applications (see \Cref{prop:reconf_full_fuzzy} for details).

\paragraph{Choosing a mapping $\Pi$.}
To instantiate $\fuzzy$, we must define a mapping $\Pi$ from $\cY$ to a space in which we can compute distances between classes.
What makes a good mapping? Intuitively, we want classes with similar score distributions to be mapped close together. This is similar to the motivation behind the clustering procedure from \cite{ding2023class}. However, in long-tailed settings, many classes do not have sufficient calibration examples for us to estimate their score distributions. 
A ``zero-shot'' way of attempting to group together classes with similar score distributions is to group together classes based on their prevalence, with the intuition that the underlying classifier is likely to assign small softmax scores to infrequently seen classes and large scores to more frequently seen classes.
Specifically, we map each class $y$ to its number of \texttt{train} examples $n^{\mathrm{train}}_y$, normalize, and then add a small amount of random noise to ensure that the uniqueness condition of Proposition \ref{prop:fuzzy_recovers_INFORMAL} is satisfied with probability one: $\Pi_{\mathrm{prevalence}}(y) = c n^{\mathrm{train}}_y + \varepsilon_y$ where $\varepsilon_y \sim \mathrm{Unif}([-0.01, 0.01])$ independently for each for $y \in \cY$. We normalize using $\smash{c=1/(\max_{y' \in \cY} n_{y'}^{\mathrm{train}})}$ to ensure that $\Pi_{\mathrm{prevalence}}(y) \in [0,1]$ so that the same bandwidth $\sigma$ has similar effects on different datasets. This is just one possible choice for $\Pi$ and is what we use in our main experiments; other options are explored in Appendix~\ref{sec:additional_methods_APPENDIX}.

% \subsection{Approach II: A bonus method using quantile interpolation}
% \label{sec:interp-q}

% We now present a simple way for interpolating between $\classwise$ and $\standard$  by linearly interpolating their quantile thresholds. We call this procedure $\interp$ for ``interpolated quantile'', as it constructs sets as $C_{\interp}(X) := \cC(X; \qq_{\IQ})$ where the $y$-th entry of $\qq_{\IQ}$ is a weighted average of the corresponding entry of $\qhat_y^{\CW}$ and $\qhat$ for some weight $\tau \in [0,1]$. That is, 
% \begin{align}
%     \qhat_y^{\IQ} = \tau \qhat_y^{\CW} + (1-\tau) \qhat \qquad \text{for all }y \in \cY.
% \end{align}
% $\interp$ lacks coverage guarantee but, as shown below, achieves strong empirical results. 

% \input{sections/Fuzzy}
\section{Theoretical Guarantees}
\label{sec:theory_APPENDIX}

\subsection{Optimal prediction sets for (weighted) macro-coverage}\label{app:macro_coverage}

In this section, we state and prove a more formal version of \Cref{prop:macro_weighted_informal}, of which \Cref{prop:macro_informal} is a special case with uniform weights.

\begin{proposition}[Formal version of \Cref{prop:macro_weighted_informal}]
\label{prop:macro_formal} 
Let $\omega: \cY \to [0,1]$ be a non-negative weighting function summing to one. For $t \in \R$, define 
\begin{align}\label{al:optimal_set_macro}
    \tilde \cC_{t}(x) = \left\{ y \in \cY : \tilde{s}(x,y) \geq t\right\} \quad \text{where}\quad  \tilde{s}(x,y)=\frac{\omega(y)}{p(y)} \cdot p(y|x)
\end{align}
and $p(y|x)$ denotes the conditional density of $Y$ given $X=x$ and $p(y)$ is the marginal density of $Y$.

\begin{enumerate}[(a)]
    \item Let $\alpha \in [0,1]$. If there exists $t_{\alpha}$ such that $\mathrm{MacroCov}_\omega(\tilde \cC_{t_{\alpha}}) = 1-\alpha$, then $\tilde \cC_{t_{\alpha}}$ is the optimal solution to 
    \begin{align} \label{eq:max_size_st_weighted_macrocov}
        \min_{\cC : \mathcal{X} \mapsto 2^\mathcal{Y}} \E[|\cC(X)|] \quad \text{subject to } \sum_{y \in \mathcal{Y}} \omega(y) \P\paren{ y \in \cC(X) \mid Y=y } \geq 1-\alpha.
    \end{align}
    \item Let $\kappa \geq 0$. If there exists $t_{\kappa}$ such that $\E[|\tilde \cC_{t_{\kappa}}|] = \kappa$, then $\tilde \cC_{t_{\kappa}}$ is the optimal solution to 
    \begin{align} \label{eq:max_weighted_macrocov_st_size}
        \max_{\cC : \mathcal{X} \mapsto 2^\mathcal{Y}} \sum_{y \in \mathcal{Y}} \omega(y) \P\paren{ y \in \cC(X) \mid Y=y }  \quad \text{subject to } \E[|\cC(X)|] \leq \kappa.
    \end{align}
\end{enumerate}
\end{proposition}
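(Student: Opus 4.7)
The plan is to prove both parts via a Neyman--Pearson style argument: rewrite the expected set size and the weighted macro-coverage as integrals of the set indicator against a common base measure, and then exploit a pointwise inequality that $\tilde{\mathcal{C}}_t$ satisfies by construction.

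First, I would encode any set-generating procedure $\mathcal{C}$ by its indicator $c(x,y) = \mathbbm{1}\{y \in \mathcal{C}(x)\}$ and rewrite
\begin{align*}
\mathbb{E}[|\mathcal{C}(X)|] &= \int p(x) \sum_{y \in \mathcal{Y}} c(x,y)\, dx, \\
\mathrm{MacroCov}_\omega(\mathcal{C}) &= \sum_{y} \omega(y) \int c(x,y) \frac{p(x,y)}{p(y)}\, dx = \int p(x) \sum_{y} c(x,y)\, \tilde{s}(x,y)\, dx,
\end{align*}
using Bayes' rule $p(x,y)/p(y) = p(y|x)\,p(x)/p(y)$ and the definition of $\tilde{s}$ in \eqref{al:optimal_set_macro}. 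With both objectives expressed as integrals against $p(x)$, the problem reduces to a pointwise comparison.

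Next I would establish the key pointwise inequality: for every $x$ and every competitor set indicator $c$, with $\tilde{c}_t(x,y) = \mathbbm{1}\{\tilde{s}(x,y) \geq t\}$,
\begin{equation*}
\sum_{y \in \mathcal{Y}} \bigl[c(x,y) - \tilde{c}_t(x,y)\bigr]\bigl[t - \tilde{s}(x,y)\bigr] \geq 0.
\end{equation*}
This holds because $\tilde{c}_t$ is defined exactly so that the two bracketed factors always have the same sign: when $y \in \tilde{\mathcal{C}}_t(x)$, both factors are $\leq 0$, and when $y \notin \tilde{\mathcal{C}}_t(x)$, both are $\geq 0$. Integrating against $p(x)$ gives the master inequality
\begin{equation*}
t \cdot \bigl(\mathbb{E}[|\mathcal{C}(X)|] - \mathbb{E}[|\tilde{\mathcal{C}}_t(X)|]\bigr) \;\geq\; \mathrm{MacroCov}_\omega(\mathcal{C}) - \mathrm{MacroCov}_\omega(\tilde{\mathcal{C}}_t).
\end{equation*}

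Both parts then follow in one line. For (a), instantiate with $t = t_\alpha > 0$ and assume $\mathrm{MacroCov}_\omega(\mathcal{C}) \geq 1-\alpha = \mathrm{MacroCov}_\omega(\tilde{\mathcal{C}}_{t_\alpha})$; the right-hand side is nonnegative, so $\mathbb{E}[|\mathcal{C}(X)|] \geq \mathbb{E}[|\tilde{\mathcal{C}}_{t_\alpha}(X)|]$. For (b), instantiate with $t = t_\kappa > 0$ and assume $\mathbb{E}[|\mathcal{C}(X)|] \leq \kappa = \mathbb{E}[|\tilde{\mathcal{C}}_{t_\kappa}(X)|]$; the left-hand side is nonpositive, so $\mathrm{MacroCov}_\omega(\mathcal{C}) \leq \mathrm{MacroCov}_\omega(\tilde{\mathcal{C}}_{t_\kappa})$.

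The only subtlety I anticipate is sign handling in the edge cases $t_\alpha \leq 0$ or $t_\kappa \leq 0$. If $t \leq 0$, then $\tilde{s}(x,y) \geq t$ is satisfied for essentially all $(x,y)$ (since $\tilde{s} \geq 0$), so $\tilde{\mathcal{C}}_t(x) = \mathcal{Y}$ a.s., meaning the constraint is automatically satisfied in a trivial way and any optimum is also of the stated form; I would briefly remark on this degenerate regime. Otherwise, the division by $t$ needed to conclude in the master inequality is harmless, and the whole argument is essentially a two-line Neyman--Pearson lemma tailored to our objective/constraint pair.
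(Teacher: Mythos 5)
Your proposal is correct, and it reaches the same destination as the paper by a more unified route. The paper first states a general Neyman--Pearson lemma (with arbitrary $f$, $g$ and a product measure $\mu$), proves part (a) by instantiating $f(x,y)=\omega(y)p(x|y)$, $g(x,y)=p(x)$, and then handles part (b) by a complementation trick: it rewrites the max-coverage-subject-to-size problem as a min problem over the complement set $\cC^c$ and applies the lemma a second time with the roles of $f$ and $g$ swapped. You instead derive a single two-sided ``master inequality''
\begin{equation*}
t\bigl(\E[|\cC(X)|]-\E[|\tilde\cC_t(X)|]\bigr)\;\geq\;\mathrm{MacroCov}_\omega(\cC)-\mathrm{MacroCov}_\omega(\tilde\cC_t),
\end{equation*}
from the sign-agreement of the two factors $c-\tilde c_t$ and $t-\tilde s$, and both parts drop out immediately without any complementation. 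The underlying mechanism is identical to the optimality step inside the paper's proof of its Neyman--Pearson lemma, but your organization is cleaner and makes the duality between (a) and (b) transparent. Your handling of the $t\leq 0$ edge case matches the paper's implicit assumption (its lemma proof also divides by $t_\nu$). One small omission: the paper's lemma additionally asserts uniqueness of the optimizer up to $\mu_f$- and $\mu_g$-null sets, which your argument does not address; it can be recovered from your pointwise inequality by noting that equality forces $(c-\tilde c_t)(t-\tilde s)=0$ almost everywhere, exactly as in the paper's uniqueness step, so this is a presentational gap rather than a mathematical one.
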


\begin{remark}
    If thresholds $t_\alpha$ or $t_\kappa$ achieving exact equality do not exist, the optimal set remains of the form \eqref{al:optimal_set_macro} but must be combined with randomization to achieve the optimal solution (that has weighted macro-coverage of exactly $1-\alpha$ or expected set size of exactly $\kappa$). 
    % (with exact macro-coverage or exact expected set size). 
    See, for instance, \citet[Theorem~6.1]{shao2008mathematical} for a formal statement of \Cref{lem:neyman_pearson} below for this case.
\end{remark}

\begin{proof}
This result is a consequence of the Neyman-Pearson Lemma, which we state below using the formulation of \citet{sadinle2019least} (see also \citealp[Theorem~8.3.12]{CaseBerg:01}). We provide a proof of this lemma in \Cref{sec:proof_neyman}.

\begin{lemma}[\citealp{neyman1933ix}]\label{lem:neyman_pearson}
    Let $\mu$ be a measure on $\cX \times \cY$ and let $f,g: \cX\times \cY \to \R_{\geq 0}$ be two non-negative functions. For $\nu \geq 0$, consider the problem
    \begin{align*} 
    &\min_{\cC:\cX \to 2^{\cY}} \int_{\cX\times \cY}\mathbbm{1}\{y \in \cC(x)\} g(x,y)d\mu(x,y) \numberthis \label{eq:neyman}\\
    &\quad \text{subject to}  \int_{\cX\times \cY}\mathbbm{1}\{y\in\cC(x)\} f(x,y)d\mu(x,y)\geq \nu. 
    \end{align*}
    If there exists $t_\nu$ such that 
    \begin{equation}\label{eq:neyman_condition}
        \int_{\cX\times \cY} \mathbbm{1}\{f(x,y) \geq t_\nu \cdot g(x,y)\} f(x,y) d\mu(x,y) = \nu,
    \end{equation}
    then
    \begin{align}
    \label{eq:neyman_solution}
        \cC^{*}_{\nu}(x) = \{y \in \cY : f(x,y) \geq t_\nu \cdot g(x,y) \}
    \end{align}
    is the optimal solution to \eqref{eq:neyman}: Any other minimizer $\cC$ of \eqref{eq:neyman} is equal to $\cC_{\nu}^*$ $\mu_f$ and $\mu_g$-almost everywhere, i.e., for $h\in \{f,g\}$, $\mu_h\big( \{ (x,y) : y\in \cC^*_\nu(x) \}\Delta \{ (x,y) : y\in \cC(x) \}\big)=0$ where $\Delta$ denotes the symmetric distance between the sets and $\mu_h$ is defined as $\mu_h(A) = \int_A h d\mu$.
\end{lemma}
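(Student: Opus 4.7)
The plan is to mimic the classical Neyman--Pearson pointwise-comparison proof, lifted from binary hypothesis testing to the set-valued minimization at hand. For any candidate $\cC$, I will form the auxiliary integrand
\begin{equation*}
\Phi(x,y) := \bigl(\mathbbm{1}\{y \in \cC(x)\} - \mathbbm{1}\{y \in \cC^*_\nu(x)\}\bigr)\bigl(f(x,y) - t_\nu\, g(x,y)\bigr).
\end{equation*}
By the very definition of $\cC^*_\nu$, the two factors have opposing signs pointwise: on $\{(x,y) : y \in \cC^*_\nu(x)\}$ the second factor is non-negative while the first lies in $\{-1, 0\}$; on the complement the second factor is strictly negative and the first lies in $\{0, 1\}$. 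Hence $\Phi \leq 0$ everywhere, which is the single ingredient driving the proof.

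Next I will integrate $\Phi \leq 0$ against $\mu$ and rearrange to obtain
\begin{equation*}
\int \bigl(\mathbbm{1}_\cC - \mathbbm{1}_{\cC^*_\nu}\bigr)\, f \, d\mu \;\leq\; t_\nu \int \bigl(\mathbbm{1}_\cC - \mathbbm{1}_{\cC^*_\nu}\bigr)\, g \, d\mu,
\end{equation*}
where I abbreviate $\mathbbm{1}_\cC := \mathbbm{1}\{y \in \cC(x)\}$. The calibration condition~\eqref{eq:neyman_condition} forces $\int \mathbbm{1}_{\cC^*_\nu}\, f\, d\mu = \nu$, while feasibility of $\cC$ gives $\int \mathbbm{1}_\cC\, f\, d\mu \geq \nu$, so the left-hand side is non-negative. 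Since $t_\nu \geq 0$ (otherwise $\{f \geq t_\nu g\} = \cX \times \cY$ and the problem trivializes because $f, g$ are non-negative), it follows that $\int \mathbbm{1}_\cC\, g\, d\mu \geq \int \mathbbm{1}_{\cC^*_\nu}\, g\, d\mu$, establishing optimality.

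For the $\mu_f$- and $\mu_g$-a.e.\ uniqueness claim, I will suppose $\cC$ is also a minimizer, so equality propagates through the entire chain and in particular $\int \Phi\, d\mu = 0$. Decomposing $\Delta = (\cC \setminus \cC^*_\nu) \cup (\cC^*_\nu \setminus \cC)$: on $\cC \setminus \cC^*_\nu$ one has $f - t_\nu g < 0$ strictly, making $\Phi$ strictly negative there, so $\mu(\cC \setminus \cC^*_\nu) = 0$; on $\cC^*_\nu \setminus \cC$ one has $\Phi = -(f - t_\nu g) \leq 0$, and $\int \Phi = 0$ forces $f = t_\nu g$ $\mu$-a.e.\ there. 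Retracing the chain with equality then shows both minimizers saturate the constraint, yielding $\int_{\cC^*_\nu \setminus \cC} f\, d\mu = \int_{\cC \setminus \cC^*_\nu} f\, d\mu = 0$ and hence $\mu_f(\Delta) = 0$; the identity $f = t_\nu g$ on the boundary piece then converts this into $\mu_g(\Delta) = 0$.

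The main obstacle I anticipate is exactly this last step. The ``slack'' in the Neyman--Pearson inequality lives on the level set $\{f = t_\nu g\}$, where $\cC$ and $\cC^*_\nu$ are genuinely allowed to disagree; upgrading the qualitative information ``$f = t_\nu g$ on $\Delta$'' to the stronger measure-theoretic claim requires invoking the saturation of the constraint by \emph{both} candidate minimizers, not merely the pointwise sign analysis used in the optimality direction.
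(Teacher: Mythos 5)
Your proposal is correct and follows essentially the same route as the paper's proof: the pointwise sign analysis of $(\mathbbm{1}\{y\in\cC(x)\}-\mathbbm{1}\{y\in\cC^*_\nu(x)\})(f-t_\nu g)$, integration combined with feasibility and the calibration condition for optimality, and for uniqueness the observation that equality forces the integrand to vanish $\mu$-a.e., so the two sets can differ only on $\{f=t_\nu g\}$, after which saturation of the constraint by both minimizers gives $\mu_f(\Delta)=0$ and hence $\mu_g(\Delta)=0$. The only cosmetic difference is that the paper's optimality step splits the difference of objectives over $\cC^*\setminus\cC$ and $\cC\setminus\cC^*$ and divides by $t_\nu$, whereas you keep the product form; both arguments (yours and the paper's) implicitly use $t_\nu>0$ in the same places.
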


\begin{remark}\label{rem:strict}
An analogous statement of Lemma \ref{lem:neyman_pearson} holds where we replace the ``$\geq$'' with ``$>$'' in \eqref{eq:neyman_condition} and \eqref{eq:neyman_solution}.
\end{remark}

\textsc{Proof of Proposition~\ref{prop:macro_formal}}(a). We assume that $X$ has a density $p$ with respect to the Lebesgue measure $\lambda$, but the proof can be adapted for any measure. Define the functions $f(x,y) := \omega(y)p(x|y)$ and $g(x,y) =  p(x)$ and the measure $\mu = \lambda \times \Big( \sum_{y\in \cY} \delta_y\Big)$, which is the product measure between the Lebesgue measure (denoted by $\lambda$) and the counting measure on $\cY$. Let $\nu = 1-\alpha$. Observe that plugging these values of $f$, $g$, $\mu$, and $\nu$ into \eqref{eq:neyman} yields \eqref{eq:max_size_st_weighted_macrocov}, because
\begin{align*}
    \int_{\cX\times \cY}\mathbbm{1}\{y\in \cC(x)\}f(x,y)d \mu(x,y) &= \sum_{y\in \cY} \omega(y) \int_\cX\mathbbm{1}\{y\in \cC(x)\} p(x|y) dx \\
    &=  \sum_{y\in \cY} \omega(y)  \P \left( y \in \cC(X) \mid Y=y \right),
\end{align*}
and 
\begin{align*}
    \int_{\cX\times \cY}\mathbbm{1}\{y\in \cC(x)\} g(x,y)d\mu(x,y) &= \sum_{y\in \cY} \int_\cX\mathbbm{1}\{y\in \cC(x)\} p(x) dx \\
    &=  \int_\cX \sum_{y\in \cY} \mathbbm{1}\{y\in \cC(x)\} p(x) dx \\
    &=  \int_\cX |\cC(X)| p(x) dx \\
    &= \E\left( |\cC(X)| \right).
\end{align*}
By Lemma~\ref{lem:neyman_pearson}, it follows that the optimal solution to \eqref{eq:max_size_st_weighted_macrocov} is
given by \eqref{eq:neyman_solution}, which for our choice of $f$ and $g$ can be rewritten as $\cC_{1-\alpha}^*(x) = \{y \in \cY : \frac{\omega(y)p(y|x)}{p(y)} \geq t_{1-\alpha}\}$
by observing that 
\begin{equation*}
    \frac{f(x,y)}{g(x,y)} = \frac{\omega(y)p(x|y)}{p(x)}  = \frac{\omega(y)p(x,y)}{p(x)p(y)} =\frac{\omega(y)p(y|x)}{p(y)}\,. 
\end{equation*}

\textsc{Proof of Proposition~\ref{prop:macro_formal}}(b).
Let us now derive the optimal solution to the dual problem. To do so, we must rewrite the problem in a form where \Cref{lem:neyman_pearson} can be applied. First, observe that
\begin{align*}
    \sum_{y\in \cY} \omega(y)  \P \left( y \in \cC(X) | Y=y \right) &= \sum_{y \in \cY} \omega(y) \big(1 - \P \left( y \notin \cC(X) \mid Y=y \right)\big) \\
    & = 1 - \sum_{y\in \cY} \omega(y)  \P \left( y \in \cC^{c}(X) \mid Y=y \right)\,,
\end{align*}
where $\cC^c(X) := \cY \setminus \cC(X)$ denotes the complement of $\cC(X)$. Similarly, observe that expected set size can be written in terms of the complement as $\E( |\cC(X) |) = |\cY| - \E( |\cC^c(X) |) $. Thus, we can obtain the optimal solution to \eqref{eq:max_weighted_macrocov_st_size} by taking the complement of the optimal solution to
\begin{align}\label{eq:dual_complementary}
 \min_{\bar{\cC} : \mathcal{X} \mapsto 2^\mathcal{Y}}  \sum_{y \in \mathcal{Y}} \omega(y) \P\paren{ y \in \tilde{\cC}(X) \mid Y=y }  \quad
\text{subject to} \quad  \E( |\bar{\cC}(X)| ) \geq |\cY| - \kappa.
\end{align}

Applying \Cref{lem:neyman_pearson}, combined with Remark~\ref{rem:strict}, with $f(x,y) = p(x)$, $g(x,y) = \omega(y)p(x|y)$, the same measure $\mu$ as in the proof of Proposition \ref{prop:macro_formal}(a), and $\nu=|\cY|-\kappa$ tells us that if there exists $\bar{t}_\kappa$ such that 
\begin{equation}\label{eq:bartk}
    \int_{\cX\times \cY} \mathbbm{1}\{f(x,y) > \bar{t}_\kappa \cdot g(x,y)\} f(x,y) d\mu(x,y) = |\cY| - \kappa \enspace,
\end{equation}
then the optimal solution to \eqref{eq:dual_complementary} is 
\begin{align*}
    \bar{\cC}_{\kappa}^*(x)\!=\! \left\{y \in \cY: \frac{f(x,y)}{g(x,y)} > \bar{t}_\kappa \!\right \} \!=\!\left\{y \in \cY:\!\frac{p(y)}{\omega(y) p(y|x)} > \bar{t}_\kappa \!\right\} \!=\!\left\{y \in \cY:\!\frac{\omega(y) p(y|x)} {p(y)} < \bar{t}^{-1}_\kappa \!\right\}. 
\end{align*}
Thus, the optimal solution to our original problem \eqref{eq:max_weighted_macrocov_st_size} is 
\begin{align*}
    \cC_{\kappa}^*(x) := (\bar{\cC}_{\kappa}^*)^c(x) = \left\{y \in \cY: \frac{\omega(y) p(y|x)} {p(y)} \geq \bar{t}^{-1}_\kappa \right\}.
\end{align*}

\end{proof}

\subsection{Proof of Lemma~\ref{lem:neyman_pearson}}\label{sec:proof_neyman}
For completeness we give a proof of the Neyman-Pearson Lemma, as stated in \Cref{lem:neyman_pearson}.

\begin{proof}
   We will show that $\cC^*(x) = \{y :f(x,y) \geq t_\nu g(x,y) \}$ is the optimal solution to \eqref{eq:neyman}. We first demonstrate that it is an optimal solution and then that it is unique.
   
   \textit{Optimality.} By definition of $t_\nu$, we have
\begin{equation} \label{eq:np_equality}
    \int_{\cX\times \cY} \mathbbm{1}\{ y \in \cC^*(x)\} f(x,y) d\mu(x,y) = \nu,
\end{equation}
which is trivially greater than or equal to $\nu$, so $\cC^*(x)$ is indeed a feasible solution.
To show that $\cC^*(x)$ is optimal, we must argue that it achieves a smaller objective value than any other feasible solution.
Let $\cC: \cX \to 2^\cY$ be any other set-generating procedure that satisfies the constraint in \eqref{eq:neyman}. We want to show that
\begin{equation*}
    \int_{\cX\times \cY} \mathbbm{1}\{ y \in \cC^*(x)\} g(x,y) d\mu(x,y) \leq  \int_{\cX\times \cY} \mathbbm{1}\{ y \in \cC(x)\} g(x,y) d\mu(x,y)\,.
\end{equation*} 
We prove this by showing their difference is negative:
\begin{align*}
     &\int_{\cX\times \cY} \mathbbm{1}\{ y \in \cC^*(x)\} g(x,y) d\mu(x,y) -  \int_{\cX\times \cY} \mathbbm{1}\{ y \in \cC(x)\} g(x,y) d\mu(x,y) \\
     &= \int_{\cX\times \cY} \mathbbm{1}\{ y \in \cC^*(x) \backslash \cC(x)\} g(x,y) d\mu(x,y) -  \int_{\cX\times \cY} \mathbbm{1}\{ y \in \cC(x)\backslash \cC^*(x)\} g(x,y) d\mu(x,y) \\
     & \leq  \frac{1}{t_\nu}\int_{\cX\times \cY} \mathbbm{1}\{ y \in \cC^*(x) \backslash \cC(x)\} f(x,y) d\mu(x,y) -  \frac{1}{t_\nu}\int_{\cX\times \cY} \mathbbm{1}\{ y \in \cC(x)\backslash \cC^*(x)\} f(x,y) d\mu(x,y) \\
     &= \frac{1}{t_\nu}\left[\int_{\cX\times \cY} \mathbbm{1}\{ y \in \cC^*(x) \} f(x,y) d\mu(x,y) - \int_{\cX\times \cY} \mathbbm{1}\{ y \in \cC(x)\} f(x,y) d\mu(x,y) \right]  \\
     & \leq \frac{1}{t_\nu} \paren{ \nu - \nu} \leq 0\,.
\end{align*}
The first inequality follows from $y \in \cC^*(x)\Longleftrightarrow g(x,y)$ $\leq t_\nu^{-1}f(x,y)$.
The second inequality comes from applying the equality stated in \eqref{eq:np_equality} to the first integral and then using the definition of $\cC$ as satisfying the  constraint of \eqref{eq:neyman} to lower bound the second integral.

\textit{Uniqueness.} Let $\cC : \cX \to 2^{\cY}$ be another optimal set-generating procedure, so it achieves the same objective value as $\cC^*$,
\begin{equation}\label{eq:aux_uniq1}
    \int_{\cX\times \cY} \mathbbm{1}\{ y \in \cC(x)\} g(x,y) d\mu(x,y) =  \int_{\cX\times \cY} \mathbbm{1}\{ y \in \cC^*(x)\} g(x,y) d\mu(x,y)\,,
\end{equation}
and it is a feasible solution,
\begin{equation*}
     \int_{\cX\times \cY} \mathbbm{1}\{ y \in \cC(x)\} f(x,y) d\mu(x,y) \geq \nu\,.
\end{equation*}
Let us first note the following non-negativity relationship:
\begin{equation}\label{eq:aux_uniq2}
    \paren{\mathbbm{1}\{ y \in \cC^*(x)\} - \mathbbm{1}\{ y \in \cC(x)\}   }\paren{ f(x,y) - t_\nu g(x,y)} \geq 0, \quad \forall (x,y) \in \cX \times \cY\,.
\end{equation}
Integrating \eqref{eq:aux_uniq2} over $\cX \times \cY$, then applying \eqref{eq:aux_uniq1}, we get
\begin{equation*}
       \int_{\cX\times \cY} \mathbbm{1}\{ y \in \cC^*(x)\} f(x,y) d\mu(x,y) -  \int_{\cX\times \cY} \mathbbm{1}\{ y \in \cC(x)\} f(x,y) d\mu(x,y)\geq 0\,.
\end{equation*}

Combining this with the definition of $\cC^*$ and the feasibility of $\cC$, we must have $\int_{\cX\times \cY} \mathbbm{1}\{ y \in \cC(x)\} f(x,y) d\mu(x,y) = \nu$. Thus,  \eqref{eq:aux_uniq2} integrates to zero. Since we also know that \eqref{eq:aux_uniq2} is nonnegative, it must be true that, $\mu$-almost everywhere,
\begin{equation*}
    \paren{\mathbbm{1}\{ y \in \cC^*(x)\} - \mathbbm{1}\{ y \in \cC(x)\}   }\paren{ f(x,y) - t_\nu g(x,y)} = 0.
\end{equation*}
For $(x,y) \in \cX \times \cY$ such that $f(x,y) \neq t_\nu g(x,y)$, then $\mathbbm{1}\{ y \in \cC^*(x)\}= \mathbbm{1}\{ y \in \cC(x)\}$, \
% \todo{TD question: what is the purpose of this previous sentence? I am not sure what is meant by ``left term'', JB: is it more clear ?} 
which implies, using the definition of $\cC^*(x) = \{y :f(x,y) \geq t_\nu g(x,y) \}$, that $\mu$-almost everywhere, $\cC(x) \subseteq \cC^*(x)$ and $\cC^*(x) \subseteq \cC(x) \cup \{ y : f(x,y) = t_\nu g(x,y)\}$. It remains to show that the sets are equal almost everywhere by showing that the set 
\begin{equation*}
    D := \big\{ (x,y) : y \notin \cC(x) \text{ and }  y \in \cC^*(x) \big\}=\big\{ (x,y) : y \notin \cC(x) \text{ and } f(x,y) = t_\nu g(x,y)\big\}
    \end{equation*}
    is of measure $0$ under $\mu_f$. By definition of $\cC^*$,
\begin{align*}
    \nu &=  \int_{\cX\times \cY} \mathbbm{1}\{ y \in \cC^*(x)\} f(x,y) d\mu(x,y) \\
    &=  \int_{\cX\times \cY} \mathbbm{1}\{ y \in \cC(x)\} f(x,y) d\mu(x,y) +  \int_{\cX\times \cY} \mathbbm{1}\{ (x,y) \in D\} f(x,y) d\mu(x,y) \\
    &= \nu + \int_{D} f d\mu \,.
\end{align*}
Thus we must have $\mu_f(D) = 0$. Since $t_\nu g=f$ on $D$, we also get that $\mu_g(D) =0$.

The version of the Neyman-Pearson Lemma described in Remark~\ref{rem:strict} can be proven in the same way.
\end{proof}

\subsection{Proof of Proposition~\ref{prop:IPQ}}\label{sec:APPENDIX_proofprop3}
We present Proposition~\ref{prop:IPQ} in two parts: the lower bound on coverage and the extreme case achieving a coverage close to this lower bound.

 % Manually change proposition number
\begin{propositionthree}
    [restated, first part] \emph{Using calibration data $\{(X_i, Y_i)\}_{i=1}^n$,
    let $\qhat$ be the $\standard$ quantile threshold \eqref{eq:qhat_standard} computed for $\alpha \in [0,1]$ and, for $y \in \cY$, let $\qhat_y^{\CW}$ be the $\classwise$ quantile threshold \eqref{eq:qhat_classwise} computed for the same $\alpha$. If the test point $(X_{n+1}, Y_{n+1})$ is exchangeable with the calibration data $\{(X_i, Y_i)\}_{i=1}^n$, then the $\interp$ prediction sets satisfy
    \begin{align}
        \P(Y_{n+1} \in C_{\interp}(X_{n+1}) )\geq 1-2\alpha.
    \end{align}}
\end{propositionthree}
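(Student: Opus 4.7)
The plan is to exploit the convexity of the threshold construction to show that whenever both the $\standard$ set and the $\classwise$ set would cover the true label, so does the $\interp$ set. Then a simple union bound over the two miscoverage events yields the $1-2\alpha$ bound.

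First, I would observe that for any $\tau \in [0,1]$ and any $y \in \cY$,
\begin{align*}
    \qhat_y^{\IQ} = \tau \qhat_y^{\CW} + (1-\tau) \qhat \geq \min\bigl(\qhat,\, \qhat_y^{\CW}\bigr).
\end{align*}
Consequently, the event $\{s(X_{n+1}, Y_{n+1}) \leq \qhat\} \cap \{s(X_{n+1}, Y_{n+1}) \leq \qhat_{Y_{n+1}}^{\CW}\}$ is contained in the event $\{s(X_{n+1}, Y_{n+1}) \leq \qhat_{Y_{n+1}}^{\IQ}\} = \{Y_{n+1} \in \cC_{\interp}(X_{n+1})\}$. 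Equivalently, denoting by $\cC_{\standardshort}$ and $\cC_{\classwise}$ the prediction sets built from $\qhat$ and $(\qhat_y^{\CW})_{y \in \cY}$ respectively, we have the set inclusion
\begin{align*}
    \{Y_{n+1} \in \cC_{\standardshort}(X_{n+1})\} \cap \{Y_{n+1} \in \cC_{\classwise}(X_{n+1})\} \subseteq \{Y_{n+1} \in \cC_{\interp}(X_{n+1})\}.
\end{align*}

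Next I would invoke the marginal coverage guarantees already recalled in the preliminaries: under exchangeability of $(X_i,Y_i)_{i=1}^{n+1}$, the $\standard$ construction gives $\P(Y_{n+1} \in \cC_{\standardshort}(X_{n+1})) \geq 1-\alpha$, and $\classwise$ gives $\P(Y_{n+1} \in \cC_{\classwise}(X_{n+1}) \mid Y_{n+1}=y) \geq 1-\alpha$ for every $y$, which marginalizes to $\P(Y_{n+1} \in \cC_{\classwise}(X_{n+1})) \geq 1-\alpha$. A union bound on the complementary (miscoverage) events then yields
\begin{align*}
    \P(Y_{n+1} \in \cC_{\interp}(X_{n+1})) \geq 1 - \P(Y_{n+1} \notin \cC_{\standardshort}(X_{n+1})) - \P(Y_{n+1} \notin \cC_{\classwise}(X_{n+1})) \geq 1 - 2\alpha.
\end{align*}

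There is essentially no hard step here; the only thing to be careful about is the convention $\qhat_y^{\CW} = \infty$ when $n_y$ is too small for the relevant quantile to exist, together with the paper's rule of replacing it by $1$ before interpolating. Since the paper focuses on the softmax score $s_{\softmax} \in [0,1]$, the replacement $\qhat_y^{\CW} \leftarrow 1$ still satisfies $s(X_{n+1},Y_{n+1}) \leq \qhat_y^{\CW}$ almost surely on $\{Y_{n+1}=y\}$, so the classwise coverage event used in the union bound remains valid. I would note this explicitly and remark that $1-2\alpha$ is conservative, matching the empirical observation that $\interp$ tends to cover at level near $1-\alpha$.
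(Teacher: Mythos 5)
Your proposal is correct and follows essentially the same argument as the paper: the convex combination of thresholds dominates their minimum, so the intersection of the $\standard$ and $\classwise$ coverage events is contained in the $\interp$ coverage event, and a union bound with the two $1-\alpha$ marginal guarantees gives $1-2\alpha$. Your explicit treatment of the $\qhat_y^{\CW}=\infty \mapsto 1$ replacement is a small, valid addition that the paper's proof leaves implicit.
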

\begin{proof}
    Observe that for the test score $S_{n+1} = s(X_{n+1},Y_{n+1})$, we have the following inclusion of events:
\begin{align*}
    \left\{ S_{n+1} \leq  \tau \qhat_{Y_{n+1}}^{\CW} + (1-\tau) \qhat \right\} \supset  \left\{ S_{n+1} \leq  \min(\qhat_{Y_{n+1}}^{\CW}, \qhat) \right\}.
\end{align*}
Thus,
\begin{align*}
    \P\paren{ S_{n+1} \leq  \tau \qhat_{Y_{n+1}}^{\CW} + (1-\tau) \qhat } &\geq \P\paren{ S_{n+1} \leq  \min(\qhat_{Y_{n+1}}^{\CW}, \qhat)  } \\
    &= 1 - \P\paren{ S_{n+1} >  \qhat_{Y_{n+1}}^{\CW} \text{ or } S_{n+1} > \qhat  }\\
    & \geq 1 - \P\paren{ S_{n+1} >  \qhat_{Y_{n+1}}^{\CW}} -   \P\paren{ S_{n+1} > \qhat  },
\end{align*}
using a union bound for the last inequality. By the coverage guarantees for $\standard$ and $\classwise$, which follow from classical conformal prediction arguments using exchangeability (see, e.g., \citealp{vovk2005algorithmic}), we have $\P\paren{ S_{n+1} >  \qhat_{Y_{n+1}}^{\CW}} \leq \alpha$ and $\P\paren{ S_{n+1} >  \qhat} \leq \alpha$, which concludes the proof.
\end{proof}

The following result provides an example of a score distribution for which a coverage of $(1-\alpha)^2$ is attained. For simplicity, the example is constructed as if the distribution and the quantile are exactly known, i.e., in the case of an infinite calibration set ($n = \infty$).
\begin{propositionthree}[restated, second part]
   There exists a joint distribution $\P$ of tuple (score, label) over $[0,1] \times \cY$ with $|\cY|\geq 2$, such that for any $\tau \in (0,1)$, we have
    \begin{equation}
         \P \paren{ S \leq \tau q_{Y}^{\CW} + (1-\tau) q } = (1-\alpha)^2\,,
    \end{equation}
     where $q:= q(\P)$ is the (exact) $(1-\alpha)$-quantile of $S$ and for $y \in \cY$, $q_{y}^{\CW}:=q_{y}^{\CW}(\P)$ is the (exact) $(1-\alpha)$-quantile of $S|Y=y$.
\end{propositionthree}

\begin{proof}
Let us define explicitly the distributions $\P$ for which the upper bound is achieved. The distribution of $(S,Y) \sim \P$ is defined by:
\begin{gather*}
    Y \sim (1-\alpha) \delta_{y_0} + \alpha \delta_{y_1} \\
    S| (Y = y_0) \sim (1-\alpha)\delta_0 + \alpha \delta_{1/2}\,, \quad S| (Y = y_1) \sim \delta_1\,,
\end{gather*}
for $y_0\neq y_1 \in \cY$. Then $q = 1/2$, $q_{y_0}^{\CW} = 0$ and $q_{y_1}^{\CW} =1$. It follows that, for $\tau \in (0,1)$,
\begin{align*}
     \P \paren{ S \leq \tau q_{Y}^{\CW} + (1-\tau) q } &=  \P \paren{ S \leq \tau q_{y_0}^{\CW} + (1-\tau) q | Y = y_0}(1-\alpha) \\
     &+\P \paren{ S \leq \tau q_{y_1}^{\CW} + (1-\tau) q | Y = y_1}\alpha \\
     &=\P \paren{ S \leq  (1-\tau) /2 | Y = y_0}(1-\alpha) \\
     &+\P \paren{ S \leq \tau + (1-\tau) /2 | Y = y_1}\alpha.
\end{align*}
    By plugging in the exact distributions of $S|Y=y_0$ and $S|Y=y_1$, we get
    \begin{align*}
    \P \paren{ S \leq \tau q_{Y}^{\CW} + (1-\tau) q } =  \P \paren{ S=0 | Y = y_0}(1-\alpha)+ 0 = (1-\alpha)^2\,,
    \end{align*}
    where we have used that $(1-\tau) /2 <1/2$ and $\tau + (1-\tau) /2  <1$.
\end{proof}

\subsection{Raw $\fuzzy$ recovers $\standard$ and $\classwise$ CP}

We first more formally state the result of Proposition~\ref{prop:fuzzy_recovers_INFORMAL}. 
%\JB{The only addition is the assumption that for all classes $y$, $(n_y+1)(1-\alpha)$ is not an integer. This assumption seems only technical. }

\begin{proposition}\label{prop:fuzzy_recovers_FORMAL}
Let $\alpha \in [0,1]$ and assume $\alpha$ satisfies
$m\alpha \notin \mathbbm{N}$ for all $m \in [n+1]$.\footnote{If the desired miscoverage level $\alpha$ does not satisfy this assumption, note that there exists an $\tilde \alpha$ that is infinitesimally close to $\alpha$ that does satisfy the assumption. Stated formally, for any $\alpha \in [0,1]$ and any $\epsilon > 0$, there exists $\tilde \alpha$ such that $|\tilde \alpha - \alpha| \leq \epsilon$ and  $m\tilde \alpha \not \in \mathbbm{N}$ for all $m \in [n+1]$. This is due to the density of irrationals in real numbers and the fact that any irrational $\tilde \alpha$ satisfies the assumption.} Also assume $\Pi: \cY \to \Lambda$ maps each class to a unique point.

If, for all $u, v \in \Lambda$ with $u \neq v$, the kernel $h$ satisfies 
$$\text{$h_\sigma(u,v) \to 0$ as $\sigma \to 0$} \qquad \text{and} \qquad \text{$h_\sigma(u,v) \to h_\sigma(u,u)$ as $ \sigma \to \infty$},$$
% \[h_\sigma(u,v) \underset{\sigma \to 0}{\longrightarrow} 0 \quad \text{and} \quad h_\sigma(u,v) \underset{\sigma \to \infty}{\longrightarrow}  h_\sigma(u,u),\]
then, for sufficiently small $\sigma$, for any $x\in\cX$ and calibration set, we have 
\[\cC_{\rawfuzzy}(x) =\cC_{\classwise}(x),\]
and, for sufficiently large $\sigma$, for any $x\in \cX$ and calibration set, we have 
\[C_{\rawfuzzy}(x) = \cC_{\standard}(x).\]  
\end{proposition}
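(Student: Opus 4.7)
The plan is to analyze the two limiting regimes separately and in each case show that the normalized weights appearing in the label-weighted quantile \eqref{eq:label_weighted_qhat} converge to the normalized weights defining either the \classwise{} or the \standard{} quantile. The non-degeneracy assumption on $\alpha$ will then let me upgrade convergence of weights into \emph{exact equality} of the resulting quantile thresholds for all sufficiently small (resp.\ large) $\sigma$.

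\textbf{Small-$\sigma$ case.} Fix a calibration set and a class $y \in \cY$. Since $\Pi$ is injective, $\Pi(Y_i) \neq \Pi(y)$ for every $i$ with $Y_i \neq y$, so the hypothesis gives $h_\sigma(\Pi(Y_i), \Pi(y)) \to 0$ as $\sigma \to 0$. For $i \in \cI_y$ we have $h_\sigma(\Pi(Y_i),\Pi(y)) = h_\sigma(\Pi(y),\Pi(y))$, which is also the weight placed on the $\delta_\infty$ atom. Dividing by the normalizer $W_y$, the normalized weights on $\{i \in \cI_y\} \cup \{\infty\}$ therefore each tend to $1/(n_y+1)$, while the normalized weights on all $i \notin \cI_y$ tend to $0$. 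In particular, the weighted empirical measure in \eqref{eq:label_weighted_qhat} converges, in total variation on its finite support, to the \classwise{} empirical measure defining $\qhat_y^{\CW}$ in \eqref{eq:qhat_classwise}.

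\textbf{Large-$\sigma$ case.} The second kernel hypothesis implies $h_\sigma(\Pi(Y_i),\Pi(y)) / h_\sigma(\Pi(y),\Pi(y)) \to 1$ as $\sigma \to \infty$ for every $i$, so after normalization every weight (including the one on $\delta_\infty$) converges to $1/(n+1)$. Hence the weighted empirical measure converges to the \standard{} empirical measure defining $\qhat$ in \eqref{eq:qhat_standard}.

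\textbf{From weight convergence to exact quantile equality.} This is the step I expect to be the main obstacle, since the quantile operator is only upper semi-continuous in the weights, not continuous. The key observation is that for a discrete distribution supported on $m$ points with equal mass $1/m$, the $(1-\alpha)$ quantile is the $\lceil m(1-\alpha)\rceil$-th order statistic, and the assumption $m\alpha \notin \mathbb{N}$ (applied with $m = n_y+1$ in the small-$\sigma$ case and $m = n+1$ in the large-$\sigma$ case) guarantees that the cumulative mass strictly passes $1-\alpha$ at this atom, with a strict gap $\eta > 0$. A straightforward perturbation lemma (which I would state and prove in one short paragraph) then shows that any discrete measure on the same support whose weights are within $\eta/(2m)$ (in $\ell_1$) of the limiting equal weights has the \emph{same} $(1-\alpha)$ quantile. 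Combining this with the two convergence statements above yields a threshold $\sigma_0(y)$ below (resp.\ above) which $\qhat_y^{w_\fuzzy} = \qhat_y^{\CW}$ (resp.\ $= \qhat$).

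\textbf{Uniformization over $y$ and $x$.} Because $\cY$ is finite, taking $\sigma_0 = \min_{y \in \cY} \sigma_0(y)$ (or $\max$ in the large-$\sigma$ case) yields a single bandwidth that makes the full vector of thresholds agree simultaneously. Since both $\cC_{\rawfuzzy}(x)$ and the target set $\cC_{\classwise}(x)$ (resp.\ $\cC_{\standard}(x)$) are defined as $\{y : s(x,y) \le q_y\}$ for the same score function, equality of thresholds for every $y$ immediately yields equality of sets for every $x \in \cX$, completing the proof.
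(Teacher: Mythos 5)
Your proposal is correct and rests on the same mechanism as the paper's proof: the assumption $m\alpha \notin \mathbb{N}$ creates strict slack on both sides of the $1-\alpha$ level, so once the normalized cross-class weights are small enough (resp.\ the weight ratios are close enough to one) the weighted quantile coincides exactly with $\qhat_y^{\CW}$ (resp.\ $\qhat$), and finiteness of $\cY$ uniformizes over classes. The paper carries this out by directly verifying the two defining inequalities of the quantile in each regime with explicit weight bounds, whereas you package the same slack argument as a quantile-stability-under-$\ell_1$-perturbation lemma applied to both limits at once --- a reorganization that also absorbs the infinite-quantile case, which the paper handles in a separate remark.
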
 

\begin{proof}
Let us first recall that $\cC_{\rawfuzzy}(X) = \cC_{\mathrm{LW}}(X;w_{\fuzzy})$. 
In the rest of the proof, we will write $w_{\sigma}$ to refer to the weighting function $w_{\fuzzy}$ with bandwidth $\sigma>0$.
As described in \eqref{eq:label_weighted_qhat}, the weighting function $w_{\sigma}$ in the second argument of $\cC_{\mathrm{LW}}(\cdot)$ is used to obtain the class-specific thresholds $\qhat^{w_{\sigma}}_y$, and the label-weighted conformal prediction set is constructed as $\{y \in \cY : s(x,y) \leq  \qhat^{w_{\sigma}}_y\}$. For all $y \in \cY$, we will show that as $\sigma \to 0$, we eventually have $\qhat^{w_{\sigma}}_y = \qhat$, and as $\sigma \to \infty$, we eventually have $\qhat^{w_{\sigma}}_y = \qhat_y^{\CW}$, where $\qhat$ and $\qhat_y^{\CW}$ are the $\standard$ and $\classwise$ quantiles from \eqref{eq:qhat_standard} and \eqref{eq:qhat_classwise}.

To show that a scalar $x$ is equal to the $1-\alpha$ quantile of a distribution $Q$, we will show
\begin{itemize}
    \item \textit{Step 1:} $\P_{X \sim Q}\paren{ X\leq x} \geq 1-\alpha $.
    \item \textit{Step 2:} For any $t< x$, we have $\P_{X \sim Q}\paren{ X\leq t} < 1-\alpha$.
\end{itemize}

In the remainder of this proof, we will use two ways of indexing the calibration scores. First, for $i \in [n]$, we let $S_i := s(X_i,Y_i)$. Second, we index the calibration scores by class: for each class $y \in \cY$, we use $S_i^y$ for $i \in [n_y]$ to denote the calibration scores for class $y$. Before beginning, we also note that the assumption that $m\alpha \notin \mathbbm{N}$ for all $m \in [n+1]$ ensures $\lceil (n_y+1)(1-\alpha)\rceil > (n_y+1)(1-\alpha)> \lfloor (n_y+1)(1-\alpha)\rfloor$ for all $y\in \cY$. This will be used below.

\textsc{Convergence to $\cC_{\classwise}$ for small $\sigma$.} We will show that for sufficiently small $\sigma$, we have $\qhat_y^{\CW} = \qhat^{w_{\fuzzy}}_y$ for all $y \in \cY$, where
\begin{align*}
\qhat^{w_{\fuzzy}}_y &= \Quantile_{1-\alpha}\Big(\underbrace{ \frac{\sum_{i=1}^n w_{\sigma}(Y_i,y)\delta_{S_i} +w_{\sigma}(y,y)\delta_{\infty}}{ \sum_{i=1}^n w_{\sigma}(Y_i,y) + w_{\sigma}(y,y)}}_{:=Q_y}\Big). 
\end{align*}
In other words, $\qhat^{w_{\fuzzy}}_y$ is the $1-\alpha$ quantile of $Q_y$. We now apply the two-step procedure outlined above to show that $\qhat_y^{\CW}$ is equal to the $1-\alpha$ quantile of $Q_y$ when $\sigma$ is sufficiently small.
% where $Q_y$ denotes the weighted discrete distribution (depending on the calibration set).

\textit{Step 1.} We begin by observing
\begin{align}
    \P_{S \sim Q_y}\paren{ S \leq  \qhat_y^{\CW}}  &=\frac{\sum_{i=1}^n w_{\sigma}(Y_i,y)\mathbbm{1}\{S_i \leq \qhat_y^{\CW}\}}{ \sum_{i=1}^n w_{\sigma}(Y_i,y) + w_{\sigma}(y,y)} \notag\\
    &=\frac{w_{\sigma}(y,y)\sum_{i=1 }^{n_y}\mathbbm{1}\{S_i^y \leq \qhat_y^{\CW}\} +\sum_{z \in \cY \backslash \{y\} }w_{\sigma}(z,y) \sum_{i=1}^{n_z}\mathbbm{1}\{S_i^z \leq \qhat_y^{\CW}\}} { (n_y+1) w_{\sigma}(y,y)+ \sum_{ z \in \cY\backslash \{y\}} n_z w_{\sigma}(z,y)} \notag\\
    & \geq \frac{w_{\sigma}(y,y)\lceil (1-\alpha)(n_y+1)\rceil }{ (n_y+1) w_{\sigma}(y,y)+ \sum_{ z \in \cY\backslash \{y\}} n_z w_{\sigma}(z,y)}\,, \label{al:aux2}
\end{align}
where we have used for the last inequality the definition of the $\classwise$ quantile and have lower bounded the second term of the numerator by $0$. By assumption, we know that $w_{\sigma}(z,y) \to 0$ as $\sigma \to 0$ for all classes $z \neq y$. Thus, for $\sigma$ small enough, we have
\begin{equation*} 
\sum_{ z \in \cY\backslash \{y\}} n_z w_{\sigma}(z,y) \leq \frac{w_{\sigma}(y,y) }{1-\alpha}\Big( \lceil (n_y+1)(1-\alpha) \rceil - (n_y+1)(1-\alpha)\Big)\,. 
\end{equation*}
For such $\sigma$, \eqref{al:aux2} becomes
\[ \P_{S \sim Q_y}\paren{ S \leq  \qhat_y^{\CW}}  \geq 1 - \alpha,\]
which implies that $ \qhat_y^{\CW} \geq \qhat^{w_{\fuzzy}}_y$. 

\textit{Step 2.} We now show the converse inequality.
Given any $t < \qhat_y^{\CW}$, we want to show $\P_{S \sim Q_y}\paren{ S \leq  t} < 1- \alpha$. Similar to above, we begin by writing
\begin{align}
\P_{S \sim Q_y}\paren{ S \leq  t } &=\frac{w_{\sigma}(y,y)\sum_{i=1 }^{n_y}\mathbbm{1}\{S_i^y \leq t \} +\sum_{z \in \cY \backslash \{y\} }w_{\sigma}(z,y) \sum_{i=1}^{n_z}\mathbbm{1}\{S_i^z \leq t\} }{ (n_y+1) w_{\sigma}(y,y)+ \sum_{ z \in \cY\backslash \{y\}} n_z w_{\sigma}(z,y)} \nonumber\\
%& \leq \frac{w_{\sigma}(y,y) \lfloor (n_y+1)(1-\alpha)\rfloor +\sum_{z \in \cY \backslash \{y\} } n_y w_{\sigma}(z,y)  }{ (n_y+1) w_{\sigma}(y,y)+ \sum_{ z \in \cY\backslash \{y\}} n_z w_{\sigma}(z,y)}\,. \label{al:aux1}
& \leq \frac{w_{\sigma}(y,y) \lfloor (n_y+1)(1-\alpha)\rfloor +\sum_{z \in \cY \backslash \{y\} } n_z w_{\sigma}(z,y)  }{ (n_y+1) w_{\sigma}(y,y)}\,. \label{al:aux1}
\end{align}
The inequality is obtained by \emph{(i)} removing a positive term from the denominator and \emph{(ii)}  observing that if $t < \qhat_y^{\CW}$, then at most $\lfloor (n_y+1)(1-\alpha)\rfloor$ scores of class $y$ are smaller or equal to $t$. Otherwise, $t$ would be higher than the class-conditional quantile $\qhat_y^{\CW}$. By assumption, we know that $w_{\sigma}(z,y) \to 0$ as $\sigma \to 0$ for all classes $z \neq y$. Thus, for $\sigma$ small enough, we have
\begin{equation*}
    \sum_{z \in \cY \backslash \{y\} } n_z w_{\sigma}(z,y)  < w_{\sigma}(y,y)\Big( (1-\alpha)(n_y+1)- \lfloor (n_y+1)(1-\alpha)\rfloor\Big).
\end{equation*}

For such $\sigma$, \eqref{al:aux1} becomes
\begin{align*}
 P_{S \sim Q_y}\paren{ S \leq  t }  <1-\alpha.
\end{align*}
Thus, $\qhat_y^{\CW} \leq \qhat^{w_{\fuzzy}}_y $. Together, Step 1 and Step 2 tell us that for $\sigma$ small enough, we have $\qhat^{w_{\fuzzy}}_y = \qhat_y^{\CW} $,  which concludes the proof for the convergence of $\cC_{\fuzzy}$ to $\cC_{\classwise}$.

\textsc{Convergence to $\cC_{\standard}$ for large $\sigma$.}  Recall that we assume as $\sigma \to \infty$, we have $w_{\sigma}(z,y) \to w_{\sigma}(y,y)$ for all $y,z \in \cY$. Then for any $\varepsilon >0$, there exists $\sigma$ large enough such that
\begin{equation}\label{eq:ratio_largesigma}
    \max_{ y,z \in \cY} \Big| \frac{w_{\sigma}(z,y)}{ w_{\sigma}(y,y)} -1 \Big| \leq \varepsilon
\end{equation}

\textit{Step 1.} As in the previous case, we write
\begin{align*}
    \P_{S \sim Q_y}\paren{ S \leq  \qhat}  &=\frac{\sum_{i=1}^n w_{\sigma}(Y_i,y)\mathbbm{1}\{S_i \leq \qhat\}}{ \sum_{i=1}^n w_{\sigma}(Y_i,y) + w_{\sigma}(y,y)}\\
&=\frac{\sum_{z \in \cY  } w_{\sigma}(z,y) \sum_{i=1}^{n_z}\mathbbm{1}\{S_i^z \leq \qhat\} }{ \sum_{ z \in \cY} n_z w_{\sigma}(z,y) + w_{\sigma}(y,y)}\\
& \geq \frac{w_{\sigma}(y,y)(1-\varepsilon) \sum_{i=1 }^{n}\mathbbm{1}\{S_i \leq \qhat\}}{ (n(1+\varepsilon)+1) w_{\sigma}(y,y)} \\
& \geq  \frac{1-\varepsilon}{1+\varepsilon} \frac{ \lceil (n+1)(1-\alpha)\rceil}{ n+1} \,.
\end{align*}
The first inequality is obtained by lower bounding $w_{\sigma}(z,y)$ by $(1-\varepsilon)w_{\sigma}(y,y)$ in the numerator and upper bounding $w_{\sigma}(z,y)$ by $(1+\varepsilon)w_{\sigma}(y,y)$ in the denominator.
The last inequality comes from the fact that $\qhat$ is the empirical quantile of the scores.
By choosing $\varepsilon$ small enough, specifically, 
\[\varepsilon \leq \frac{\lceil (n+1)(1-\alpha )\rceil - (n+1)(1-\alpha)}{\lceil (n+1)(1-\alpha)\rceil + (n+1)(1-\alpha)},\]
we get that for $\sigma$ large enough, $ \P_{S \sim Q_y}\paren{ S \leq  \qhat}  \geq 1- \alpha$. Thus for such $\sigma$, we have $\qhat \geq \qhat^{w_{\fuzzy}}_y$ for all $y \in \cY$.

\textit{Step 2.} We now show the converse inequality. Given any $t < \qhat$, we want to show $\P_{S \sim Q_y}\paren{ S \leq  t} < 1- \alpha$. We start by observing
\begin{align}
\P_{S \sim Q_y}\paren{ S \leq  t } &=\frac{\sum_{z \in \cY }w_{\sigma}(z,y) \sum_{i=1}^{n_z}\mathbbm{1}\{S_i^z \leq t\}} { \sum_{ z \in \cY} n_z w_{\sigma}(z,y) + w_{\sigma}(y,y)} \nonumber\\
& \leq \frac{1+\varepsilon}{1-\varepsilon}\frac{\sum_{i=1}^{n}\mathbbm{1}\{S_i \leq t\}}{ n+1} \leq \frac{1+\varepsilon}{1-\varepsilon}\frac{\lfloor (n+1)(1-\alpha)\rfloor}{ n+1} \,. \nonumber
\end{align}
We have used \eqref{eq:ratio_largesigma} again for the first inequality. For the last inequality, since $t < \qhat$, there are at most $\lfloor (n+1)(1-\alpha)\rfloor$ scores which are smaller or equal $t$. Otherwise, $t$ would be higher than $\qhat$. Choosing $\varepsilon$ small enough such that
\[ \frac{1+\varepsilon}{1-\varepsilon}\frac{\lfloor (n+1)(1-\alpha)\rfloor}{ n+1} < 1- \alpha\]
yields $\P_{S \sim Q_y}\paren{ S \leq  t } \leq 1-\alpha$, 
which implies that $\qhat \leq \qhat^{w_{\fuzzy}}_y$. Combining the results from Step 1 and Step 2, we get that $\qhat^{w_{\fuzzy}}_y = \qhat$ for $\sigma$ large enough, so $\cC_\fuzzy(X) \equiv \cC_\standard(X)$.

\textsc{Infinite quantiles case.} 
The above proof assumes that the $\standard$ and $\classwise$ quantiles are not infinite. In the infinite case where $\qhat = \infty$ or $\qhat_y^{\CW} = \infty$, we directly get Step 1 as then $ \P_{S \sim Q_y}\paren{ S \leq  q} =1\geq 1- \alpha$ for $q \in \{\qhat, \qhat_y^{\CW}\}$. The rest of the proof remains similar to before.

\end{proof}

\subsection{Reconformalization of Raw $\fuzzy$}

\begin{proof}[Proof of \Cref{prop:reconformalization_Dhold}]
%\jscomment{ajouter la def de $w^y_y$ ici}
    This follows from the exchangeability of the held-out set $\cD_{\mathrm{hold}}$ and test point $(X_{n+1}, Y_{n+1})$. We first link the set $\cC_{\rawfuzzy}$ to the score function $\tilde{s}$. For $x\in \cX$, $y \in \cY$, let $\omega_y^y = w_{\fuzzy}(y, y) \big/\big(\sum_{i=1}^n w_{\fuzzy}(Y_i, y) + w_{\fuzzy}(y,y)\big)$, then:
    \begin{align}
        y \in \cC_{\rawfuzzy}(x) &\Longleftrightarrow s(x,y) \leq \Quantile_{1-\alpha}\paren{ \sum_{i=1}^n w_i^y \delta_{s(X_i,Y_i)} + w^y_y \delta_\infty} \notag \\
        &\Longleftrightarrow \sum_{i=1}^n w_i^y\mathbbm{1}\{s(X_i, Y_i)<s(x,y)\} < 1- \alpha \Longleftrightarrow \tilde{s}(x,y) < 1- \alpha\,. \label{al:proof_prop3}
    \end{align}
    By exchangeability, the set $\big\{ y : \tilde{s}(x,y) \leq \Quantile_{1-\alpha}\big( \frac{1}{m+1}\sum_{i=1}^m \delta_{\tilde{s}(X_i^\cH,Y_i^\cH)} + \frac{1}{m+1} \delta_\infty \big) \big\}$ has a marginal coverage of $1-\alpha$.
  %  \jbfcomment{\begin{remark}
  %      A careful reader can notice that between the set considered above and Eq.\ref{al:proof_prop3} a strict inequality has been replaced by a large one. This is formally valid if $\alpha \notin \{ \sum \}$
  %  \end{remark}}
\end{proof}

We now briefly present a possible adaptation of the reconformalization step for $\fuzzy$ CP that avoids the need for an additional dataset $\cD_{\mathrm{hold}}$.
% , as required by Proposition~\ref{prop:reconformalization_Dhold}. 
To do so, we adapt the score $\tilde{s}$ from \eqref{eq:def_tildescore} to recalibrate the weighted quantile with the same dataset $\cD_{\mathrm{cal}}$ using full conformal techniques \citep{vovk2005algorithmic}.
\begin{proposition}[Reconformalization with $\cD_{\mathrm{cal}}$]\label{prop:reconf_full_fuzzy}
Let us define for $(x,y)$ and a finite subset $\cD$ of $\cX\times \cY$ the score
%\[\tilde{s}_{full}((x,y),(x_i,y_i)_{i=1}^{n+1}) :=  \frac{\sum_{i=1}^{n+1} \omega_{\fuzzy}(y,y_i)\mathbbm{1}\{s(x,y) \leq s(x_i, y_i)\} }{\sum_{i=1}^{n+1} \omega_{\fuzzy}(y,y_i)  }\,, \]
\[\tilde{s}_{\mathrm{full}}\big((x,y),\cD\big) :=  \frac{\sum_{(x',y')\in \cD} w_{\fuzzy}(y',y)\mathbbm{1}\{ s(x', y') < s(x,y)\}} {\sum_{(x',y')\in \cD} w_{\fuzzy}(y',y)  }\,. \]
%Then, for $\cD_{\mathrm{cal}}=\{(X_i, Y_i)\}_{i=1}^n$ a calibration set, let $S_i(x,y) = s_{full}\left((X_i,Y_i), (X_j,Y_j)_{j=1}^n \cup (x,y)\right) = s_{full}\left((X_i,Y_i), \cD_{\mathrm{cal}} \cup (x,y)\right)$ , then the set:
%\[ \cC^{full} = \left\{ y : s(x,y) \geq  S_{\lfloor\alpha(n+1)\rfloor )}(x,y) \right\}\]
%is marginally valid.
Given a calibration set $\cD_{\mathrm{cal}}=\{(X_i, Y_i)\}_{i=1}^n$, let $S_i(x,y)= \tilde{s}_{\mathrm{full}}\big((X_i,Y_i), \cD_{\mathrm{cal}} \cup (x,y)\big)$. Then the set
\[ \cC_{\textsc{Full}}(x) = \bigg\{\!y \in \cY : \tilde{s}_{\mathrm{full}}\big((x,y), \cD_{\mathrm{cal}}\!\cup\! (x,y)\big)  \leq \Quantile_{1-\alpha}\!\Big(\!\frac{1}{n+1} \sum_{i=1}^{n+1} \delta_{S_i(x,y)} + \frac{1}{n+1}\delta_\infty\!\Big)\! \bigg\}\]
has marginal coverage of $1-\alpha$ for the test point, as long as it is exchangeable with $\cD_{\mathrm{cal}}$.
\end{proposition}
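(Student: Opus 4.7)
The plan is to follow the standard full conformal prediction argument (see, e.g., \citealp{vovk2005algorithmic}), specialized to the fuzzy weighted score $\tilde{s}_{\mathrm{full}}$. The structural observation that powers the whole proof is that $\tilde{s}_{\mathrm{full}}((x,y), \cD)$ is symmetric in its second argument: the defining sum ranges over $(x', y') \in \cD$ with weights $w_{\fuzzy}(y', y)$ and indicators depending only on the individual elements, so the value depends on $\cD$ only through the multiset of its points.

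First, I would form the augmented dataset $\cD^+ := \cD_{\mathrm{cal}} \cup \{(X_{n+1}, Y_{n+1})\}$ using the true test point, and define $Z_i := \tilde{s}_{\mathrm{full}}((X_i, Y_i), \cD^+)$ for $i \in [n+1]$. Because $\tilde{s}_{\mathrm{full}}$ treats $\cD^+$ symmetrically, $Z_i$ can be written as a fixed symmetric function of the augmented sample evaluated at coordinate $i$. Combined with the assumed exchangeability of $(X_1, Y_1), \ldots, (X_{n+1}, Y_{n+1})$, this implies that $Z_1, \ldots, Z_{n+1}$ are exchangeable: for any permutation $\pi$ of $[n+1]$, applying $\pi$ to the indices leaves $\cD^+$ invariant as a multiset, so $(Z_{\pi(1)}, \ldots, Z_{\pi(n+1)})$ has the same joint law as $(Z_1, \ldots, Z_{n+1})$.

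Next, I would check that the event $\{Y_{n+1} \in \cC_{\textsc{Full}}(X_{n+1})\}$ is exactly a classical conformal p-value event. By construction, when the candidate label is the true one, $S_i(X_{n+1}, Y_{n+1}) = Z_i$ for $i \in [n]$, and $\tilde{s}_{\mathrm{full}}((X_{n+1}, Y_{n+1}), \cD^+) = Z_{n+1}$. Hence inclusion of $Y_{n+1}$ amounts to
\begin{align*}
    Z_{n+1} \;\leq\; \Quantile_{1-\alpha}\!\Bigl(\tfrac{1}{n+1}\sum_{i=1}^{n+1} \delta_{Z_i} + \tfrac{1}{n+1}\delta_{\infty}\Bigr),
\end{align*}
and the standard exchangeable-quantile lemma yields $\P(Y_{n+1} \in \cC_{\textsc{Full}}(X_{n+1})) \geq 1-\alpha$.

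The main subtlety I would spend care on is the exchangeability step: each $Z_i$ depends on the full augmented sample (including itself), which is unlike the split conformal case where scores depend only on a single data point plus a fixed model. The resolution is precisely the multiset symmetry of $\tilde{s}_{\mathrm{full}}$, which ensures that permuting the underlying data permutes the $Z_i$ accordingly. Potential ties among the $Z_i$ are absorbed by the $\delta_{\infty}$ atom and the rounding inside $\Quantile_{1-\alpha}$, giving the usual $\lceil (n+1)(1-\alpha)\rceil/(n+1) \geq 1-\alpha$ bound, so no separate tie-breaking argument is required.
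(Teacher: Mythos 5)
Your proof is correct and follows essentially the same route as the paper's (much terser) argument: both reduce the claim to exchangeability of the scores computed on the augmented sample $\cD_{\mathrm{cal}} \cup \{(X_{n+1},Y_{n+1})\}$ and then invoke the standard full-conformal quantile lemma. Your explicit observation that $\tilde{s}_{\mathrm{full}}$ depends on its second argument only through the multiset of points is precisely the symmetry the paper leaves implicit, so your write-up is a more detailed version of the same proof.
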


\begin{proof} 
    As long as the test point $(X_{n+1},Y_{n+1})$ and the points in $\cD_{\mathrm{cal}}$ are exchangeable, the scores $S_i(X_{n+1},Y_{n+1})$ and $S_{n+1}(X_{n+1},Y_{n+1}) := \tilde{s}_{\mathrm{full}}\big((X_{n+1},Y_{n+1}), \cD_{\mathrm{cal}} \cup (X_{n+1},Y_{n+1})\big)$ are also exchangeable, which yields the marginal validity of $\cC_{\textsc{Full}}$. 
    % For a new point $(X_{n+1},Y_{n+1})$, the scores $S_i(X_{n+1},Y_{n+1})$ and $S_{n+1}(X_{n+1},Y_{n+1}) := s_{\mathrm{full}}\big((X_{n+1},Y_{n+1}), \cD_{\mathrm{cal}} \cup (X_{n+1},Y_{n+1})\big)$ are exchangeable, which gives the marginal validity of the set $\cC_{\textsc{Full}}$.
\end{proof}
In cases where the amount of calibration data is limited, this full-conformal adaptation allows us to avoid data-splitting. It is also computationally feasible in such scenarios, as the burden of recalculating the ensemble thresholds for each test point is tolerable when $\cD_{\mathrm{cal}}$ is small.

% -----------------------------------------------
%                 Experiment Details
% -----------------------------------------------

\section{Experiment Details} \label{sec:experiment_details_APPENDIX}

% \subsection{Pseudo-code}

\subsection{Dataset preparation}
\label{sec:dataset_preparation}

\paragraph{Overview.} Our dataset preparation has two steps. We first identify or create \texttt{train}/\texttt{val}/\texttt{test} splits to replicate the standard machine learning pipeline. After obtaining \texttt{train}/\texttt{val}/\texttt{test} splits, we further split \texttt{val} into two subsets: the first for selecting the number of epochs (containing a random 30\% of \texttt{val}) and the second for use as the calibration set $\cD_{\mathrm{cal}}$ (containing the remaining 70\%). This split is necessary to ensure that $\cD_{\mathrm{cal}}$ is exchangeable with the test points; if we reuse \texttt{val} for both epoch selection and calibration, this would violate exchangeability. 

We now describe the \texttt{train}/\texttt{val}/\texttt{test} splits for each of the datasets we use. 

\emph{\tinyplantnet.} We use the provided \texttt{train}/\texttt{val}/\texttt{test} splits of \tinyplantnet.\footnote{\url{https://github.com/plantnet/PlantNet-300K}} The creation of the dataset is described in \cite{Garcin_Servajean_Joly_Salmon22}.

\emph{iNaturalist-2018.} We use the 2018 version of iNaturalist\footnote{\url{https://github.com/visipedia/inat_comp/tree/master/2018}} because it has the most ``natural'' class distribution (\ie not truncated). Unfortunately, the provided \texttt{train}/\texttt{val}/ \texttt{test} splits are not well-suited to applying and evaluating conformal prediction methods for two reasons: First, the provided \texttt{test} set is not labeled and cannot be used for evaluation purposes. Second, the provided \texttt{val} set is class-balanced (with three examples per class), which means that it is not a representative sample of the test distribution. If we used this validation set as our conformal calibration set, this would violate the key assumption that the calibration points are exchangeable with the test points. To remedy these two problems, we create our own \texttt{train}/\texttt{val}/ \texttt{test} splits where all splits have the same distribution. Specifically, we aggregate the \texttt{train} and \texttt{val} data, then for each class randomly select 80\% of examples to put in our \texttt{train}, and then put 10\% each in our \texttt{val} and \texttt{test}.

% We create a test set by splitting off part of part of the training set since the provided test set is not labeled and cannot be used for evaluation purposes. Specifically, we randomly select 10\% of the training examples for each class and put them in our test set. 

\emph{Truncated versions.} A key challenge of the long-tailed datasets we work with is that their test sets are also long-tailed, which hinders reliable class-conditional evaluation for rare classes.
To address this, we create truncated versions of the datasets by removing classes with fewer than 101 examples, resulting in 330 \tinyplantnet classes and 857 iNaturalist classes. For each remaining class, we assign 100 examples to \texttt{test}, then divide the remainder 90\%/10\% into \texttt{train}/\texttt{val}. For rare classes, we prioritize training set allocation, which may lead to having zero calibration examples. Specifically, to obtain \emph{\tinyplantnet-truncated}, we apply the above procedure to the combined \texttt{train}, \texttt{val}, and \texttt{test} splits of \tinyplantnet. To obtain
\emph{iNaturalist-2018-truncated}, we apply the procedure to the combined \texttt{train} and \texttt{val} splits of the original iNaturalist dataset.\footnote{For truncated datasets, we must compute marginal metrics differently when doing evaluation: due to the uniform class distribution
of the \texttt{test} splits of these datasets, a simple average over all test examples does not reflect marginal performance on the true distribution.
We estimate the
marginal coverage as $\sum_{y \in \cY} \hat{p}(y) \hat{c}_y$ where $\hat{p}(y)$ is estimated using \texttt{train} and $\hat{c}_y$ is the empirical class-conditional coverage as defined in \eqref{eq:empirical_coverage}.
A similar weighting procedure is used to estimate the average set size.}

\paragraph{Classes with zero calibration examples.} 
Due to randomness in the data splitting, some classes end up with no calibration examples: 105 (out of 1081) in \tinyplantnet, 606 (out of 8142) in iNaturalist-2018, 12 (out of 330) in truncated \tinyplantnet, and 47 (out of 857) in truncated iNaturalist-2018.  

\subsection{Model training}

After following the procedures described in \Cref{sec:dataset_preparation} to obtain \texttt{train}, \texttt{val}, and \texttt{test} splits of each dataset, we train a ResNet-50 \citep{he2016deep} initialized to ImageNet pretrained weights for 20 epochs using a learning rate of 0.0001. 
We use \texttt{train} for training the neural network and the randomly selected 30\% of \texttt{val} for computing the validation accuracy. We then select the epoch number that results in the highest validation accuracy (up to 20 epochs).
% We follow the standard procedure of using \texttt{train} for training the neural network and \texttt{val} for selecting which epoch's weights to use based on the validation accuracy. We then use \texttt{val} as the conformal calibration set. Although this double usage of \texttt{val} violates the exchangeability assumption in theory, we find that empirically this violation does not seem to make a difference. For example, the $\qhat$ of $\standard$ CP computed using the \texttt{val} set is very close to the value computed using the \texttt{test} set: On \tinyplantnet, the \texttt{val} $\qhat$ is 0.923 and the \texttt{test} $\qhat$ is 0.925. Another reason we evaluate our methods in this way is that this is representative of how practitioners would likely apply conformal prediction methods in practice (\ie they would not set aside an additional holdout set for the purpose of calibration). 

% \textbf{PlantNet-300k}. The accuracy of the validation set peaks after roughly six epochs (the accuracies on the first 11 epochs were 0.759, 0.780, 0.790, 0.795, 0.795, 0.798, 0.797, 0.795, 0.791, 0.791, 0.786) 
 
% \textbf{PlantNet-300k-trunc}. The accuracy of the validation set peaks after roughly six epochs (the accuracies on the first 11 epochs were 0.833, 0.837, 0.849, 0.850, 0.850, 0.851, 0.851, 0.849, 0.850, 0.846, 0.841) 

\subsection{Computational resources}
The system we use is equipped with a 4x Intel Xeon Gold 6142 (64 cores/128 threads total @ 2.6-3.7 GHz, 88MB L3 cache) while the GPUs are 2x NVIDIA A10 (24GB VRAM each) and 2x NVIDIA RTX 2080 Ti (11GB VRAM each), for a total of 70GB GPU VRAM.

% -----------------------------------------------
%           Additional Experimental Results
% -----------------------------------------------
\section{Additional Experimental Results}
\label{sec:additional_experiments_APPENDIX}

\paragraph{Overview.}
In this section, we extend the experimental results from the main paper in the following ways:
\begin{enumerate}
    \item 
    We include additional methods: 
    % To ensure that we are evaluating fairly against existing procedures, we include an additional baseline method called \textsc{Exact Classwise}, a randomized version of classwise conformal that is designed to achieve \emph{exact} (rather than \emph{at least}) $1-\alpha$ coverage, as described in Appendix C.3 of \cite{ding2023class}. 
    To ensure that we are evaluating fairly against existing procedures, we include two additional baseline methods. The first is called \textsc{Exact Classwise}, a randomized version of classwise conformal that is designed to achieve \emph{exact} (rather than \emph{at least}) $1-\alpha$ coverage, as described in Appendix C.3 of \cite{ding2023class}. The second is rank calibrated class-conditional conformal prediction (RC3P), proposed by \cite{shi2024conformal}.
    We also include the Raw $\fuzzy$ and $\fuzzy$ methods we propose in Appendix \ref{sec:fuzzy_APPENDIX}.
    \item We evaluate on the truncated versions of \tinyplantnet and iNaturalist-2018, as described in \Cref{sec:dataset_preparation}.
\end{enumerate}

In \Cref{sec:main_pareto_APPENDIX}, we provide additional results to complement Figure \ref{fig:pareto} in the main paper. In \Cref{sec:additional_methods_APPENDIX}, we provide results for additional methods. In \Cref{sec:focal_loss_APPENDIX}, we recreate Figure \ref{fig:pareto} using a ResNet-50 trained using focal loss rather than cross-entropy loss. In \Cref{subsec:appendix_plantnet}, we do a case study of the implications of our methods for plant identification. In \Cref{sec:qhats_APPENDIX}, we visualize the score thresholds we obtain from our methods to demonstrate how they interpolate between the $\standard$ and $\classwise$ thresholds.

\subsection{Results for main methods} \label{sec:main_pareto_APPENDIX}

To aid interpretation of Figure \ref{fig:pareto}, we extract the metric values for select methods for $\alpha=0.1$ and present them in \Cref{tab:plantnet_pareto_metics} for \tinyplantnet and \Cref{tab:inaturalist_pareto_metics} for iNaturalist-2018. We also include a comprehensive visualization of all methods on all datasets, including the truncated versions, in Figure \ref{fig:pareto_ALL}. We remark that the two existing conformal prediction methods for many-class classification perform poorly in the long-tailed setting. The $\textsc{Clustered}$ CP method from \cite{ding2023class} defaults to the score threshold from $\standard$ CP for classes that have insufficient examples to confidently assign to a cluster, resulting in poor class-conditional coverage.
Meanwhile, the RC3P method from \cite{shi2024conformal} suffers from a similar data splitting problem as $\classwise$ CP, as it strongly relies on the estimation of the conditional score quantile, which can be very bad for classes with few calibration examples. Furthermore, it requires the estimation of the conditional top-k error for all classes, which is also hard when classes have few calibration examples. Moreover, as these quantities are estimated on the calibration set, the exchangeability assumption is violated which explains why the marginal coverage is sometimes below $1-\alpha$. The methods we propose strictly dominate RC3P; for a given average set size, we get much better class-conditional coverage.

\begin{table}[h!] 
    \centering
    \small
    \setlength{\tabcolsep}{4pt}
    \renewcommand{\arraystretch}{1.15}
    \caption{Set size and coverage metrics for \tinyplantnet using the $s_{\softmax}$ score and $\alpha=0.1$. The arrows next to the coverage metric names indicate whether it is better for the metric to be smaller ($\downarrow$) or larger ($\uparrow$).}
    \label{tab:plantnet_pareto_metics}
    \begin{tabular}{l@{\hspace{-5pt}}rrrrr}
        \toprule
        \textbf{Method} & \textbf{FracBelow50\%} $\downarrow$ &  \textbf{UnderCovGap} $\downarrow$ &  \textbf{MacroCov} $\uparrow$ &  {\makecell{ \textbf{MarginalCov} \\ (desired $\geq$ 0.9)}} &  \textbf{Avg. set size $\downarrow$} \\
        \midrule
        $\standard$ & 0.389 & 0.398 & 0.525 & 0.907 & 1.57 \\
        $\classwise$ & 0.000 & 0.006 & 0.976 & 0.912 & 780.00 \\
        \textsc{Clustered} & 0.398 & 0.406 & 0.513 & 0.882 & 1.57 \\
        $\standard$ w. $\PAS$ & 0.167 & 0.193 & 0.755 & 0.902 & 2.57 \\
        $\interp$ ($\tau=0.9$) & 0.248 & 0.265 & 0.671 & 0.901 & 2.24 \\
        $\interp$ ($\tau=0.99$) & 0.151 & 0.168 & 0.785 & 0.905 & 3.95 \\
        $\interp$ ($\tau=0.999$) & 0.098 & 0.109 & 0.856 & 0.908 & 7.58 \\
        \bottomrule
    \end{tabular}
\end{table}

\begin{table}[h!] 
    \centering
    \small
    \setlength{\tabcolsep}{4pt}
    \renewcommand{\arraystretch}{1.15}
    \caption{Set size and coverage metrics for iNaturalist-2018 using the $s_{\softmax}$ score and $\alpha=0.1$.}
    \label{tab:inaturalist_pareto_metics}
    \begin{tabular}{l@{\hspace{-5pt}}rrrrr}
        \toprule
        \textbf{Method} & \textbf{FracBelow50\%} $\downarrow$ &  \textbf{UnderCovGap} $\downarrow$ &  \textbf{MacroCov} $\uparrow$ &  {\makecell{ \textbf{MarginalCov} \\ (desired $\geq$ 0.9)}} &  \textbf{Avg. set size $\downarrow$} \\
        \midrule
        $\standard$ & 0.058 & 0.116 & 0.849 & 0.902 & 10.9 \\
        $\classwise$ & 0.000 & 0.002 & 0.992 & 0.954 & 7430.0 \\
        \textsc{Clustered} & 0.059& 0.118 & 0.845 & 0.880 & 8.4 \\
        $\standard$ w. $\PAS$ & 0.042 & 0.093 & 0.875 & 0.900 & 11.3 \\
        $\interp$ ($\tau=0.9$) & 0.034 & 0.081 & 0.891 & 0.907 & 16.8 \\
        $\interp$ ($\tau=0.99$) & 0.020 & 0.055 & 0.924 & 0.923 & 31.1 \\
        $\interp$ ($\tau=0.999$) & 0.010 & 0.037 & 0.947 & 0.934 & 55.8 \\
        \bottomrule
    \end{tabular}
\end{table}

\begin{figure}[tb]
  \centering

  % Row 1
  \begin{subfigure}[b]{\textwidth}
    \centering
    \includegraphics[width=\textwidth]{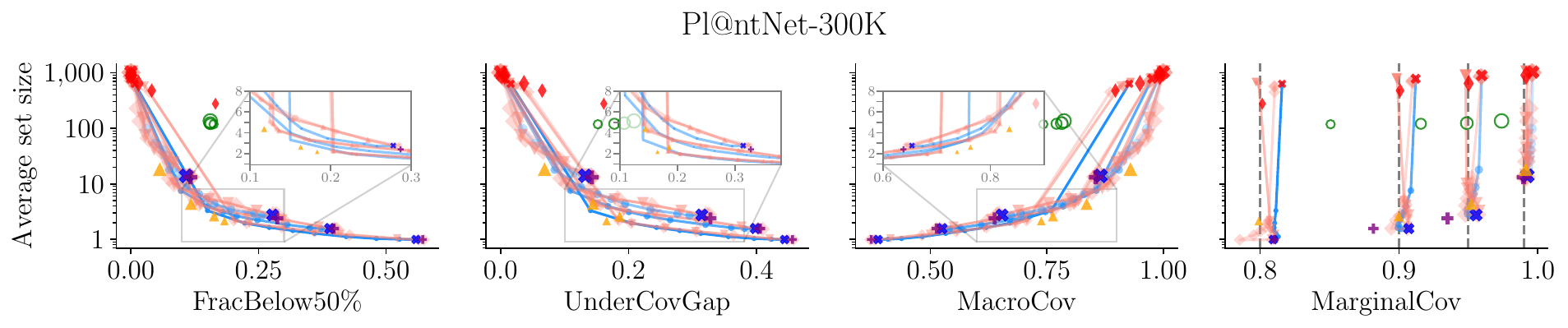}
  \end{subfigure}

  % \vspace{1em}

  % Row 2ALL_metrics_inaturalist_softmax_pareto_INSET_LEGEND_ONLY_js
  \begin{subfigure}[b]{\textwidth}
    \centering
    \includegraphics[width=\textwidth]{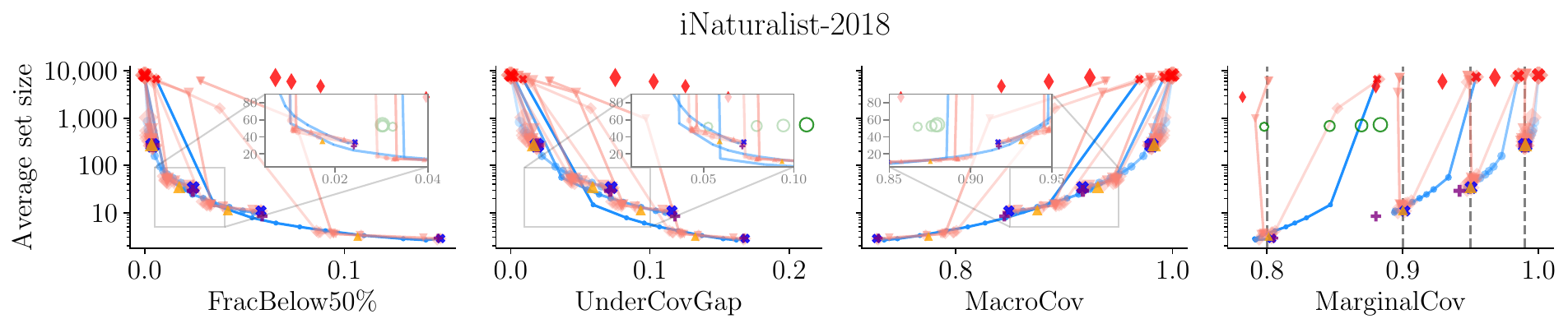}
  \end{subfigure}
% 
  % \vspace{1em}

  % Row 3
  \begin{subfigure}[b]{\textwidth}
    \centering
    \includegraphics[width=\textwidth]{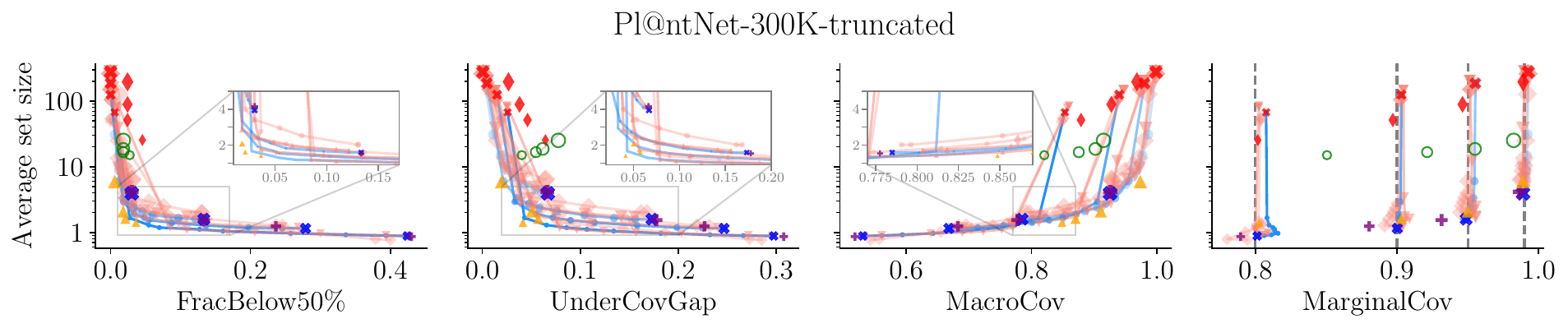}
  \end{subfigure}

  % \vspace{1em}

  % Row 4
  \begin{subfigure}[b]{\textwidth}
    \centering
    \includegraphics[width=\textwidth]{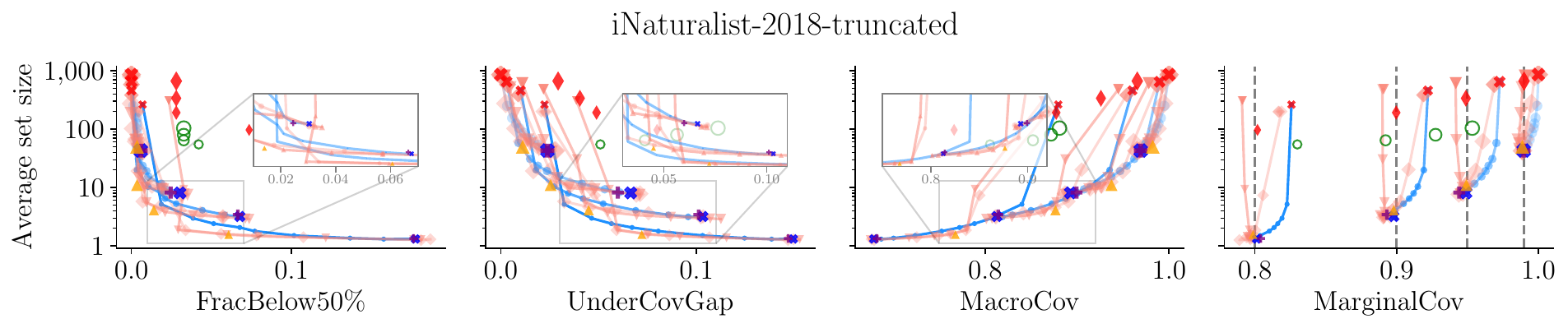}
  \end{subfigure}

 \begin{subfigure}[b]{\textwidth}
    \centering
    \includegraphics[width=\textwidth]{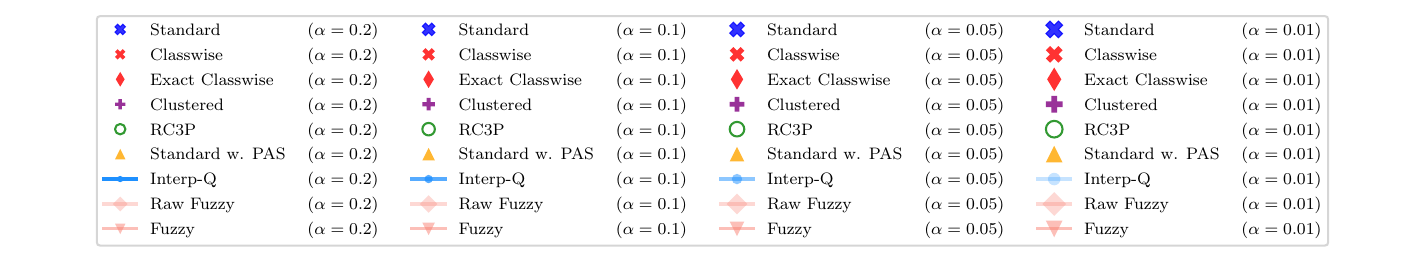}
  \end{subfigure}

  % Overall caption 
  % \vspace{-0.5cm}
  \caption{Average set size vs. FracBelow50\%, UnderCovGap, MacroCov, and MarginalCov for various methods on full and truncated datasets. For methods with tunable parameters, lines are used to trace out the trade-off curve achieved by running the method with different values for a fixed $\alpha$. For FracBelow50\% and UnderCovGap, it is better to be closer to the bottom left. For MacroCov, the bottom right is better. For MarginalCov, we want to be at the bottom and to the right of the dotted line at $1-\alpha$ for the $\alpha$ at which the method is run. 
  % \jscomment{Joseph, can you generate these plots? This should be the same as Figure 4 in our current arXiv paper, but with the updated numbers}
  }   
  \vspace{-0.6cm}
  \label{fig:pareto_ALL}
  
\end{figure}

\subsection{Results for additional methods} \label{sec:additional_methods_APPENDIX}

\paragraph{$\fuzzy$ with other class mappings.} Recall that in order to instantiate $\fuzzy$ CP, we must choose a mapping $\Pi$ that maps each class $y \in \cY$ to some low-dimensional space in which we can compute distances in a meaningful way. Depending on the setting, some mappings may work better than others. The mapping we proposed in the main paper, $\Pi_{\mathrm{prevalence}}$ is a simple mapping that works well in the long-tailed settings we considered. In this section, we present results for two additional mappings. We describe all three mappings here.
\begin{enumerate}
    \item \emph{Prevalence}: This is the mapping presented before, which maps each class to its popularity in the \texttt{train} set:
    \begin{align*}
    \Pi_{\mathrm{prevalence}}(y) = c n^{\mathrm{train}}_y + \varepsilon_y,
    \end{align*}
    where $\varepsilon_y \sim \mathrm{Unif}([-0.01, 0.01])$ independently for each for $y \in \cY$. We normalize using $\smash{c=1/(\max_{y' \in \cY} n_{y'}^{\mathrm{train}})}$ to ensure that $\Pi_{\mathrm{prevalence}}(y) \in [0,1]$ so that the same bandwidth $\sigma$ has similar effects on different datasets.
    \item \emph{Random.} As a baseline, we try mapping each class to a random value. Specifically,
    \begin{align*}
        \Pi_{\mathrm{random}}(y) = U_y \qquad \text{where } U_y \stackrel{\mathrm{iid}}{\sim} \mathrm{Unif}([0,1]) \text{ for } y\in\cY.
    \end{align*}
    \item \emph{Quantile.} Recall the intuition described in the last paragraph of \Cref{sec:methods}, which says that we want a mapping that groups together classes with similar score distributions. To further develop this intuition, suppose that when computing $\qhat_y$, we assign non-zero weights only to classes with the same $1-\alpha$ score quantile as class $y$. 
    Taking the $1-\alpha$ weighted quantile would then recover the $1-\alpha$ quantile of class $y$ as the number of samples with non-zero weight grows. This is because the mixture of distributions with the same $1-\alpha$ quantile has that same $1-\alpha$ quantile. Motivated by this idea, we map each class to the \emph{linearly interpolated} empirical $1-\alpha$ quantile of its scores in the calibration set. This is similar to  $\qhat_y^{\CW}$ but not the same due to the linear interpolation. We chose to linearly interpolate to avoid the problem of rare classes being mapped to $\infty$, which happens if we apply no interpolation and directly apply $\mathrm{Quantile}_{1-\alpha}$ as it is defined in \Cref{sec:preliminaries}. Linear interpolation of quantiles is described in Definition 7 of \cite{hyndman1996sample} and is the default interpolation method implemented in \texttt{numpy.quantile()} \citep{harris2020array}. Given a finite set $A \subset \R$ and level $\tau \in [0,1]$, let $\mathrm{LinQuantile}_{\tau}(A)$ denote the linearly interpolated $\tau$ quantile of the elements of $A$ or $s_{\max}$ if $A$ is empty, where $s_{\max} = \max_{i \in [n]} s(X_i,Y_i)$ is the maximum observed calibration score. The quantile projection is given by
    \begin{align*}
        \Pi_{\mathrm{quantile}}(y) = \mathrm{LinQuantile}_{1-\alpha}\big(\{s(X_i,Y_i)\}_{i \in \cI_y}\big).
    \end{align*}
\end{enumerate}

% # We use linear interpolation. e.g., the 0.9 quantile of [1, 2] is 1.9
%         # For classes with 0 examples, we arbitrarily choose to map it to the 
%         # largest score observed in the calibration data (implicitly assuming
%         # that those rare classes are "hard" for the model)
%         cal_scores = get_true_class_conformal_score(cal_scores_all, cal_labels)
%         max_score = np.max(cal_scores)
%         proj_arr = np.zeros((num_classes,))
%         for k in range(num_classes):
%             if np.sum(cal_labels == k) > 0:
%                 proj_arr[k] = np.quantile(cal_scores[cal_labels == k], 1-alpha, method='linear') 
%             else:
%                 proj_arr[k] = max_score

Results from applying $\fuzzy$ CP with $\Pi_{\mathrm{random}}$ and $\Pi_{\mathrm{quantile}}$ are shown in Figure \ref{fig:pareto_more_fuzzy_projections}. We observe that $\Pi_{\mathrm{prevalence}}$ achieves a better trade-off than $\Pi_{\mathrm{random}}$, which is expected, but it also performs better than $\Pi_{\mathrm{quantile}}$, which is perhaps less expected. We believe that the reason that the quantile projection does not do well, despite being intuitively appealing, is that it is very sensitive to noise due to the low number of calibration examples per class. This likely causes classes that do not actually have similar score distributions to be mapped to similar values. 

\begin{figure}[h]
  \centering

  % Row 1
  \begin{subfigure}[b]{\textwidth}
    \centering
    \includegraphics[width=\textwidth]{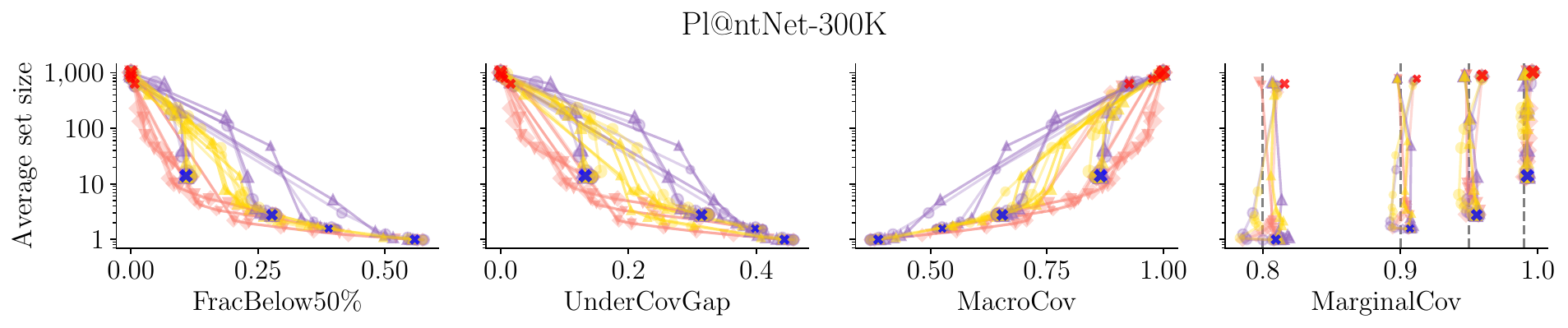}
  \end{subfigure}

  % \vspace{1em}

  % Row 2
  % \begin{subfigure}[b]{\textwidth}
  %   \centering
  %   \includegraphics[width=\textwidth]{ALL_metrics_plantnet-trunc_softmax_extra_projections_pareto_NO_LEGEND_js}
  % \end{subfigure}

  % \vspace{1em}

  % Row 3
  \begin{subfigure}[b]{\textwidth}
    \centering
    \includegraphics[width=\textwidth]{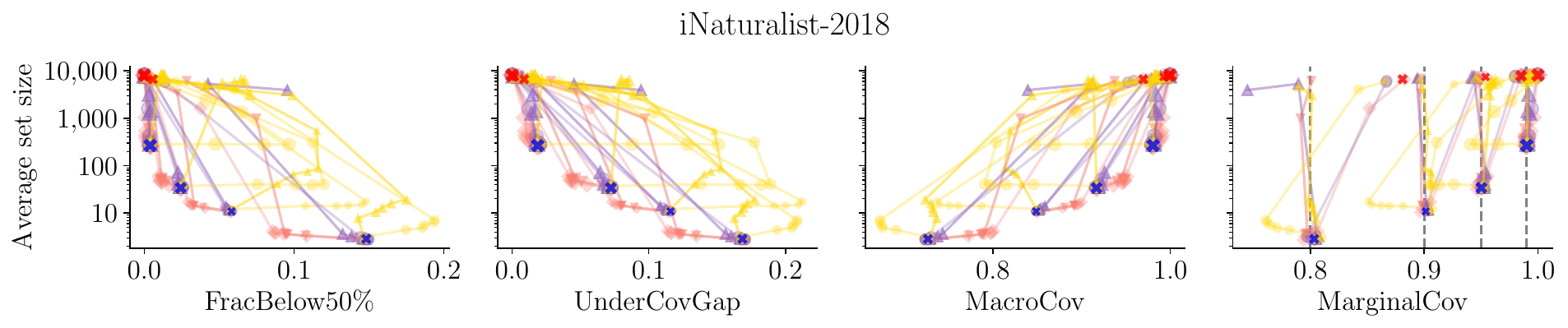}
  \end{subfigure}

  % \vspace{1em}

  % % Row 4
  % \begin{subfigure}[b]{\textwidth}
  %   \centering
  %   \includegraphics[width=\textwidth]{ALL_metrics_inaturalist-trunc_softmax_extra_projections_pareto_NO_LEGEND_js}
  % \end{subfigure}

  \begin{subfigure}[b]{\textwidth}
    \centering
    \includegraphics[width=\textwidth]{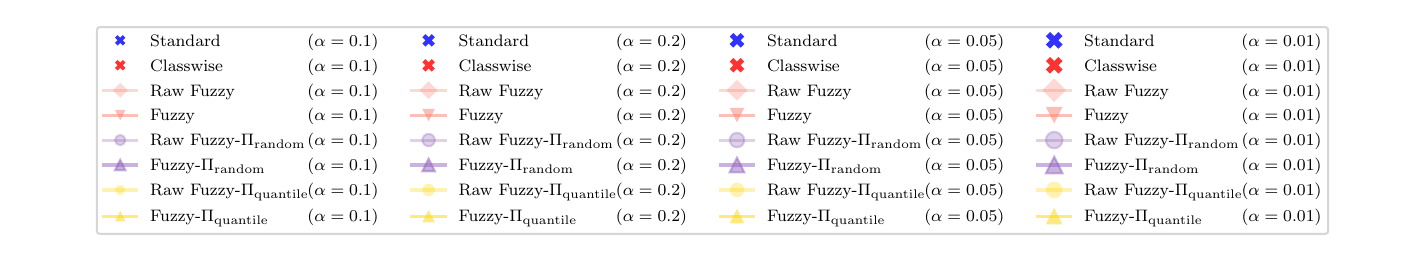}
  \end{subfigure}

  % Overall caption 
  \caption{The performance of $\fuzzy$ CP under different class mappings $\Pi$.}
  % \vspace{-0.6cm}
  \label{fig:pareto_more_fuzzy_projections}
  
\end{figure}

\paragraph{$\fuzzy$ and $\interp$ with $\PAS$.} Recall that we proposed two solution approaches in the main paper. One led to a conformal score function, $\PAS$, and the other led to new procedures, $\interp$ and $\fuzzy$. One may wonder if combining the two approaches provides additional benefit over using just one, so we test this idea. In Figure \ref{fig:pareto_PAS}, we plot the results of running $\fuzzy$ and $\interp$ using $\PAS$ as the conformal score function. The thick blue $\times$'s correspond to $\standard$ with $\PAS$ (previously shown as gold triangles in the main text), and we observe that the interpolation methods, $\fuzzy$ with $\PAS$ and $\interp$ with $\PAS$, do provide some additional benefit at appropriately chosen parameter values by more optimally trading off set size and UnderCovGap, while doing no worse than $\standard$ with $\PAS$ in terms of other metrics.

\begin{figure}[tb]
\centering

  % Row 1
  \begin{subfigure}[b]{\textwidth}
    \centering
    \includegraphics[width=\textwidth]{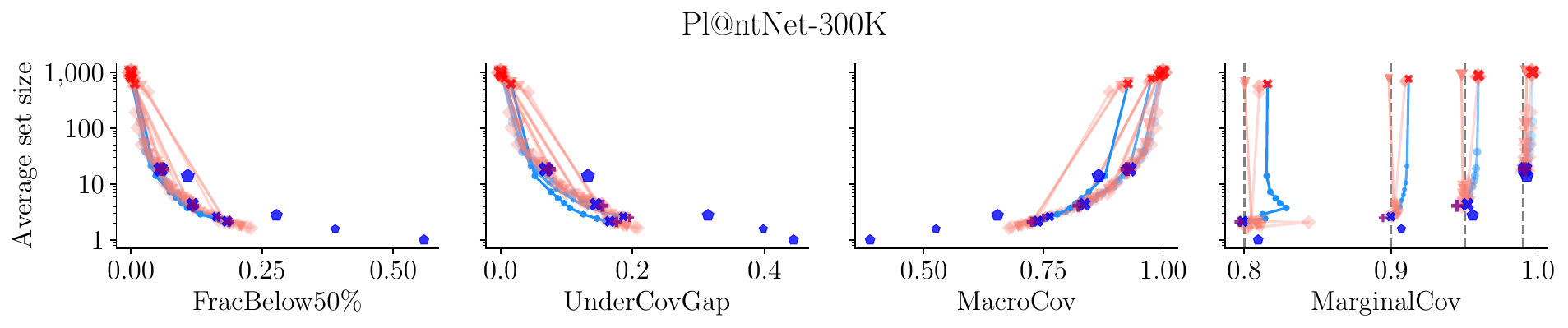}
  \end{subfigure}

 \vspace{0.1cm}

  % Row 3
  \begin{subfigure}[b]{\textwidth}
    \centering
    \includegraphics[width=\textwidth]{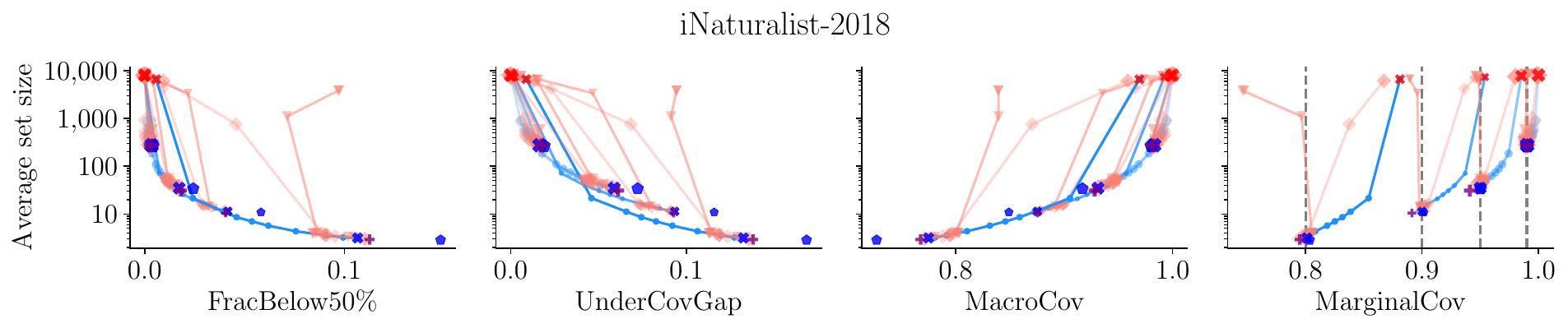}
  \end{subfigure}
  
 \vspace{-0.5cm}
  
  \begin{subfigure}[b]{\textwidth}
    \centering
        \includegraphics[width=\textwidth]{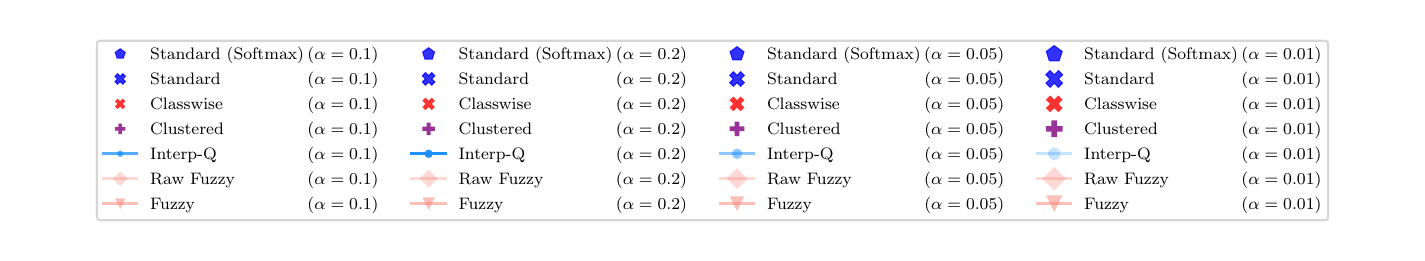}
  \end{subfigure}

  % \vspace{-0.5cm}

  % Overall caption 
  \caption{This plot is similar to Figure \ref{fig:pareto} except here we use $\PAS$ as the conformal score function instead of $\softmax$. Note that the thick blue $\times$'s in this plot are equivalent to the gold triangles in Figure \ref{fig:pareto}. 
  }
  \vspace{-0.4cm}
  \label{fig:pareto_PAS}
  
\end{figure}

\subsection{Results using focal loss}\label{sec:focal_loss_APPENDIX}

To understand how our proposed conformal prediction methods can be combined with existing  strategies for dealing with long-tailed distributions, we run additional experiments where we replace the cross-entropy loss with the focal loss in our ResNet-50 training. The focal loss was proposed by \cite{lin2017focal} to improve model accuracy on rare classes by modifying the cross-entropy loss. We use the PyTorch implementation of focal loss from \url{https://github.com/AdeelH/pytorch-multi-class-focal-loss} with the default parameter values of $\gamma=2$ and $\alpha=1$.
The results are shown in 
\Cref{fig:pareto_focal_loss} and are qualitatively similar to the cross-entropy results in Figure \ref{fig:pareto} in the main paper.

\begin{figure}[tb]
\centering

  % Row 1
  \begin{subfigure}[b]{\textwidth}
    \centering
    \includegraphics[width=\textwidth]{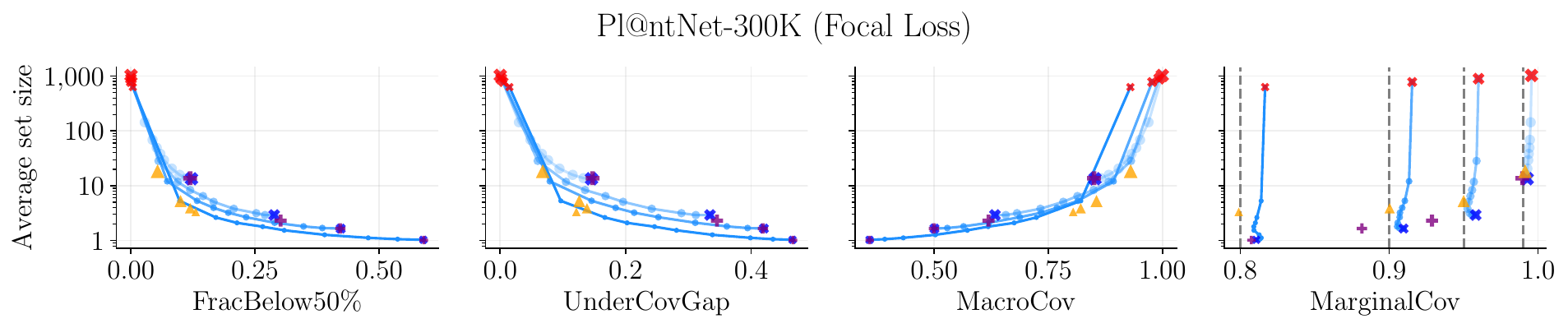}
  \end{subfigure}

 \vspace{0.1cm}

  % Row 3
  \begin{subfigure}[b]{\textwidth}
    \centering
    \includegraphics[width=\textwidth]{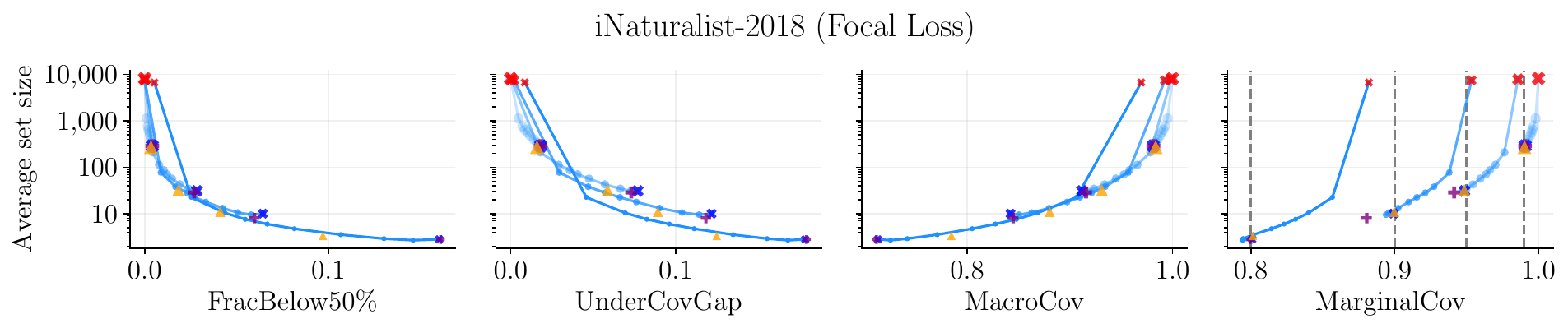}
  \end{subfigure}
  
 \vspace{-0.5cm}
  
  \begin{subfigure}[b]{\textwidth}
    \centering
        \includegraphics[width=\textwidth]{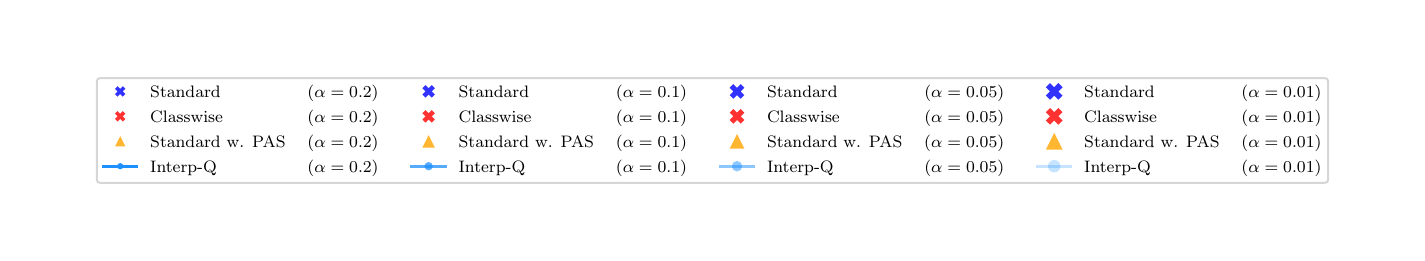}
  \end{subfigure}

  \vspace{-0.5cm}

  \caption{This figure is identical to \Cref{fig:pareto} except the base model is trained using focal loss \citep{lin2017focal}. }   
  % \vspace{-0.6cm}
  \label{fig:pareto_focal_loss}
  
\end{figure}

\subsection{Pl@ntNet case study}
\label{subsec:appendix_plantnet}
In this section, we highlight the importance of conformal prediction for settings like Pl@ntNet, a phone-based app that allows users to identify plants from images \citep{plantnet}. Conformal prediction provides value to such applications by generating sets of possible labels instead of point predictions. These prediction sets should balance several desiderata: 
\begin{enumerate}
    \item \emph{Marginal coverage.} For the general public, prediction sets improve the user experience relative to point predictions by offering multiple potential identifications, increasing the likelihood of accurate identification even with ambiguous or low-quality images. In order for users to have a good chance of making the correct identification most of the time, we want the marginal coverage to be high.
    \item \emph{Class-conditional coverage (especially in the tail).} Ecologists would like to focus more on identifying (near) endangered or less commonly observed species for the purpose of scientific data collection. This calls for improving coverage of classes in the tail of the label distribution.
    \item \emph{Set size.} For both the general public and ecologists, maintaining reasonable set sizes is crucial, as sifting through large sets is impractical.
\end{enumerate}
An additional challenge in plant identification is that visually similar species often have an imbalanced number of labeled images, with one species significantly more represented.
Standard conformal methods can suffer from occlusion, where the dominant species overshadows the rarer one at prediction time, so that the classifier always assigns low probability to the rare class. If the rare species is never included in the prediction sets, this can lead to a vicious cycle of increasing imbalance, as users 
simply confirm the classifier's suggestions.

% On the other hand, for ecologists, conformal prediction allows them to focus on rare or less familiar species, thereby improving the overall reliability of ecological data. And this calls for improvement on the tails of the label distribution. In both cases, maintaining reasonable set sizes is crucial, as users may find large sets impractical.

For this case study, we focus on comparing our proposed method $\standard$ with $\PAS$ against $\standard$ and $\classwise$ when run at the $\alpha=0.1$ level. All methods have a marginal coverage guarantee, so we focus on comparing class-conditional coverage and set size between the methods. \Cref{fig:class_distributions_iucn} shows the class-conditional coverage and average set size for the three methods. Species that are considered endangered by the International Union for Conservation of Nature (IUCN) are highlighted in red. We observe that $\classwise$ results in huge prediction sets. On the other hand, $\standard$ with $\PAS$ enhances the coverage of classes that have low coverage under $\standard$ with $\softmax$ without producing huge sets. We provide visual examples of some endangered species from the \tinyplantnet dataset in \Cref{fig:conformal_prediction_endangered}.

% We have provided some metrics for the endangered species (according to the IUCN\footnote{https://iucn.org/}), notably for standard, classwise and standard with PAS conformal prediction in \Cref{fig:class_distributions_iucn}. Visual examples with endangered species from \emph{Cereus} and \emph{Peperomia} genra are givn in \Cref{fig:conformal_prediction_Cereus,fig:conformal_prediction_Peperomia}. Note how the number of collected samples can differ, even in the same genus.

\begin{figure}[h]
    \centering
    \includegraphics[width=0.97\linewidth]{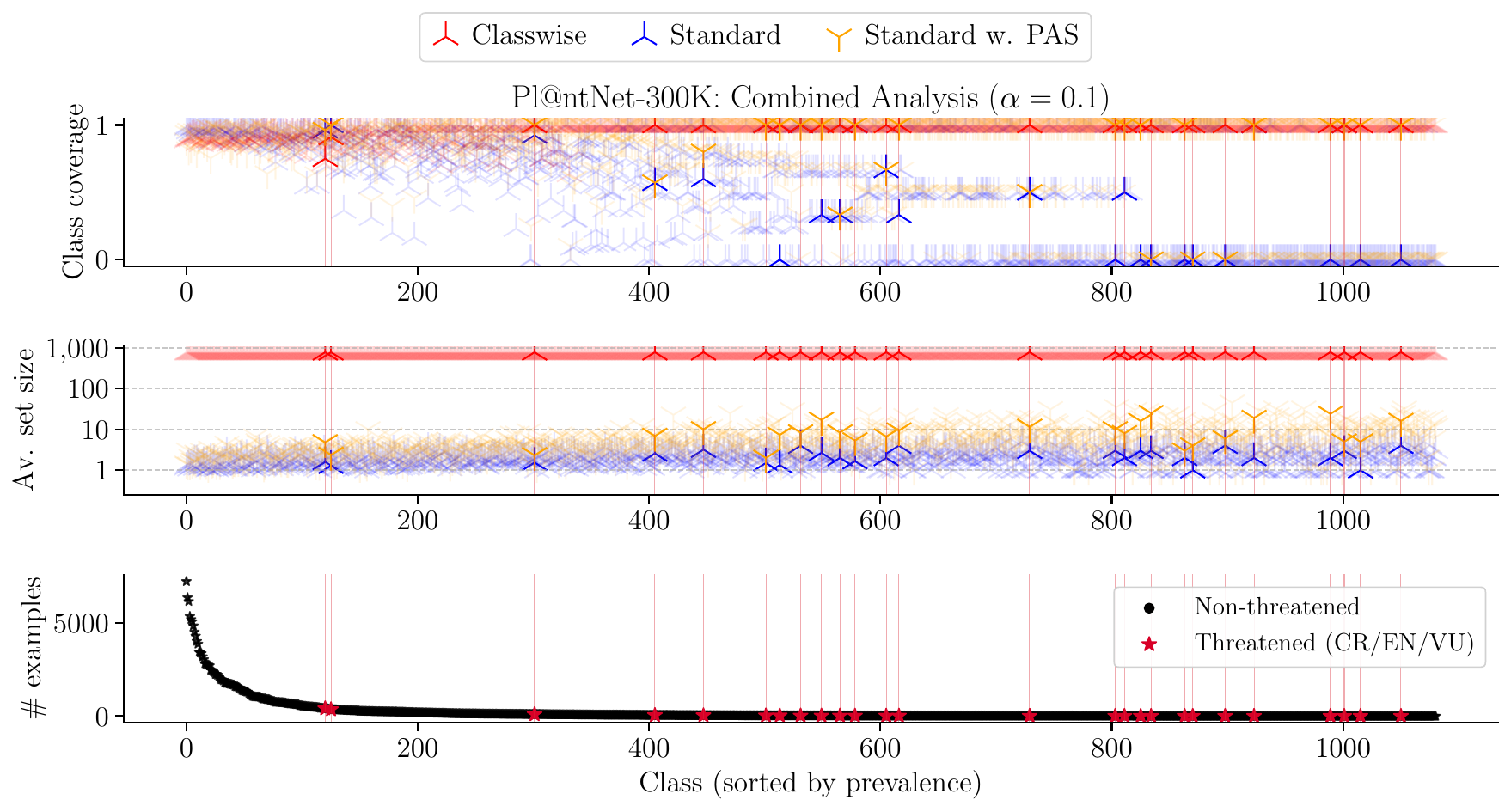}
    %% Caption used to generate figure for tweet
    % \caption{The class distribution of \tinyplantnet. Species considered threatened by the International Union for Conservation of Nature (IUCN) are marked in red. Note that these threatened species fall predominantly in the tail.} 
    \caption{A detailed look at results by class on \tinyplantnet dataset for three methods: $\standard$ (with $\softmax$), $\classwise$ (with $\softmax$), and $\standard$ with $\PAS$. All methods are run at the $\alpha=0.1$ level. Species are ordered according to the prevalence (computed on \texttt{train}), and the ones considered ``threatened'' according to the IUCN are highlighted in red.}
    % \caption{Conformal prediction on \tinyplantnet dataset: conformal prediction for marginal (Softmax and Prevalence Adjusted Softmax) and conditional (Softmax); top: coverage per class, middle: size of predictions sets, bottom: prevalence by class on train.
    % The conditional version results in huge prediction sets. Additionally, the Prevalence Adjusted Softmax enhances the coverage of classes with initially low coverage in the marginal version. We highlight specifically the threatened species (according to IUCN). All approaches target (and achieve) a marginal coverage of 90\%.}
    \label{fig:class_distributions_iucn}
\end{figure}

% Light background colour for the metadata box
\definecolor{speciesbg}{RGB}{238,242,247}

% Macro:  \speciesbox{Species name}{#examples}
\newcommand{\speciesbox}[2]{%
  \colorbox{speciesbg}{%
    \parbox[t]{0.96\linewidth}{\footnotesize
      \textbf{Species}: \textit{#1}\\[2pt]
      \textbf{\# of examples}: #2}}}

\begin{figure}[h]
\centering
\captionsetup[subfigure]{labelformat=empty}
\footnotesize

% -------- 1 --------------------------------------------------
\begin{subfigure}[t]{0.32\textwidth}
  \centering
  \includegraphics[width=\linewidth]{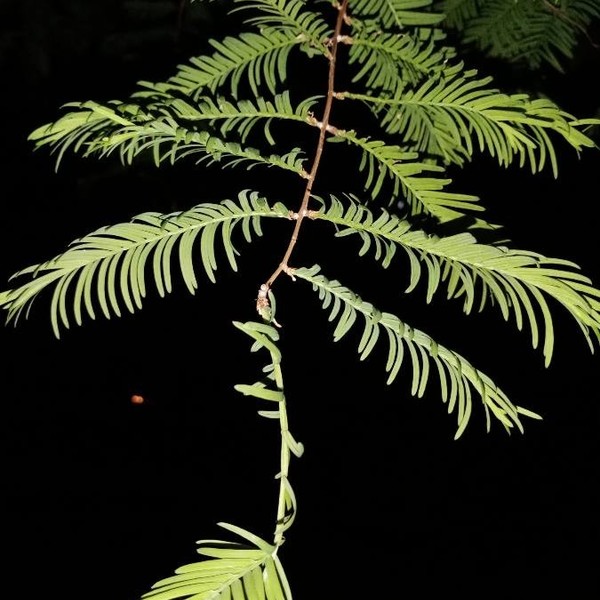}

  \vspace{2pt}
  \speciesbox{Metasequoia glyptostroboides Hu \& W.C.Cheng}{410}

  \vspace{4pt}
  \begin{tabular}{@{}lcr@{}}
    \toprule
    \textbf{Method} & \textbf{Coverage} & \textbf{Size} \\ \midrule
    Standard      & 0.00 & 1.3 \\
    Classwise     & 1.00 & 781.3 \\
    Std w.\ PAS   & 1.00 & 7.3 \\ \bottomrule
  \end{tabular}
\end{subfigure}\hfill
%
% -------- 2 --------------------------------------------------
\begin{subfigure}[t]{0.32\textwidth}
  \centering
  \includegraphics[width=\linewidth]{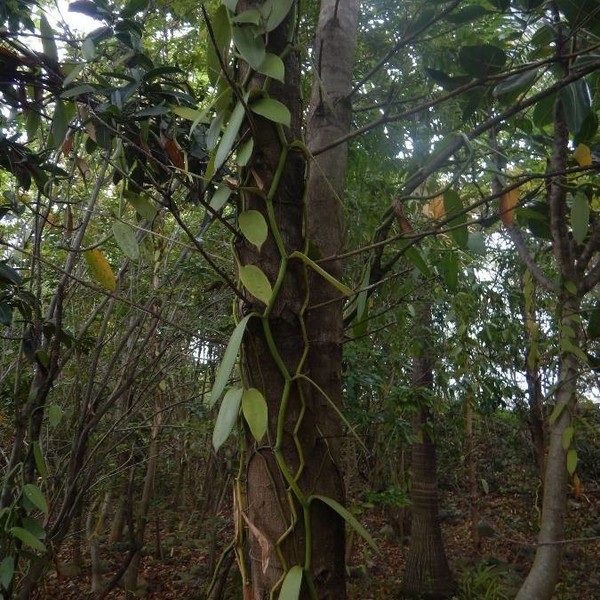}

  \vspace{2pt}
  \speciesbox{Vanilla planifolia Jacks. ex Andrews}{35}

  \vspace{4pt}
  \begin{tabular}{@{}lcr@{}}
    \toprule
    \textbf{Method} & \textbf{Coverage} & \textbf{Size} \\ \midrule
    Standard      & 0.60 & 3.2 \\
    Classwise     & 1.00 & 782.2 \\
    Std w.\ PAS   & 0.80 & 10.0 \\ \bottomrule
  \end{tabular}
\end{subfigure}\hfill
%
% -------- 3 --------------------------------------------------
\begin{subfigure}[t]{0.32\textwidth}
  \centering
  \includegraphics[width=\linewidth]{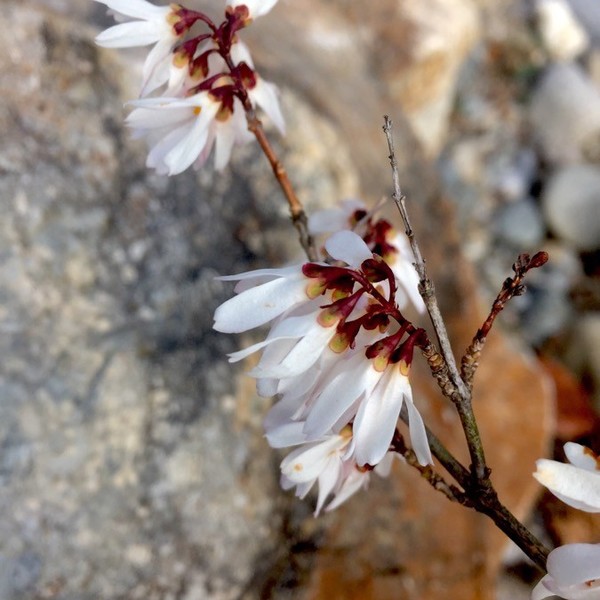}

  \vspace{2pt}
  \speciesbox{Abeliophyllum distichum Nakai}{4}

  \vspace{4pt}
  \begin{tabular}{@{}lcr@{}}
    \toprule
    \textbf{Method} & \textbf{Coverage} & \textbf{Size} \\ \midrule
    Standard      & 0.00 & 4.0 \\
    Classwise     & 1.00 & 784.0 \\
    Std w.\ PAS   & 0.00 & 6.0 \\ \bottomrule
  \end{tabular}
\end{subfigure}
\caption{Examples of three species in \tinyplantnet considered as ``endangered'' by the IUCN. 
Each table reports the empirical class-conditional coverage and the average size of the prediction sets when the given species is the true label for two baseline methods ($\standard$ with $\softmax$ and $\classwise$ with $\softmax$), as well as one of our proposed methods ($\standard$ with $\PAS$).}
% \caption{Comparison of three conformal-prediction variants (with $\alpha=0.1$) on endangered species (EN) from \tinyplantnet. Each table reports the empirical class-conditional coverage and the average size of the prediction sets when the given species is the true label.}
\label{fig:conformal_prediction_endangered}
\end{figure}

\subsection{Understanding  movements in $\qhat_y$} \label{sec:qhats_APPENDIX}

Figure \ref{fig:qhats} visualizes the $\qhat_y$ vectors that result from each of our methods. 
Recall that $\standard$ has a single score threshold $\qhat$, which we plot as a horizontal line ($\qhat_y = \qhat$ for all classes $y$).
As a reminder, a smaller value of $\qhat_y$ means that the class is less likely to be included in the prediction set, whereas classes with $\qhat_y=1$ (or $\infty$) are always included in the prediction set.
For $\standard$ with $\PAS$, we plot the \emph{effective} $\qhat_y$ by observing that the class-uniform $\qhat$ threshold in terms of the $\PAS$ score implies classwise $\qhat_y$ thresholds in terms of the $\softmax$ score. Specifically, observe that
\begin{align*}
    s_{\PAS}(x,y) &\leq \qhat \\
    \Longleftrightarrow - \frac{\hat{p}(y|x)}{\hat{p}(y)} &\leq \qhat \qquad \text{definition of }s_{\PAS} \\
    \Longleftrightarrow  1 - \hat{p}(y|x) &\leq 1 + \qhat\hat{p}(y) \\
    \Longleftrightarrow s_{\softmax}(x,y) &\leq 1 + \qhat\hat{p}(y)
\end{align*}
where $s_{\softmax}$ denotes the $\softmax$ score function described in \Cref{sec:preliminaries}. Thus, for a class $y \in \cY$, we refer to $1 + \qhat\hat{p}(y)$ as the effective $\qhat_y$ of $\standard$ with $\PAS$.

All of our methods are intended to ``interpolate'' between $\standard$ (with $\softmax$) and $\classwise$ (with $\softmax$).
$\interp$ interpolates in a very literal sense by linearly interpolating $\qhat$ and $\qhat_{y}^{\CW}$. 
The other three methods appear to interpolate in an alternative geometry and allow the $\qhat_y$ for each class to be adjusted in a different way.
These plots also reveal why $\classwise$ yields such large sets in this setting: there are many classes for which $\qhat_{y}^{\CW} = 1$ (or $\infty$), and all of these classes are always included in the prediction set.

\begin{figure}
    \centering
    \includegraphics[width=\linewidth]{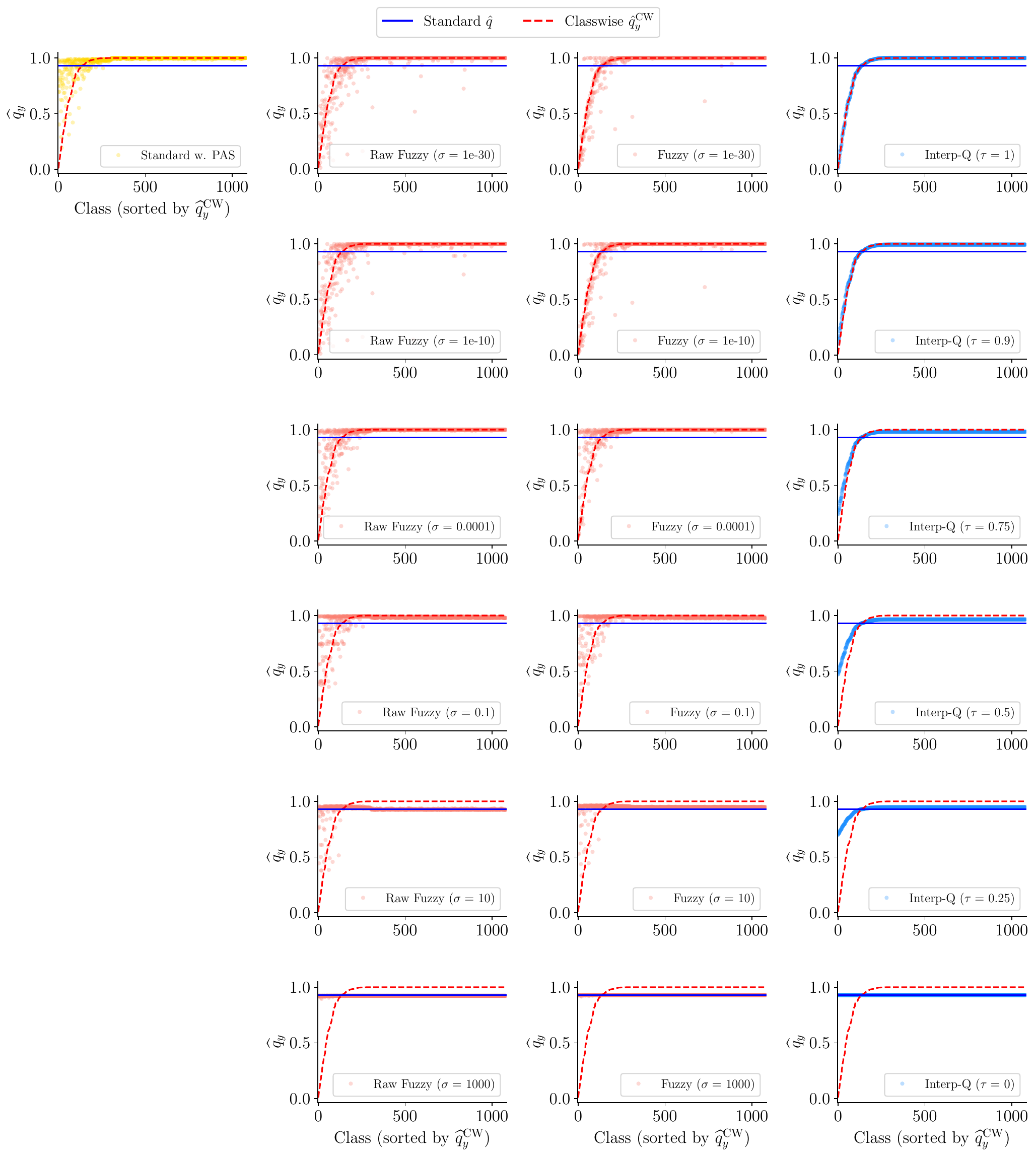}
    \caption{Score thresholds $\qhat_y$ of our proposed methods ($\standard$ with $\PAS$, Raw $\fuzzy$, $\fuzzy$, and $\interp$) on \tinyplantnet for $\alpha=0.1$. For visualization purposes, infinite values of $\qhat_y$ are replaced with one, the maximum possible $\softmax$ score value. Furthermore, for ease of comparison, we sort the classes in ascending value of the $\classwise$ thresholds $\qhat^{\CW}_y$ (plotted as a dashed red line).}
    \label{fig:qhats}
\end{figure}

\subsection{Additional decision accuracy plots} \label{sec:decision_accuracy_APPENDIX}

In the main text we present decision accuracy plots for only one of our methods, $\standard$ with $\PAS$. In Figure \ref{fig:decision_accuracies_interpQ}, we present results for our other method, $\interp.$

\begin{figure}[t]
    \centering
    \includegraphics[width=0.992\textwidth]{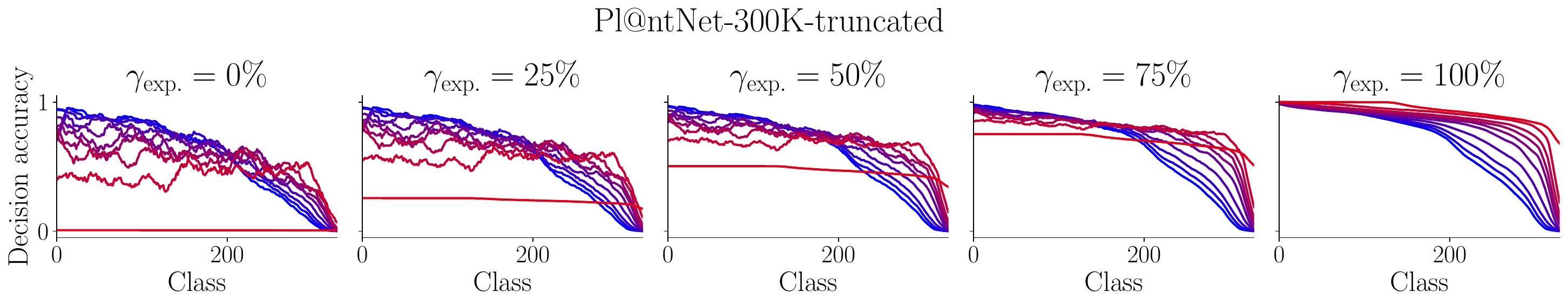}
    
    \vspace{0.135cm}
    
    \includegraphics[width=0.992\textwidth]{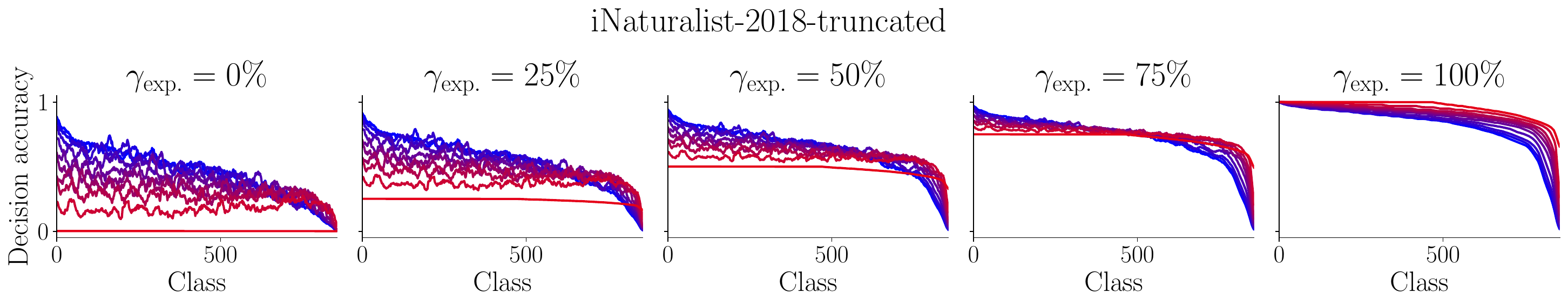}  
    
    % \vspace{-0.2cm}
    
    \includegraphics[width=0.6\textwidth]{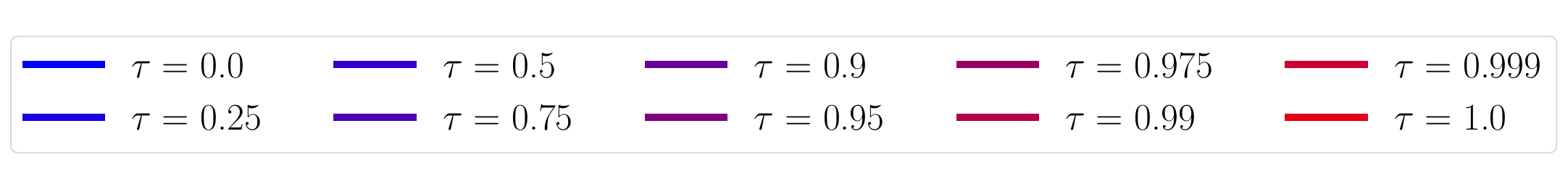}
        
    \vspace{-0.2cm}

    \caption{
    Class-conditional decision accuracies for a range of decision makers when presented with sets from $\interp$ run with different values of $\tau$. Recall that $\tau=0$ recovers $\standard$ and $\tau=1$ recovers $\classwise$. 
    Classes are ordered by decreasing decision accuracy of $H_{\mathrm{expert}}$ under each method.}
    \vspace{-0.5cm}
    \label{fig:decision_accuracies_interpQ}
\end{figure}

\end{document}